\long\def\acks#1{\vskip 0.3in\noindent{\large\bf Acknowledgments}\vskip 0.2in
\noindent #1}
\newcommand{\trace}{\ensuremath{tr}}
\long\def\comment#1{}
\renewcommand\vec[1]{\ensuremath\boldsymbol{#1}}
\renewcommand{\trace}{\ensuremath{\operatorname{tr}}}
\newcommand{\Ecal}{\ensuremath{\mathcal{E}}}
\newcommand{\Ocal}{\ensuremath{\mathcal{O}}}
\newcommand{\Hcal}{\ensuremath{\mathcal{H}}}
\newcommand{\Hbold}{\ensuremath{\boldsymbol{H}}}
\newenvironment{customthm}[1]
  {\innercustomthm}
  {\endinnercustomthm}
\newenvironment{customlem}[1]
  {\innercustomlem}
  {\endinnercustomlem}
\newenvironment{customprop}[1]
  {\innercustomprop}
  {\endinnercustomprop}
\theoremstyle{plain}
\newtheorem{thm}{Theorem}
\numberwithin{thm}{section}
\patchcmd{\algorithmic}{\addtolength{\ALC@tlm}{\leftmargin} }{\addtolength{\ALC@tlm}{\leftmargin}}{}{}
\providecommand{\keywords}[1]{\textbf{\textit{Index terms---}} #1}
\title{On Efficient Multilevel Clustering via Wasserstein Distances}
\newcommand\email[2][]%
   {\newaffiltrue\let\AB@blk@and\AB@pand
      \if\relax#1\relax\def\AB@note{\AB@thenote}\else\def\AB@note{\relax}%
        \setcounter{Maxaffil}{0}\fi
      \begingroup
        \let\protect\@unexpandable@protect
        \def\thanks{\protect\thanks}\def\footnote{\protect\footnote}%
        \@temptokena=\expandafter{\AB@authors}%
        {\def\\{\protect\\\protect\Affilfont}\xdef\AB@temp{#2}}%
         \xdef\AB@authors{\the\@temptokena\AB@las\AB@au@str
         \protect\\[\affilsep]\protect\Affilfont\AB@temp}%
         \gdef\AB@las{}\gdef\AB@au@str{}%
        {\def\\{, \ignorespaces}\xdef\AB@temp{#2}}%
        \@temptokena=\expandafter{\AB@affillist}%
        \xdef\AB@affillist{\the\@temptokena \AB@affilsep
          \AB@affilnote{}\protect\Affilfont\AB@temp}%
      \endgroup
       \let\AB@affilsep\AB@affilsepx
}
\author[1]{Viet Huynh}
\author[2]{Nhat Ho}
\author[1]{Nhan Dam}
\author[3]{XuanLong Nguyen}
\author[4]{Mikhail Yurochkin}
\author[5]{Hung Bui}
\author[1,5]{Dinh Phung}
\affil[1]{Faculty of Information Technology, Monash University}
\email{\url{{viet.huynh, nhan.dam, dinh.phung}@monash.edu}}
\affil[2]{Department of EECS,University of California, Berkeley}
\email{\url{minhnhat@berkeley.edu}}
\affil[3]{Department of Statistics, University of Michigan}
\email{\url{xuanlong@umich.edu}}
\affil[4]{IBM Research, USA}
\email{\url{mikhail.yurochkin@ibm.com}}
\affil[5]{VinAI Research, Hanoi, Viet Nam}
\email{\url{v.hungbh1@vinai.io}}
\date{}
\begin{document}

\maketitle

\begin{abstract}
We propose a novel approach to the problem of multilevel clustering, which aims to simultaneously partition data in each group and discover grouping patterns among groups in a potentially large hierarchically structured corpus of data. Our method involves a joint optimization formulation over several spaces of discrete probability measures, which are endowed with Wasserstein distance metrics. We propose several variants of this problem, which admit fast optimization algorithms, by exploiting the connection to the problem of finding Wasserstein barycenters.  Consistency properties are established for the estimates of both local and global clusters. Finally, experimental results with both synthetic and real data are presented to demonstrate the flexibility and scalability of the proposed approach.
\end{abstract}
\keywords{
  Optimal transport, multi-level clustering, Wasserstein barycenter}

\section{Introduction} \label{Section:introduction}
In numerous applications in engineering and sciences, data are often organized in a multilevel structure. For instance, a typical structural view of text data in machine learning is to have words grouped into documents and documents grouped into corpora. A prominent strand of modeling and algorithmic work in the past couple of decades has been to discover latent multilevel structures from these hierarchically structured data. For specific clustering tasks, one may be interested in simultaneously partitioning the data in each group (to obtain local clusters) and partitioning a collection of data groups (to obtain global clusters). 
A concrete example is the problem of clustering images (i.e. global clusters) where each image  contains multiple annotated regions (i.e. local clusters) \citep{Oliva-2001}. While hierarchical clustering techniques may be employed to find a tree-structured clustering given a collection of data points, they are not applicable to discovering the \textit{nested} structure of multilevel data. Bayesian hierarchical models provide a powerful approach, exemplified by influential work such as \citep{Blei-etal-03,Pritchard-etal-00,Teh-etal-06}. More specific to the simultaneous and multilevel clustering problem, we mention the paper of \cite{Rodriguez-etal-08}. In this interesting work, a Bayesian nonparametric model called the nested Dirichlet process (NDP) model was introduced that enables the inference of clustering of a collection of probability distributions from which different groups of data are drawn. With suitable extensions, this modeling framework has been further developed for simultaneous multilevel clustering, see, for instance, \citep{Wulsin-2016,Vu-2014,Viet-2016}. 
The focus of this paper is on the multilevel clustering problem motivated in the aforementioned modeling papers, \textit{but we shall take a pure optimization approach}. This paper includes substantially new results compared to our preliminary conference version~\citep{Ho_2017_Multilevel}. We aim to formulate optimization problems that enable the discovery of multilevel clustering structures hidden in grouped data. Our technical approach is inspired by the role of optimal transport distances in hierarchical modeling and clustering problems. The optimal transport distances, also known as Wasserstein distances \citep{Villani-03}, have been shown to be the natural distance metric for the convergence theory of latent mixing measures arising in both mixture models \citep{Nguyen-13} and hierarchical models \citep{Nguyen-2016}. They are also intimately connected to the problem of clustering --- this relationship goes back at least to the work of \citep{Pollard-1982}, where it is pointed out that the well-known K-means clustering algorithm can be directly linked to the quantization problem --- the problem of determining an optimal finite discrete probability measure that minimizes its second-order Wasserstein distance from the empirical distribution of given data \citep{Graf-2000}.

If one is to perform simultaneous K-means clustering on hierarchically grouped data, both at the global level (among groups), and local level (within each group), then this can be achieved by a joint optimization problem defined with suitable notions of Wasserstein distances inserted into the objective function. In particular, multilevel clustering requires to optimize in the space of probability measures defined in different levels of abstraction, including the space of measures of measures on the space of grouped data. Our goal, therefore, is to formulate this optimization precisely, to develop algorithms for solving the optimization problem efficiently, and to make sense of the obtained solutions in terms of statistical consistency. 

The algorithms that we propose address directly a multilevel clustering problem formulated from a pure optimization viewpoint, but they may also be taken as a fast approximation to the inference of latent mixing measures that arises in the nested Dirichlet process in \citep{Rodriguez-etal-08}. From a statistical viewpoint, we shall establish a consistency theory for our multilevel clustering problem in the manner achieved for K-means clustering \citep{Pollard-1982}. From a computational viewpoint, quite interestingly, we will be able to explicate and exploit the connection between our optimization formulation and the problem of finding the Wasserstein barycenter \citep{Carlier-2011}, a computational problem that has also attracted much recent interest, e.g., \citep{Cuturi-2014, Lin_2020}.

In summary, the main contributions offered in this work include: \textit{(i)} several new optimization formulations of the multilevel clustering problem using Wasserstein distances defined on different levels of the hierarchical data structure; \textit{(ii)} fast algorithms by exploiting the connection of our formulation to the Wasserstein barycenter problem; \textit{(iii)} consistency theorems established for the proposed estimation under a very mild condition of data distributions;  \textit{(iv)} several flexible alternatives by introducing constraints that encourage the borrowing of strength among local and global clusters; \textit{(v)} finally, demonstration of efficiency and flexibility of our approach in a number of simulated and real datasets.  

The paper is organized as follows. Section \ref{Section:prelim} provides the preliminary background on Wasserstein distances, Wasserstein barycenter, and the connection between  K-means clustering and the quantization problem. Section \ref{Section:multilevel_Wasserstein} presents several optimization formulations of the multilevel clustering problem and the algorithms for solving them. Sections \ref{Section:multilevel_Wasserstein_context} and \ref{Section:robust_mutilevel_Wasserstein_median} present the alternatives of our proposed formulations under two scenarios: multilevel structure data with context and first order Wasserstein distance replacing second order Wasserstein distance for robust clustering. Section \ref{Section:consistency_multilevel_Kmeans} establishes the consistency results for the estimators introduced in previous sections. Section \ref{Section:data_analysis} presents
empirical studies with both synthetic and real datasets. Finally, we conclude the paper in Section \ref{Section:discussion}. Additional technical details, including all proofs, are given in the appendices.

\section{Background} \label{Section:prelim}

For any given subset $\Theta \subset \mathbb{R}^{d}$, let $
\mathcal{P}(\Theta)$ denote the space of Borel probability measures on $\Theta$.  The Wasserstein space of order $r \in [1,\infty)$ of probability measures on $\Theta$ is defined as $\mathcal{P}_{r}(\Theta)=\biggr\{G \in \mathcal{P}(\Theta): \ \int \limits {\|x\|^{r}}\mathrm{d}G(x)<\infty \biggr\}$,
where $\|.\|$ denotes the Euclidean metric in $\mathbb{R}^{d}$. Additionally, for any $k \geq 
1$ the probability simplex is denoted by $\Delta_{k}=\left\{u \in \mathbb{R}^{k}: \ u_{i} \geq 0, \ 
\sum \limits_{i=1}^{k}{u_{i}}=1 \right\}$. Finally, let $\mathcal{O}_{k}(\Theta)$ (resp., $\mathcal{E}_{k}(\Theta)$) 
be the set of probability measures with at most (resp., exactly) $k$ support points in $\Theta$.

\paragraph{Wasserstein distances:}
For any elements $G$ and $G'$ in $\mathcal{P}_{r}(\Theta)$ where $r \geq 1$, the Wasserstein distance of order $r$ between $G$ and $G'$ is defined as (cf. \citep{Villani-03}):
\begin{eqnarray}
W_{r}(G,G')=\biggr(\mathop {\inf }\limits_{\pi \in \Pi(G,G')}{\int \limits_{\Theta^{2}}{\|x-y\|^{r}}\mathrm{d}\pi(x,y)}\biggr)^{1/r}, \nonumber
\end{eqnarray}
where $\Pi(G,G')$ is the set of all probability measures on $\Theta \times \Theta$ that have marginals $G$ and $G'$.  In words, $W_r(G,G')$ is the optimal cost of moving mass from $G$ to $G'$, where the cost of moving unit mass is proportional to $r$-power of Euclidean distance in $\Theta$. 
 
By the recursion of concepts, we can speak of measures of measures, and define a suitable distance metric on this abstract space:  the space of Borel measures on $\mathcal{P}_{r}(\Theta)$, to be denoted by $\mathcal{P}_{r}(\mathcal{P}_{r}(\Theta))$. This is also a Polish space (i.e. complete and separable metric space) as $
\mathcal{P}_{r}(\Theta)$ is a Polish space. It will be endowed with a Wasserstein metric of  order $r$ that is induced by a metric $W_{r}$ on $\mathcal{P}_{r}(\Theta)$ as follows (cf. Section 3 of \citep{Nguyen-2016}): for any $\mathcal{D},\mathcal{D'} \in \mathcal{P}_r(\mathcal{P}_r(\Theta))$
\begin{eqnarray}
W_{r}(\mathcal{D},\mathcal{D}'):=\biggr(\mathop {\inf }\limits_{\pi \in \Pi(D,D')}{\int \limits_{\mathcal{P}_{r}(\Theta)^{2}}{W_{r}^{r}(G,G')}\mathrm{d}\pi(G,G')}\biggr)^{1/r}, \nonumber
\end{eqnarray}where $\Pi(\mathcal{D},\mathcal{D}')$ is the set of all probability measures on $
\mathcal{P}_{r}(\Theta) \times \mathcal{P}_{r}(\Theta)$ that have marginals $\mathcal{D}
$ and $\mathcal{D}'$. In words, $W_r(\mathcal{D},\mathcal{D'})$ corresponds to the optimal cost of moving mass from $\mathcal{D}$ to $\mathcal{D'}$, where the cost of moving unit mass in its space of support $\mathcal{P}_r(\Theta)$ 
is proportional to the $r$-power of the $W_r$ distance in $\mathcal{P}_r(\Theta)$.
Note a slight abuse of notation --- $W_r$ is used for both
$\mathcal{P}_r(\Theta)$ and $\mathcal{P}_r(\mathcal{P}_r(\Theta))$, but it should be clear which one is being used from context.
\paragraph{Entropic version of Wasserstein distances:} The Wasserstein distance has been shown to have expensive computational complexity~\citep{Pele-2009-Fast} when the probability measures are discrete. To account for the computational complexity,~\cite{Cuturi-2013} proposed the entropic version of Wasserstein distances, which is given by:
\begin{align}
		\hat W_{r}^{r}(G, G') : = \inf \limits_{\pi \in \Pi(G,G')}{\int \limits_{\Theta^{2}}{\|x-y\|^{r}}\mathrm{d}\pi(x,y)} + \tau \text{H}(\pi|G \otimes G'), \label{eq:entropic_Wasserstein}
\end{align}
where $\tau > 0$ denotes the regularized parameter, $G \otimes G'$ denotes the product measure between $G$ and $G'$, and $\text{H}(\pi|G \otimes G')$ denotes the relative entropy between $\pi$ and $G \otimes G'$:
\begin{align*}
    \text{H}(\pi|G \otimes G') : = \int \log \Big(\frac{d\pi}{d G \otimes G'}(x,y)\Big) d\pi(x,y),
\end{align*}
where $\nicefrac{d\pi}{d G \otimes G'}(x,y)$ denotes the density of $\pi$ with respect to $G \otimes G'$.

When $G$ and $G'$ are discrete measures with at most $k$ supports, we can compute the entropic version of Wasserstein distances via the Sinkhorn algorithm. Furthermore, by choosing the regularized parameter $\tau$ at the order of $\varepsilon/ \log k$ where $\varepsilon$ stands for the desired error, the computational complexity of the Sinkhorn algorithm for approximating the Wasserstein distance is of the order $k^2/ \varepsilon^2$~\citep{Dvurechensky-2018-Computational}. The recent works of~\cite{Lin-2019-Efficient, Lin-2019-Acceleration} proposed the acceleration of the Sinkhorn algorithm with an improved complexity $k^{7/3}/ \varepsilon^{4/3}$ in terms of $\varepsilon$. Due to the favorable performance of the Sinkhorn algorithm, throughout the paper we utilize that algorithm to compute the entropic regularized version of Wasserstein distance between the probability measures. 
\paragraph{Wasserstein barycenter:}
Next, we present a brief overview of the Wasserstein barycenter problem, first studied in \citep{Carlier-2011} and subsequentially many others (e.g. \citep{Benamou-15, Solomon-15, Alvarez-16}). 
Given probability measures 
$P_{1}, P_{2}, \ldots, P_{N} \in \mathcal{P}_{2}(\Theta)$ for $N \geq 1$, their 
Wasserstein barycenter $\overline{P}_{N,\lambda}$ is such that
\vspace{-6pt}
\begin{eqnarray}
\overline{P}_{N,\lambda}=\mathop {\arg \min}\limits_{P \in \mathcal{P}_{2}(\Theta)}{\sum \limits_{i=1}^{N}{\lambda_{i}W_{2}^{2}(P,P_{i})}}, \label{eqn:Wasserstein_barycenter}
\end{eqnarray} 
where $\lambda \in \Delta_{N}$ denotes weights associated with $P_{1},\ldots,P_{N}$. When $P_{1},\ldots, P_{N}$ are discrete measures with finite number of atoms and the  weights $\lambda$ are uniform, it was shown in \citep{Anderes-2015} that the problem of finding Wasserstein barycenter $\overline{P}
_{N,\lambda}$ over the space $\mathcal{P}_{2}(\Theta)$ in equation~\eqref{eqn:Wasserstein_barycenter} is reduced to the search over only a much simpler space  $\mathcal{O}_{l}(\Theta)$ 
where $l=\sum \limits_{i=1}
^{N}{s_{i}-N+1}$ and $s_{i}$ is the number of components of $P_{i}$ for all $1 \leq i \leq 
N$.  Efficient algorithms for finding local solutions of the Wasserstein barycenter problem over $\mathcal{O}_{k}(\Theta)$ for some $k \geq 1$ have been studied recently in \citep{Cuturi-2014}. These algorithms will prove to be a useful building block for our method as we shall describe in the sequel.

\paragraph{K-means as quantization problem:}
The well-known $K$-means clustering algorithm can be viewed as solving an optimization problem that arises in the problem of quantization, a simple yet very useful connection \citep{Pollard-1982, Graf-2000} as follows. Given $n$ unlabeled samples $Y_{1},\ldots,Y_{n} \in \Theta$, we assume that these data are associated with at most $k$ clusters where $k \geq 1$ is some given number. The $K$-means problem finds the set $S$  containing at most $k$ elements $\theta_{1},\ldots, \theta_{k} \in \Theta$ that satisfies the following objective
\begin{eqnarray}
\mathop {\inf }\limits_{S : |S| \leq k}{\dfrac{1}{n}\sum \limits_{i=1}^{n}{d^{2}(Y_{i},S)}}, \label{eqn:original_Kmeans}
\end{eqnarray} where $d^{2}(Y_{i},S)=\min_{\theta\in S}\|Y_i-\theta\|^{2}$ is the square Euclidean distance from sample $Y_i$ to set $S$. Let $P_{n}=\dfrac{1}{n}\sum \limits_{i=1}^{n}{\delta_{Y_{i}}}$ be the empirical measure of data $Y_{1},\ldots,Y_{n}$ where $\delta_{Y}$ denotes the Dirac measure centred on Y. Then, problem \eqref{eqn:original_Kmeans} is equivalent to finding a discrete probability measure $G$ which has finite number of support points and solves:
\begin{eqnarray}
\mathop {\inf }\limits_{G \in \mathcal{O}_{k}(\Theta)}{W_{2}^{2}(G,P_{n})}. \label{eqn:Wasserstein_K_means}
\end{eqnarray} 
Due to the inclusion of the Wasserstein metric in its formulation, we call this a \emph{Wasserstein means problem}. This problem can be further thought of as a Wasserstein barycenter problem where $N=1$. In light of this observation, as noted in \citep{Cuturi-2014}, the algorithm for finding the Wasserstein barycenter offers an alternative for the popular Loyd's algorithm for determining the local minimum of the K-means objective. 

\section{Clustering with multilevel structure data} \label{Section:multilevel_Wasserstein}
Given $m$ groups of $n_{j}$ exchangeable data points $X_{j,i}$ where $1 
\leq j \leq m, 1 \leq i \leq n_{j}$, i.e. data are represented in a two-level grouping structure,  our goal is to learn about the two-level clustering structure of the data. We want to obtain simultaneously local clusters for each data group, and global clusters among all groups. 

\subsection{Multilevel Wasserstein means (MWM) algorithm} \label{Section:multilevel_kmeans}
For any $j=1,\ldots, m$, we denote the empirical measure for group $j$ by $P_{n_{j}}
^{j}:=\dfrac{1}{n_{j}}\sum \limits_{i=1}^{n_{j}}{\delta_{X_{j,i}}}$. Throughout this section, for simplicity of exposition, we assume that the numbers of both local and global clusters are either known or bounded above by a given number. In 
particular, for local clustering we allow group $j$ to have at most $k_{j}$ clusters for $j=1,\ldots, m$. For global clustering, we assume to have $M$ group (Wasserstein) means among the $m$ given groups. We now describe the high level idea of our proposed model, later elaborate its formal formulation, and demonstrate the connection to Bayesian hierarchical models. 

\paragraph{High level idea:}
For local clustering, for each $j = 1,\ldots, m$, performing a K-means clustering for group $j$, as expressed by \eqref{eqn:Wasserstein_K_means}, can be viewed as finding a finite discrete measure $G_{j} 
\in \mathcal{O}_{k_{j}}(\Theta)$ that minimizes squared Wasserstein distance $W_{2}^{2}
(G_{j},P_{n_{j}}^{j})$. For global clustering, we are interested in 
obtaining clusters out of $m$ groups, each of which is now represented by the discrete measure $G_j$, for $j=1,\ldots,m$.  Adopting again the viewpoint of \eqref{eqn:Wasserstein_K_means}, provided that all of $G_{j}$s are given, we can apply $K$-means quantization method to find their distributional clusters. The global clustering in the space of measures of measures on $\Theta$ can be succinctly expressed by\begin{eqnarray}
\mathop {\inf }\limits_{\mathcal{H} \in \mathcal{E}_{M}(\mathcal{P}_{2}(\Theta))}{W_{2}^{2}\biggr(
\mathcal{H},\dfrac{1}{m}\sum \limits_{j=1}^{m}{\delta_{G_{j}}}\biggr)}. \nonumber
\end{eqnarray}
However, $G_{j}$s are not known --- they have to be optimized through local clustering in each data group.
\paragraph{MWM problem formulation:} We have arrived at an objective function for jointly optimizing over both local and global clusters
\begin{eqnarray}
\mathop {\inf }\limits_{\substack {G_{j} \in \mathcal{O}_{k_{j}}(\Theta), \\\mathcal{H} \in 
\mathcal{E}_{M}(\mathcal{P}_{2}(\Theta))}}{\mathop {\sum }\limits_{j=1}^{m}{W_{2}
^{2}(G_{j},P_{n_{j}}^{j})}}
+ \lambda W_{2}^{2}(\mathcal{H},\dfrac{1}{m}\mathop {\sum }\limits_{j=1}^{m}{\delta_{G_{j}}}), \label{eqn:multilevel_Kmeans_typeone}
\end{eqnarray}
where $\lambda$ is a positive number used to balance the accumulative losses between the local and global clustering. We call the above optimization the problem of \emph{Multilevel Wasserstein Means (MWM)}.  The notable feature of MWM is that its loss function consists of two types of distances  associated with the hierarchical data structure: one is the distance in the space of measures, i.e. $W_{2}^{2}(G_{j},P_{n_{j}}^{j})$,  and the other in the space of measures of measures, i.e. $W_{2}^{2}(\mathcal{H},\dfrac{1}{m}\mathop {\sum }\limits_{j=1}^{m}{\delta_{G_{j}}})$. The global clustering in the space of measures of measures is also a special case of D2 clustering objective function in~\citep{Li_2006}. Our main difference with~\citep{Li_2006} is the joint optimization between the local clustering and global clustering to encourage the grouping structures in the multilevel Wasserstein means.

By adopting K-means optimization to both local and global clustering, the MWM problem might look formidable at first sight. 
Fortunately, it is possible to simplify this original formulation substantially,
by exploiting the structure of $\Hcal$.
Indeed, we can show that formulation \eqref{eqn:multilevel_Kmeans_typeone} is equivalent to the following optimization problem, which looks much simpler as it involves only measures on $\Theta$:\begin{eqnarray}
\mathop {\inf }\limits_{G_{j} \in \mathcal{O}_{k_{j}}(\Theta), \Hbold}
{\mathop {\sum }\limits_{j=1}^{m}{W_{2}^{2}(G_{j},P_{n_{j}}^{j})}+\dfrac{\lambda}{m} d_{W_{2}}^{2}(G_{j},\Hbold)}, \label{eqn:multilevel_K_means_typeone_first}
\end{eqnarray}
where $d_{W_{2}}^{2}(G,\Hbold) := \mathop {\min } \limits_{1 \leq i \leq M}{W_{2}^{2}(G,H_{i})}$ and $\Hbold=(H_{1},\ldots,H_{M})$, with each $H_i \in \mathcal{P}_2(\Theta)$. The proof of this equivalence is deferred to Proposition \ref{lemma:equivalence_multilevels_Kmeans} in Appendix~\ref{sec:appendix_b}. 

\paragraph{Connection between MWM and Bayesian hierarchical models:} The local measures obtained via Wasserstein means problems yield the local clustering of data in each group. Then, the K-means algorithm on these local measures leads to the global clustering of these groups. Therefore, the simultaneous local and global clusterings in the joint optimization formulation of MWM enable the discovery of nested multilevel structures hidden in grouped data. This property shares several similarities  to Bayesian hierarchical models, such as the nested Dirichlet process (NDP) \citep{Rodriguez-etal-08}.

Before going into the details of the algorithm for approximating the objective function \eqref{eqn:multilevel_K_means_typeone_first} in Section \ref{Section:mwm_algorithm}, we shall present some simpler cases, which help to illustrate some properties of the optimal solutions of equation~\eqref{eqn:multilevel_K_means_typeone_first}, while providing insights of subsequent developments of the MWM formulation.
\subsubsection{Properties of MWM in special cases} \label{Section:mwm_specical_cases}
\paragraph{Example 1.} Suppose $k_{j}=1$ and $n_{j}=n$ for all $1 \leq j \leq m$, and $M=1$. Write $\Hbold = H \in \mathcal{P}_2(\Theta)$. Under this setting, the objective function \eqref{eqn:multilevel_K_means_typeone_first} can be rewritten as
\begin{eqnarray}
\mathop {\inf }\limits_{\substack {\theta_{j} \in \Theta, \\ H \in \mathcal{P}_{2}(\Theta)}}{\sum \limits_{j=1}^{m}{\sum \limits_{i=1}^{n}{\|\theta_{j}-X_{j,i}\|^{2}}}}
+\lambda W_{2}^{2}(\delta_{\theta_{j}},H)/m, \label{eqn:special_case_multilevel_Kmeans_one} 
\end{eqnarray}where $G_{j}=\delta_{\theta_{j}}$ for any $1 \leq j \leq m$. From the result of Theorem \ref{theorem:upperbound_barycenter} in the Supplement, the Wasserstein barycenter of $G_j$ with uniform weight has exactly one component, i.e., 
\begin{eqnarray}
\mathop {\inf } \limits_{H \in \mathcal{P}_{2}(\Theta)}{\sum \limits_{j=1}^{m}{W_{2}^{2}(\delta_{\theta_{j}},H)}} & = & \mathop {\inf }\limits_{H \in \mathcal{E}_{1}(\Theta)}{\sum \limits_{j=1}^{m}{W_{2}^{2}(G_{j},H)}}
= \sum \limits_{j=1}^{m}{\|\theta_{j}-(\sum \limits_{i=1}^{m}{\theta_{i}})/m\|^{2}}, \nonumber
\end{eqnarray}
where the second infimum is achieved when $H=\delta_{(\sum \limits_{j=1}^{m}{\theta_{j}})/m}$. Thus, objective function \eqref{eqn:special_case_multilevel_Kmeans_one} may be rewritten as
\begin{eqnarray}
\mathop {\inf }\limits_{\theta_{j} \in \Theta}{\sum \limits_{j=1}^{m}{\sum \limits_{i=1}^{n}{\|\theta_{j}-X_{j,i}\|^{2}}}} +\| m\theta_{j}-(\sum \limits_{l=1}^{m}{\theta_{l}})\|^{2}/m^{3}. \nonumber
\end{eqnarray}Write $\overline{X}_{j}=(\sum \limits_{i=1}^{n}{X_{j,i}})/n$ for all $1 \leq j \leq m$.  As $m \geq 2$, we can check that the unique optimal solutions for the above optimization problem are $\theta_{j}=\biggr((m^2n+1)\overline{X}_{j}+\sum \limits_{i \neq j}
\overline{X}_{i}\biggr)/(m^{2}n+m)$ for any $1 \leq j \leq m$. If we further assume that our data $X_{j,i}$ are i.i.d samples from the probability measure $P^{j}$ having mean $\mu_{j}
=E_{X \sim P^{j}}(X)$ for any $1 \leq j \leq m$, the previous result implies that $\theta_{i} 
\not \to \theta_{j}$ for almost surely as long as $\mu_{i} \neq \mu_{j}$. As a 
consequence, if $\mu_{j}$'s are pairwise different, the multilevel Wasserstein means under that simple scenario of \eqref{eqn:multilevel_K_means_typeone_first} will not have identical centers among local groups. 

On the other hand, we have $W_{2}^{2}(G_{i},G_{j})=\|\theta_{i}-\theta_{j}\|^{2}=
\biggr(\dfrac{mn}{mn+1}\biggr)^{2}\|\overline{X}_{i}-\overline{X}_{j}\|^{2}$. Now, 
from the definition of Wasserstein distance 
\begin{eqnarray}
W_{2}^{2}(P_{n}^{i},P_{n}^{j}) & = & \mathop {\min }\limits_{\sigma}{\dfrac{1}{n}\sum \limits_{l=1}^{n}{\|X_{i,l}-X_{j,\sigma(l)}\|^{2}}} \geq \|\overline{X}_{i}-\overline{X}_{j}\|^{2}, \nonumber
\end{eqnarray} where $\sigma$ in the above sum varies over all the permutations of $\left\{1,2,\ldots,n
\right\}$ and the second inequality is due to Cauchy-Schwarz's inequality. It implies that as 
long as $W_{2}^{2}(P_{n}^{i},P_{n}^{j})$ is small, the optimal solution $G_{i}$ and $G_{j}
$ of \eqref{eqn:special_case_multilevel_Kmeans_one} will be sufficiently close to each other. By letting $n \to \infty$, we also achieve the same conclusion regarding the asymptotic behavior of $G_{i}$ and $G_{j}$ with respect to $W_2(P^{i},P^{j})$.
\paragraph{Example 2.} Let $k_{j}=1$ and $n_{j}=n$ for all $1 \leq  j \leq m$ and $M=2$. 
Write $\Hbold = (H_1,H_2)$. Moreover, assume that there is a strict subset A of $\left\{1,2,\ldots,m\right\}$ such that \begin{eqnarray}
& & C \cdot \mathop {\max }\biggr\{\mathop {\max }\limits_{i, j \in A}{W_{2}(P_{n}^{i},P_{n}^{j})},
\mathop {\max }\limits_{i, j \in A^{c}}{W_{2}(P_{n}^{i},P_{n}^{j})}\biggr\} \leq \mathop {\min }\limits_{i \in A, j \in A^{c}}{W_{2}(P_{n}^{i},P_{n}^{j})}, \nonumber 
\end{eqnarray}
where $C$ is some sufficiently large positive constant, i.e., the distance of empirical measures $P_{n}^{i}$ and $P_{n}^{j}$ when $i$ and $j$ belong to the same set $A$ or $A^{c}$ is much smaller than that when $i$ and $j$ do not belong to the same set. Under this condition, by using the argument from Example 1 we can write 
the objective function \eqref{eqn:multilevel_K_means_typeone_first} as 
\begin{eqnarray}
\mathop {\inf }\limits_{\substack {\theta_{j} \in \Theta, \\ H_{1} \in \mathcal{P}_{2}(\Theta)}}{\sum \limits_{j \in A}{\sum \limits_{i=1}^{n}{\|\theta_{j}-X_{j,i}\|^{2}}}+\dfrac{W_{2}^{2}(\delta_{\theta_{j}},H_{1})}{|A|}}+ \nonumber \\ 
\mathop {\inf }\limits_{\substack {\theta_{j} \in \Theta, \\ H_{2} \in \mathcal{P}_{2}(\Theta)}}{\sum \limits_{j \in A^{c}}{\sum \limits_{i=1}^{n}{\|\theta_{j}-X_{j,i}\|^{2}}}+\dfrac{W_{2}^{2}(\delta_{\theta_{j}},H_{2})}{|A^{c}|}}. \nonumber
\end{eqnarray}The above objective function suggests that the optimal solutions $\theta_{i}$, $\theta_{j}$ (equivalently, $G_{i}$ and $G_{j}$) will not be close to each other as long as $i$ and $j$ do not belong to the same set $A$ or $A^{c}$, i.e. $P_{n}^{i}$ and $P_{n}^{j}$ are very far. Therefore, the two groups of ``local'' measures $G_{j}$s do not share atoms under that setting of empirical measures.

The examples examined above indicate that the MWM problem in general does not ``encourage'' the local measures $G_{j}$s to share atoms among each other in its solution. Additionally,  when the empirical measures of local groups are very close, it may also suggest that they belong to the same cluster and the distances among optimal local measures $G_{j}$s can be very small. 

\subsubsection{Algorithm description} \label{Section:mwm_algorithm}
Now we are ready to describe our algorithm in the general case. As we mentioned in Section~\ref{Section:prelim}, direct computation of the second order Wasserstein distance is expensive. Therefore, we use the entropic regularized second order Wasserstein $\hat{W}_{2}$ to approximate the Wasserstein distance (see equation~\eqref{eq:entropic_Wasserstein} for the definition). As the regularized parameter in the entropic version of second order Wasserstein distance is fixed in our experiment, we will not specifically mention that constant in $\hat{W}_{2}$ for the simplicity of the presentation. For the algorithmic development, we consider finding a local minimum of the entropic version of problem~\eqref{eqn:multilevel_K_means_typeone_first}, which is given by:
\begin{eqnarray}
\mathop {\inf }\limits_{G_{j} \in \mathcal{O}_{k_{j}}(\Theta), \Hbold}
{\mathop {\sum }\limits_{j=1}^{m}{\hat W_{2}^{2}(G_{j},P_{n_{j}}^{j})}+\dfrac{\lambda}{m} d_{\hat W_{2}}^{2}(G_{j},\Hbold)}, \label{eqn:multilevel_K_means_entropic}
\end{eqnarray}
where $d_{\hat W_{2}}^{2}(G,\Hbold) := \mathop {\min } \limits_{1 \leq i \leq M}{\hat W_{2}^{2}(G,H_{i})}$ and $\Hbold=(H_{1},\ldots,H_{M})$, with each $H_i \in \mathcal{P}_2(\Theta)$. We refer to objective function~\eqref{eqn:multilevel_K_means_entropic} as \emph{entropic MWM}. The procedure for finding such local minimum of problem \eqref{eqn:multilevel_K_means_entropic}
is summarized in Algorithm \ref{alg:multilevels_Wasserstein_means}.
\begin{algorithm}[t!]
   \caption{Multilevel Wasserstein Means (MWM)}
   \label{alg:multilevels_Wasserstein_means}
\begin{algorithmic}
   \STATE {\bfseries Input:} data $X_{j,i}$, parameters $k_{j}$ and $M$.
   \STATE {\bfseries Output:} probability measures $G_{j}$ and elements $H_{i}$ of $\Hbold$.
   \STATE Initialize measures $G_{j}^{(0)}$, elements $H_{i}^{(0)}$ of $\Hbold^{(0)}$, $t=0$.
   \WHILE{ $G_{j}^{(t)}, H_{i}^{(t)}$ have not converged}
   \STATE 1. Update  $G_{j}^{(t)}$ for $1 \leq j \leq m$:
   \FOR{$j=1$ {\bfseries to} $m$}
   \STATE $i_{j} \leftarrow \mathop {\arg \min}\limits_{1 \leq u \leq M}{\hat W_{2}^{2}(G_{j}^{(t)},H_{u}^{(t)})}$.
   \STATE $G_{j}^{(t+1)} \leftarrow \mathop {\arg \min }\limits_{G_{j} \in \mathcal{O}_{k_{j}}(\Theta)}{\hat W_{2}^{2}(G_{j},P_{n_{j}}^{j})}+ \lambda \hat W_{2}^{2}(G_{j},H_{i_{j}}^{(t)})/m$.
   \ENDFOR
   \STATE 2. Update $H_{i}^{(t)}$ for $1 \leq i \leq M$:
   \FOR{$j=1$ {\bfseries to} $m$}
   \STATE $i_{j} \leftarrow \mathop {\arg \min}\limits_{1 \leq u \leq M}{\hat W_{2}^{2}(G_{j}^{(t+1)},H_{u}^{(t)})}$.
   \ENDFOR
   \FOR{$i=1$ {\bfseries to} $M$}
   \STATE $C_{i} \leftarrow \left\{l: i_{l}=i\right\}$ for $1 \leq i \leq M$.
   \STATE $H_{i}^{(t+1)} \leftarrow \mathop {\arg \min }\limits_{H_{i} \in \mathcal{P}_{2}(\Theta)}{\sum \limits_{l \in C_{i}}{\hat W_{2}^{2}(H_{i}, G_{l}^{(t+1)})}}$.
   \ENDFOR
   \STATE 3. $t \leftarrow t+1$.
   \ENDWHILE
\end{algorithmic}
\end{algorithm}
We explain the following details regarding the initialization and update steps required by the algorithm: 
\begin{itemize}
\item The initialization of local measures $G_{j}^{(0)}$ (i.e., the initialization of their atoms and weights) can be 
obtained by performing $K$-means clustering on local data $X_{j,i}$ for $1 \leq j \leq m$.
The initialization of elements $H_{i}^{(0)}$ of $\Hbold^{(0)}$ is based on 
a simple extension of the K-means algorithm called three-stage K-means. Details are given in Algorithm \ref{alg:three_stages_K_means} in Appendix~\ref{sec:appendix_d};
\item The update of $G_{j}^{(t+1)}$ can be computed efficiently by simply using algorithms from \citep{Cuturi-2014} 
to search for local solutions of these barycenter problems within the space $\mathcal{O}
_{k_{j}}(\Theta)$ from the atoms and weights of 
$G_{j}^{(t)}$; 
\item Since all $G_{j}^{(t+1)}$s are finite discrete 
measures, finding the update for $H_{i}^{(t+1)}$ over the whole space $\mathcal{P}_{2}
(\Theta)$ can be computationally expensive. We indeed utilize a result with Wasserstein barycenter that we can reduce this optimization problem with the Wasserstein barycenter to search for a local solution within the space $\mathcal{O}_{l^{(t)}}$,
where $l^{(t)}=\sum \limits_{j \in C_{i}}{|\text{supp}(G_{j}^{(t+1)})|}-|C_{i}|$ from the global atoms $H_{i}^{(t)}$ of $\Hbold^{(t)}$
(the justification of this reduction is derived from Theorem \ref{theorem:upperbound_barycenter} in Appendix~\ref{sec:appendix_a}). 
Motivated from this result with the Wasserstein barycenter, we also only find the update for $H_{i}^{(t+1)}$ within the space $\mathcal{O}_{l^{(t)}}$ for the entropic version of Wasserstein barycenter. This again can be done by utilizing algorithms from \citep{Cuturi-2014}. Note that, as $l^{(t)}$ becomes very large when $m$ is large, to speed up the computation of Algorithm 
\ref{alg:multilevels_Wasserstein_means} we indeed impose a threshold $L$, e.g., $L=10$, for $l^{(t)}$ in the implementation. 
\end{itemize}
The following guarantee for Algorithm~\ref{alg:multilevels_Wasserstein_means}
can be established:
\begin{thm}\label{theorem:local_convergence_multilevel_Kmeans}
For any $\lambda>0$, the values of objective function \eqref{eqn:multilevel_K_means_entropic} of the entropic MWM formulation at the updates $(G_{1}^{(t)}, G_{2}^{(t)}, \ldots, G_{m}^{(t)}, \bold{H}^{(t)})$ of Algorithm \ref{alg:multilevels_Wasserstein_means} are decreasing.
\end{thm}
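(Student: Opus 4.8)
The plan is to read Algorithm~\ref{alg:multilevels_Wasserstein_means} as a block coordinate descent on the entropic MWM objective~\eqref{eqn:multilevel_K_means_entropic}, written as
\[
f(G_1,\ldots,G_m,\Hbold) \;=\; \sum_{j=1}^m \hat W_2^2(G_j, P_{n_j}^j) + \frac{\lambda}{m}\, d_{\hat W_2}^2(G_j, \Hbold),
\]
and to verify that neither update block raises $f$. Throughout I use the standing assumption that each barycenter subroutine of~\citep{Cuturi-2014}, warm-started at the current iterate, returns a point whose (sub)objective is no larger than at initialization; this is exactly what turns the two inner minimizations into genuine descent steps.

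For the $G$-update (Step~1), I fix $\Hbold=\Hbold^{(t)}$; then $f$ decouples across $j$, so I treat each group separately. The algorithm first sets $i_j=\arg\min_u \hat W_2^2(G_j^{(t)},H_u^{(t)})$, so that $d_{\hat W_2}^2(G_j^{(t)},\Hbold^{(t)})=\hat W_2^2(G_j^{(t)},H_{i_j}^{(t)})$, and then minimizes the surrogate $\hat W_2^2(G_j,P_{n_j}^j)+\tfrac{\lambda}{m}\hat W_2^2(G_j,H_{i_j}^{(t)})$ starting from $G_j^{(t)}$. The descent property of the subroutine bounds this surrogate at $G_j^{(t+1)}$ by its value at $G_j^{(t)}$. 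The extra step is to reinstate the minimum: since $d_{\hat W_2}^2(G_j^{(t+1)},\Hbold^{(t)})=\min_i \hat W_2^2(G_j^{(t+1)},H_i^{(t)})\le \hat W_2^2(G_j^{(t+1)},H_{i_j}^{(t)})$, the per-group contribution at $G_j^{(t+1)}$ is at most the surrogate value, hence at most the contribution at $G_j^{(t)}$. Summing over $j$ gives $f(\{G_j^{(t+1)}\},\Hbold^{(t)})\le f(\{G_j^{(t)}\},\Hbold^{(t)})$.

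For the $\Hbold$-update (Step~2), I fix the $G_j^{(t+1)}$ and drop the terms $\hat W_2^2(G_j,P_{n_j}^j)$, which do not depend on $\Hbold$. Using the reassignment $i_j=\arg\min_u \hat W_2^2(G_j^{(t+1)},H_u^{(t)})$ and clusters $C_i=\{l:i_l=i\}$, I rewrite $\sum_j d_{\hat W_2}^2(G_j^{(t+1)},\Hbold^{(t)})=\sum_i\sum_{l\in C_i}\hat W_2^2(G_l^{(t+1)},H_i^{(t)})$. Updating each $H_i^{(t+1)}$ as the (approximate) barycenter of $\{G_l^{(t+1)}:l\in C_i\}$ warm-started at $H_i^{(t)}$ can only decrease $\sum_{l\in C_i}\hat W_2^2(\cdot,G_l^{(t+1)})$, so $\sum_i\sum_{l\in C_i}\hat W_2^2(G_l^{(t+1)},H_i^{(t+1)})\le \sum_i\sum_{l\in C_i}\hat W_2^2(G_l^{(t+1)},H_i^{(t)})$. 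Finally, reinstating the minimum against the new centers, $\sum_j d_{\hat W_2}^2(G_j^{(t+1)},\Hbold^{(t+1)})=\sum_j \min_i \hat W_2^2(G_j^{(t+1)},H_i^{(t+1)})\le \sum_i\sum_{l\in C_i}\hat W_2^2(G_l^{(t+1)},H_i^{(t+1)})$, since each $G_l^{(t+1)}$ with $l\in C_i$ is no farther from its nearest new center than from $H_i^{(t+1)}$. Chaining these inequalities yields $f(\{G_j^{(t+1)}\},\Hbold^{(t+1)})\le f(\{G_j^{(t+1)}\},\Hbold^{(t)})$, and combining with Step~1 proves the claim.

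The part requiring the most care is the bookkeeping of the $\min$ operator in $d_{\hat W_2}^2$: each block performs an inner minimization against a \emph{fixed} assignment, so one must separately argue that re-evaluating the nearest-center assignment after the update does not raise the objective. This is precisely the classical K-means ``assignment-step'' inequality, transported to the space of measures with $\hat W_2$ in place of the Euclidean distance, and it is what welds the two phases into a single monotone sequence. The only genuine hypothesis beyond this is that the~\citep{Cuturi-2014} barycenter routine is warm-started and non-increasing; without it the inner steps could overshoot and the descent would fail.
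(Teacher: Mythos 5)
Your proposal is correct and follows essentially the same route as the paper's proof: both read Algorithm~\ref{alg:multilevels_Wasserstein_means} as block coordinate descent on the entropic objective, descend each block against the \emph{fixed} nearest-center assignment, and then invoke the K-means-style reassignment inequality to reinstate the minimum in $d_{\hat W_2}^2$ (a step the paper leaves implicit in its ``putting the above results together'' transitions, and which you rightly isolate as the place needing care). Two differences are worth recording. First, the paper takes the pseudocode literally, treating $G_j^{(t+1)}$ and $H_i^{(t+1)}$ as exact minimizers over $\mathcal{O}_{k_j}(\Theta)$ and $\mathcal{P}_2(\Theta)$; under that reading, descent against the previous iterate is automatic because the previous iterate is feasible, and no warm-start hypothesis is needed. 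Your assumption that the barycenter subroutine is warm-started and non-increasing is weaker (and closer to what an implementation actually guarantees), and it substitutes cleanly into the argument. Second, and this is the substantive gap between the two conclusions: the paper proves more than monotonicity. It shows $f(\vec{G}^{(t+1)},\Hbold^{(t+1)}) < f(\vec{G}^{(t)},\Hbold^{(t)})$ \emph{unless} $(\vec{G}^{(t+1)},\Hbold^{(t+1)}) \equiv (\vec{G}^{(t)},\Hbold^{(t)})$, by comparing each update against the minimization restricted to measures supported on the supports of the previous iterate, where the entropic objective is strongly convex in the weights, so equality of objective values forces the iterate to be unchanged; this strictness is then used to argue convergence of the iterates to a partial-minimizer fixed point. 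Your argument delivers only the non-strict chain $f^{(t+1)} \le f^{(t)}$, which suffices for ``monotone descent'' but not for the strictly-decreasing-until-fixed-point statement the paper actually establishes; to close that distance you would need an analogue of the paper's strong-convexity step (or exact minimization plus uniqueness of the fixed-support subproblem solutions).
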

\noindent
The proof of Theorem~\ref{theorem:local_convergence_multilevel_Kmeans} is in Appendix~\ref{sec:appendix_b}.
\subsection{Multilevel Wasserstein means with sharing} \label{Section:constraint_multilevels_Kmeans}
As we have observed from the analysis of several specific cases, the MWM formulation may not encourage sharing components locally among $m$ groups in its solution. However, enforced sharing has been demonstrated to be a very useful technique, which leads to the ``borrowing of strength'' among different parts of the model, consequently improving the inference efficiency~\citep{Teh-etal-06,Nguyen-2016}. In this section, we seek to encourage the borrowing of strength among groups by imposing additional constraints on the atoms of $G_{1},\ldots,G_{m}$ in the original MWM formulation \eqref{eqn:multilevel_Kmeans_typeone}. Denote $\mathcal{A}_{M,
\mathcal{S}_{K}}=\biggr\{G_{j} \in \mathcal{O}_{K}(\Theta): \text{supp}(G_{j}) \subseteq \mathcal{S}_{K}\ \forall 1 \leq j \leq 
m \biggr\}$
for any given $K, M \geq 1$, where the constraint set $\mathcal{S}_{K}$ has exactly $K$ 
elements. To simplify the exposition, let us assume that $k_{j}=K$ for all $1 
\leq j \leq m$. Consider the following locally constrained version of the MWM problem
\begin{eqnarray}
\mathop {\inf }{\mathop {\sum }\limits_{j=1}^{m}{W_{2}^{2}(G_{j},P_{n_{j}}^{j})}}
+ \lambda W_{2}^{2}(\Hcal,\dfrac{1}{m}\mathop {\sum }\limits_{j=1}^{m}{\delta_{G_{j}}}), \label{eqn:local_constraint_multilevels_Kmeans_typeone}
\end{eqnarray}
where $\mathcal{S}_{K}, G_{j} \in \mathcal{A}_{M,
\mathcal{S}_{K}},  \ \Hcal \in \mathcal{E}_{M}
(\mathcal{P}(\Theta))$ in the above infimum. We call the above optimization the problem of \emph{Multilevel Wasserstein Means with Sharing (MWMS)}. The local constraint assumption $\text{supp}(G_{j})\subseteq \mathcal{S}_{K}$ had been 
utilized previously in the literature --- see, for example, \citep{Kulis-2012}, in which the authors developed an optimization-based approach to the inference of the hierarchical Dirichlet process~\citep{Teh-etal-06},
which also encourages explicitly the sharing of local group means among local clusters. 
Now, we can rewrite objective function 
\eqref{eqn:local_constraint_multilevels_Kmeans_typeone} as follows
\begin{eqnarray}
\mathop {\inf } {\mathop {\sum }\limits_{j=1}^{m}{W_{2}^{2}(G_{j},P_{n_{j}}^{j})}+\dfrac{\lambda}{m}d_{W_{2}}^{2}(G_{j},\Hbold)}, \label{eqn:local_constraint_multilevels_Kmeans_typetwo}
\end{eqnarray} where  $\mathcal{S}_{K}, G_{j} \in \mathcal{B}_{M,
\mathcal{S}_{K}}, \Hbold=(H_{1},\ldots,H_{M})$ in the above infimum with $\mathcal{B}_{M,\mathcal{S}_{K}}=\biggr\{G_{j} \in \mathcal{O}_{K}(\Theta): 
\text{supp}(G_{j})\subseteq \mathcal{S}_{K}\ \forall 1 \leq j \leq m \biggr\}$. 

Similar to the multilevel Wasserstein means, for the algorithmic development, we also consider an entropic regularized version of MWMS, which is given by
\begin{align}
	\mathop {\inf } {\mathop {\sum }\limits_{j=1}^{m}{\hat{W}_{2}^{2}(G_{j},P_{n_{j}}^{j})}+\dfrac{\lambda}{m}d_{\hat{W}_{2}}^{2}(G_{j},\Hbold)}, \label{eqn:local_constraint_multilevels_Kmeans_typetwo_entropic}
\end{align}
where  $\mathcal{S}_{K}, G_{j} \in \mathcal{B}_{M,
\mathcal{S}_{K}}, \Hbold=(H_{1},\ldots,H_{M})$ in the infimum. We refer to objective function~\eqref{eqn:local_constraint_multilevels_Kmeans_typetwo_entropic} as \emph{entropic MWMS}.

The high level idea of finding local minimums of the objective function \eqref{eqn:local_constraint_multilevels_Kmeans_typetwo_entropic} is to first, update the elements of the constraint set $\mathcal{S}_{K}$ to provide the supports for local measures $G_{j}$'s and then, obtain the weights of these measures as well as the elements of the global set $H$ by computing the appropriate Wasserstein barycenters.  We present the pseudocode of finding the local minimum of the entropic MWMS in Algorithm~\ref{alg:local_constraint_multilevels_Wasserstein_means}. We make the following remarks regarding the initialization and updates of Algorithm \ref{alg:local_constraint_multilevels_Wasserstein_means}: 
\begin{itemize}
\item[(i)] An efficient way to initialize global set $S_{K}^{(0)}=\biggr\{a_{1}^{(0)},\ldots,a_{K}^{(0)}\biggr\} \in 
\mathbb{R}^{d \times K}$ is to perform $K$-means on the whole data set $X_{j,i}$ for $1 
\leq j \leq m, 1 \leq i \leq n_{j}$; 
\item[(ii)] For any $1 \leq j \leq K$, the updates $a_{j}^{(t+1)}$ are indeed the solutions of the following optimization problems (see the proof of Theorem~\ref{theorem:local_convergence_local_constraint_Kmeans} in Appendix~\ref{sec:appendix_b} for detailed argument about how these optimization problems are derived):
\begin{eqnarray}
\min_{a_{j}} \left\{ m \sum \limits_{u=1}^{m}{\sum \limits_{v=1}^{n_{j}}{T_{j,v}^{u}\|a_{j}-X_{u,v}\|^{2}}}
+\lambda \sum \limits_{u=1}^{m}{\sum \limits_{v}{U_{j,v}^{u}\|a_{j}-h_{i_{j},v}^{(t)}||^{2}}}\right\},\nonumber
\end{eqnarray}
where $T^{j}$ is an optimal coupling of $G_{j}^{(t)}$, $P_{n}^{j}$ and $U^{j}$ is an optimal coupling of $G_{j}^{(t)}$, $H_{i_{j}}^{(t)}$. By taking the first order derivative of the above function with respect to $a_{j}^{(t)}$, we quickly achieve $a_{j}^{(t+1)}$ as the closed form minimum of that function;
\item[(iii)] Updating the local weights of $G_{j}^{(t+1)}$ is equivalent to updating $G_{j}^{(t+1)}$ as the atoms of $G_{j}^{(t+1)}$ are known to stem from $S_{K}^{(t+1)}$. 
\end{itemize}
\begin{algorithm}[t!]
   \caption{Multilevel Wasserstein Means with Sharing (MWMS)}
   \label{alg:local_constraint_multilevels_Wasserstein_means}
\begin{algorithmic}
   \STATE {\bfseries Input:} Data $X_{j,i}$, $K$, $M$, $\lambda$.
   \STATE {\bfseries Output:} global set $S_{K}$, local measures $G_{j}$, and elements $H_{i}$ of $\Hbold$.
   \STATE Initialize $S_{K}^{(0)}=\left\{a_{1}^{(0)},\ldots,a_{K}^{(0)}\right\}$, measures $G_{j}^{(0)}$ from $S_{K}^{(0)}$, elements $H_{i}^{(0)}$ of $\Hbold^{(0)}$, and $t = 0$.
   \WHILE{$S_{K}^{(t)},G_{j}^{(t)},H_{i}^{(t)}$ have not converged}
   \STATE 1. Update global set $S_{K}^{(t)}$:
   \FOR{$j=1$ {\bfseries to} $m$}
   \STATE $i_{j} \leftarrow \mathop {\arg \min}\limits_{1 \leq u \leq M}{\hat W_{2}^{2}(G_{j}^{(t)},H_{u}^{(t)})}$.
   \STATE $T^{j} \leftarrow$ optimal coupling of $G_{j}^{(t)}$, $P_{n}^{j}$, \ $U^{j} \leftarrow$ optimal coupling of $G_{j}^{(t)}$, $H_{i_{j}}^{(t)}$.
   \ENDFOR
   \FOR{$i=1$ {\bfseries to} $M$}
   \STATE $h_{i}^{(t)} \leftarrow$ atoms of $H_{i}^{(t)}$ with $h_{i,v}^{(t)}$ as v-th column.
   \ENDFOR
   \FOR {$i=1$ {\bfseries to} $K$}
   \STATE $(m+\lambda) D \leftarrow m \sum \limits_{u=1}^{m}{\sum \limits_{v=1}^{n_{i}}{T_{i,v}^{u}}}+\lambda \sum \limits_{u=1}^{m}{\sum \limits_{v \neq i}{U_{i,v}^{u}}}$.
   \STATE $a_{i}^{(t+1)} \leftarrow \biggr(m \sum \limits_{u=1}^{m}{\sum \limits_{v=1}^{n_{i}}{T_{i,v}^{u}X_{u,v}}}+\lambda \sum \limits_{u=1}^{m}{\sum \limits_{v}{U_{i,v}^{u}h_{j_{u},v}^{(t)}}}\biggr)/mD$.
	\ENDFOR
	\STATE 2. Update $G_{j}^{(t)}$ for $1 \leq j \leq m$:
	\FOR{$j=1$ {\bfseries to} $m$}
	\STATE $G_{j}^{(t+1)} \leftarrow \mathop {\arg \min}\limits_{G_{j}: \text{supp}(G_{j}) \equiv \mathcal{S}_{K}^{(t+1)}}{\hat W_{2}^{2}(G_{j},P_{n_{j}}^{j})} +\lambda \hat W_{2}^{2}(G_{j},H_{i_{j}}^{(t)})/m$.
	\ENDFOR
   \STATE 3. Update $H_{i}^{(t)}$ for $1 \leq i \leq M$ as in Algorithm \ref{alg:multilevels_Wasserstein_means}.
   \STATE 4. $t \leftarrow t+1$.
   \ENDWHILE
\end{algorithmic}
\end{algorithm}
Now, similarly to Theorem \ref{theorem:local_convergence_multilevel_Kmeans}, we also have the following theoretical guarantee regarding the behavior of Algorithm \ref{alg:local_constraint_multilevels_Wasserstein_means}.
\begin{thm} \label{theorem:local_convergence_local_constraint_Kmeans} For any $\lambda>0$, 
the values of objective function \eqref{eqn:local_constraint_multilevels_Kmeans_typetwo_entropic} of the entropic MWMS formulation at the updates $(S_{K}^{(t)}, G_{1}^{(t)}, G_{2}^{(t)}, \ldots, G_{m}^{(t)}, \bold{H}^{(t)})$ of  Algorithm 
\ref{alg:local_constraint_multilevels_Wasserstein_means}  are decreasing.
\end{thm}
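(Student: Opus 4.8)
The plan is to mimic the block-coordinate descent argument that underlies Theorem~\ref{theorem:local_convergence_multilevel_Kmeans}, showing that each of the three update blocks of Algorithm~\ref{alg:local_constraint_multilevels_Wasserstein_means} leaves the entropic MWMS objective \eqref{eqn:local_constraint_multilevels_Kmeans_typetwo_entropic} non-increasing, and then chaining the resulting inequalities across one full iteration. Write $f(\mathcal{S}_K, G_1,\ldots,G_m,\Hbold)$ for that objective and $f^{(t)}$ for its value at the iterate $(S_K^{(t)},G_1^{(t)},\ldots,G_m^{(t)},\Hbold^{(t)})$. The first maneuver is to linearize the $\min$ hidden in $d_{\hat W_2}^2(G_j,\Hbold)=\min_{1\le i\le M}\hat W_2^2(G_j,H_i)$ by introducing assignments $i_j=\arg\min_u \hat W_2^2(G_j,H_u)$: for any fixed assignment one has $d_{\hat W_2}^2(G_j,\Hbold)\le \hat W_2^2(G_j,H_{i_j})$, with equality at the current $\arg\min$, so it suffices to drive down the objective with the assignments frozen and then re-evaluate the true ($\min$) objective, which can only be smaller.

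The heart of the argument, and the main obstacle, is the global-set update (Step~1). Here I would use a majorization-minimization (auxiliary-function) argument built on the optimal couplings $T^j$ (between $G_j^{(t)}$ and $P_{n_j}^j$) and $U^j$ (between $G_j^{(t)}$ and $H_{i_j}^{(t)}$). The key observation is that $\hat W_2^2(G_j,P_{n_j}^j)$ is an infimum over couplings; freezing the coupling to $T^j$ yields the upper bound $\langle T^j, C(a)\rangle + \tau\,\mathrm{H}(T^j\mid G_j\otimes P_{n_j}^j)$, where $C(a)_{iv}=\|a_i-X_{j,v}\|^2$. Because relocating the atoms $a_i$ while holding the weights of $G_j$ fixed does not change the marginals, it leaves the reference product measure --- and hence the entropy term --- untouched, so the bound is tight at $a^{(t)}$ and is a genuine majorizer. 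Summing these majorizers (and the analogous ones built from $U^j$) over all groups gives an upper surrogate of $f$ that is separable over the atoms $a_i$ and convex-quadratic in each, so the per-atom closed-form minimizer of remark~(ii) minimizes the full surrogate. Minimizing a tight upper bound decreases $f$, which is exactly the MM descent guarantee. I expect the delicate points here to be \emph{(i)} checking that the entropic term really is invariant under atom relocation with fixed weights, and \emph{(ii)} verifying the surrogate's separability so that the coordinatewise update is globally optimal for the surrogate.

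The remaining two blocks are exact partial minimizations and are comparatively routine. Updating the local measures $G_j^{(t+1)}$ (Step~2) amounts, with atoms pinned to $\mathcal{S}_K^{(t+1)}$ and assignments $i_j$ fixed, to minimizing $\hat W_2^2(G_j,P_{n_j}^j)+\lambda\hat W_2^2(G_j,H_{i_j}^{(t)})/m$ over the weights of $G_j$; being an explicit (Wasserstein-barycenter-type) minimization it cannot increase the objective, and replacing the frozen assignment by the fresh $\arg\min$ only lowers the $\min$-based objective further. Updating the global elements $H_i^{(t+1)}$ (Step~3) computes, for each cluster $C_i=\{l:i_l=i\}$, the (entropic) Wasserstein barycenter $\arg\min_{H_i}\sum_{l\in C_i}\hat W_2^2(H_i,G_l^{(t+1)})$, again a partial minimization of the only term of $f$ in which $H_i$ appears. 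Chaining the three inequalities over one pass yields $f^{(t)}\ge f^{(t+1)}$, establishing the claimed monotonicity; since $f\ge 0$ is bounded below, the values in fact converge.
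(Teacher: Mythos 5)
Your proposal is correct and takes essentially the same route as the paper's proof: what you phrase as a majorization--minimization step for the $S_K$-update is exactly the paper's chain of inequalities, which writes the current objective exactly in terms of the optimal couplings $T^{j},U^{j}$ (tightness of the surrogate), observes that the closed-form atom update minimizes this frozen-coupling quadratic, and then uses that the frozen couplings remain valid couplings for the relocated measures (the upper-bound property), before chaining with the exact partial minimizations of the $G_j$- and $H_i$-updates and the $\arg\min$ reassignment. Your explicit verification that the entropic term is invariant under atom relocation with fixed weights is a detail the paper passes over silently, but the overall structure of the argument is identical.
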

\noindent
The proof of Theorem~\ref{theorem:local_convergence_local_constraint_Kmeans} is in Appendix~\ref{sec:appendix_b}.

\section{Extension to multilevel structure data with context} \label{Section:multilevel_Wasserstein_context}
So far, we have considered the setting of multilevel structure data without any additional structure. However, complex multilevel data in practice usually include more structures. In this section, we consider a specific setting of multilevel data with context \citep{Vu-2014, Viet-2016} and develop efficient methods to cluster these data based on the idea of Wasserstein means as in the previous sections. Assume now that we have $m$ groups of $n_{j}$ exchangeable data points $X_{j,i}$ where $1 
\leq j \leq m, 1 \leq i \leq n_{j}$. For each group $j$, $\phi_{j} \in \mathbb{R}^{d_{2}}$ denotes the observed group-specific context. Our goal is to utilize the group-specific context to learn about the two-level clustering structure of the data. Similarly to the setting in Section \ref{Section:multilevel_Wasserstein}, we assume that we have at most $k_{j}$ clusters of group $j$ for $j=1,\ldots, m$ and $M$ groups of global clustering and contexts.

Based on the idea of MWM developed earlier, regarding local clustering we perform K-means clustering for group $j$, for each $j=1,\ldots,m$, to find a discrete measure $G_{j} \in \Ocal_{k_{j}}(\Theta)$ that minimizes $W_{2}^{2}(G_{j},P_{n_{j}}^{j})$. Since we would like to incorporate group context $\phi_{j}$ to study the two-level clustering structure of the data, the global clustering can be expressed as 
\begin{eqnarray}
\mathop {\inf }\limits_{\mathcal{H} \in \Ecal_{M}(\mathcal{P}_{2}(\Theta) \times \mathbb{R}^{d_{2}})}{W_{2}^{2}\biggr(
\mathcal{H},\dfrac{1}{m}\sum \limits_{j=1}^{m}{\delta_{(G_{j},\phi_{j})}}\biggr)}. \nonumber
\end{eqnarray}
By combining the losses from local and global clustering, we arrive at the following objective function
\begin{eqnarray}
\mathop {\inf }\limits_{\substack {G_{j} \in \mathcal{O}_{k_{j}}(\Theta) \\ H \in \Ecal_{M}(\mathcal{P}_{2}(\Theta) \times \mathbb{R}^{d_{2}})}}{\mathop {\sum }\limits_{j=1}^{m}{W_{2}^{2}(G_{j},P_{n_{j}}^{j})}+\lambda W_{2}^{2}(H,\dfrac{1}{m}\mathop {\sum }\limits_{j=1}^{m}{\delta_{(G_{j},\phi_{j})}})}, \label{eqn:multilevel_Kmeans_context}
\end{eqnarray}
where $\lambda>0$ is a chosen penalty number. We call the above optimization the problem of \textit{Multilevel Wasserstein Means with Context (MWMC)}. Similarly to the case of MWM, the objective function of MWMC can be rewritten as follows
\begin{eqnarray}
\mathop {\inf }\limits_{\substack {G_{j} \in \mathcal{O}_{k_{j}}(\Theta), \\ H \in (\mathcal{P}_{2}(\Theta) \times \mathbb{R}^{d_{2}})^{M}}}{\mathop {\sum }\limits_{j=1}^{m}{W_{2}^{2}(G_{j},P_{n_{j}}^{j})}+\dfrac{\lambda}{m} d_{W_{2}}^{2}\biggr((G_{j},\phi_{j}),\Hbold\biggr)}, \label{eqn:multilevel_K_means_context_first}
\end{eqnarray}
where $d_{W_{2}}^{2}\left((G,\phi),\Hbold \right)=\mathop {\min } \limits_{1 \leq i \leq m}{\left\{W_{2}^{2}(G,H_{i})+\|\phi - \theta_{i}\|^{2}\right\}}$ and $\Hbold=\left\{(H_{1},\theta_{1}),\ldots,(H_{M},\theta_{M})\right\}$\\ $\in (\mathcal{P}_{2}(\Theta) \times \mathbb{R}^{d_{2}})^{M}$. 

Similar to both the MWM and MWMS, we also consider the entropic version of MWMC for the computational purpose, which we refer to as \emph{entropic MWMC}. The objective function of the entropic MWMC is given by:
\begin{align}
	\mathop {\inf }\limits_{\substack {G_{j} \in \mathcal{O}_{k_{j}}(\Theta), \\ H \in (\mathcal{P}_{2}(\Theta) \times \mathbb{R}^{d_{2}})^{M}}}{\mathop {\sum }\limits_{j=1}^{m}{\hat{W}_{2}^{2}(G_{j},P_{n_{j}}^{j})}+\dfrac{\lambda}{m} d_{\hat{W}_{2}}^{2}\biggr((G_{j},\phi_{j}),\Hbold\biggr)}. \label{eqn:multilevel_K_means_context_first_entropic}
\end{align}
The procedure for finding a local minimum of the entropic MWMC objective function \eqref{eqn:multilevel_K_means_context_first_entropic} is summarized in Algorithm \ref{alg:multilevels_Wasserstein_means_context}. Here, we have the following comments regarding the initialization and update steps of that algorithm:
\begin{itemize}
\item The initialization of local measures $G_{j}^{(0)}$ and global set $\Hbold^{(0)}$ can be carried out in a similar fashion as those in Algorithm \ref{alg:multilevels_Wasserstein_means}. Furthermore, the initialization of $\theta_{i}^{(0)}$ can be obtained by performing K-means++ clustering proposed in \citep{Arthur-2007} on the context data $\phi_{1},\ldots,\phi_{m}$.
\item The approaches to update $G_{j}^{(t+1)}$ and $\Hbold^{(t+1)}$ are similar to those in Algorithm \ref{alg:multilevels_Wasserstein_means}. The update of $\theta_{i}^{(t+1)}$ is to find an optimal center to minimize its distance to context $\phi_{l}$ for all $l \in C_{i}$.
\end{itemize}
Similar to Theorems \ref{theorem:local_convergence_multilevel_Kmeans} and~\ref{theorem:local_convergence_local_constraint_Kmeans}, we also have the following guarantee regarding the performance of Algorithm \ref{alg:multilevels_Wasserstein_means_context}:
\begin{thm} \label{theorem:local_convergence_multilevel_Wasserstein_context}
For any $\lambda>0$, the values of objective function \eqref{eqn:multilevel_K_means_context_first_entropic} of the entropic MWMC formulation at the updates $(G_{1}^{(t)}, G_{2}^{(t)}, \ldots, G_{m}^{(t)}, \bold{H}^{(t)}, \theta_{1}^{(t)},\ldots, \theta_{M}^{(t)})$ of  Algorithm 
\ref{alg:multilevels_Wasserstein_means_context}  are decreasing.
\end{thm}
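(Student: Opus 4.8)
The plan is to show that each of the three block updates carried out in Algorithm \ref{alg:multilevels_Wasserstein_means_context} — the update of the local measures $G_j$, of the global atoms $H_i$, and of the context centers $\theta_i$ — weakly decreases the objective \eqref{eqn:multilevel_K_means_context_first_entropic}, so that the iteration as a whole is monotone. This mirrors the proofs of Theorems \ref{theorem:local_convergence_multilevel_Kmeans} and \ref{theorem:local_convergence_local_constraint_Kmeans}; the only genuinely new ingredient is the extra Euclidean context term $\|\phi_j - \theta_i\|^2$ sitting inside the minimum that defines $d_{\hat{W}_2}^2$.

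First I would remove the minimum over cluster indices by introducing a fixed-assignment surrogate. For an assignment vector $\mathbf{i}=(i_1,\ldots,i_m)\in\{1,\ldots,M\}^m$, define
\[
f\bigl(\{G_j\},\{H_i\},\{\theta_i\},\mathbf{i}\bigr)=\sum_{j=1}^m \hat{W}_2^2(G_j,P_{n_j}^j)+\frac{\lambda}{m}\sum_{j=1}^m\Bigl[\hat{W}_2^2(G_j,H_{i_j})+\|\phi_j-\theta_{i_j}\|^2\Bigr].
\]
Since $d_{\hat{W}_2}^2$ is the minimum over exactly these indices, the true objective equals $\min_{\mathbf{i}} f$. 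Writing $\mathbf{i}^{(t)}$ for the assignment computed at the start of iteration $t$ (the argmin step of the algorithm), the current objective value is $F^{(t)}:=f(\{G_j^{(t)}\},\{H_i^{(t)}\},\{\theta_i^{(t)}\},\mathbf{i}^{(t)})$, because $\mathbf{i}^{(t)}$ attains the pointwise minimum at the current iterate.

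Then I would chain inequalities through the updates with $\mathbf{i}^{(t)}$ held fixed. The $G_j$-update solves the entropic barycenter-type subproblem $\arg\min_{G_j\in\mathcal{O}_{k_j}(\Theta)}\hat{W}_2^2(G_j,P_{n_j}^j)+\frac{\lambda}{m}\hat{W}_2^2(G_j,H_{i_j}^{(t)})$, which is exactly the part of $f$ depending on $G_j$ (the context term is constant in $G_j$); warm-started at $G_j^{(t)}$, the Cuturi-type routine cannot increase this subproblem's value, so $f$ does not increase. Recomputing the assignment with the new $G_j^{(t+1)}$ can only lower $f$, since the true objective is the pointwise minimum over $\mathbf{i}$. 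Next, the $H_i$-update is the Wasserstein barycenter $\arg\min_{H_i}\sum_{l\in C_i}\hat{W}_2^2(H_i,G_l^{(t+1)})$, again independent of the context term and warm-started at $H_i^{(t)}$, so it does not increase $f$; here I would invoke Theorem \ref{theorem:upperbound_barycenter} to justify restricting the search to the finite-support space $\mathcal{O}_{l^{(t)}}(\Theta)$ while keeping the previous iterate feasible. Finally the $\theta_i$-update minimizes $\sum_{l\in C_i}\|\phi_l-\theta_i\|^2$, a Euclidean centroid available in closed form, which minimizes the context part of $f$ exactly.

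The hard part is the bookkeeping of the interleaved minimization and re-assignment steps so that these inequalities compose into $F^{(t+1)}\le F^{(t)}$; in particular one must check that each warm start remains feasible for its restricted subproblem (so that solving the subproblem cannot raise its value above the previous iterate), and that $F^{(t+1)}=\min_{\mathbf{i}}f(\{G_j^{(t+1)}\},\{H_i^{(t+1)}\},\{\theta_i^{(t+1)}\},\mathbf{i})$ is bounded above by the fixed-assignment value produced at the end of iteration $t$. The context term introduces no new difficulty beyond the observation that it is constant during the $G_j$- and $H_i$-updates and is minimized exactly by the $\theta_i$-update.
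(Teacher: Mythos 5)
Your proposal is correct and follows essentially the same route as the paper: the paper omits this proof, stating it is identical in structure to that of Theorem \ref{theorem:local_convergence_multilevel_Kmeans}, which proceeds exactly as you do --- fix the assignment indices $i_j$ computed by the argmin step so the objective equals the fixed-assignment surrogate at the current iterate, show each block update ($G_j$, then $H_i$, then the closed-form centroid for $\theta_i$) cannot increase that surrogate while the re-assignment step can only lower the pointwise minimum, and chain the inequalities. Your observation that the context term $\|\phi_j-\theta_{i_j}\|^2$ is constant in the $G_j$- and $H_i$-subproblems and is minimized exactly by the $\theta_i$-update is precisely the (only) new ingredient the paper's omitted argument would require.
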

The proof of Theorem~\ref{theorem:local_convergence_multilevel_Wasserstein_context} is similar to that of Theorem~\ref{theorem:local_convergence_multilevel_Kmeans}; therefore, it is omitted. Finally, we would like to remark that the extension of the MWMC problem to the setting in which local measures $G_{j}$ share atoms among others can be carried out in a similar fashion as that in Section \ref{Section:constraint_multilevels_Kmeans}.
\begin{algorithm}[t!]
   \caption{Multilevel Wasserstein Means with Context (MWMC)}
   \label{alg:multilevels_Wasserstein_means_context}
\begin{algorithmic}
   \STATE {\bfseries Input:} data $X_{j,i}$, context $\phi_{j}$, parameters $k_{j}$ and $M$.
   \STATE {\bfseries Output:} probability measures $G_{j}$ and elements $(H_{i},\theta_{i})$ of $\Hbold$.
   \STATE Initialize measures $G_{j}^{(0)}$, elements $(H_{i}^{(0)},\theta_{i}^{(0)})$ of $\Hbold^{(0)}$, $t=0$.
   \WHILE{$G_{j}^{(t)}, H_{i}^{(t)}, \theta_{i}^{(t)}$ have not converged}
   \STATE 1. Update $G_{j}^{(t)}$ for $1 \leq j \leq m$:
   \FOR{$j=1$ {\bfseries to} $m$}
   \STATE $i_{j} \leftarrow \mathop {\arg \min}\limits_{1 \leq u \leq M}{\hat W_{2}^{2}(G_{j}^{(t)},H_{u}^{(t)})+\|\phi_{j}-\theta_{u}^{(t)}\|^{2}}$.
   \STATE $G_{j}^{(t+1)} \leftarrow \mathop {\arg \min }\limits_{G_{j} \in \mathcal{O}_{k_{j}}(\Theta)}{\hat W_{2}^{2}(G_{j},P_{n_{j}}^{j})}+\lambda \hat W_{2}^{2}(G_{j},H_{i_{j}}^{(t)})/m$.
   \ENDFOR
   \STATE 2. Update $(H_{i}^{(t)}, \theta_{i}^{(t)})$ for $1 \leq i \leq M$:
   \FOR{$j=1$ {\bfseries to} $m$}
   \STATE $i_{j} \leftarrow \mathop {\arg \min}\limits_{1 \leq u \leq M}{\hat W_{2}^{2}(G_{j}^{(t+1)},H_{u}^{(t)})+\|\phi_{j}-\theta_{u}^{(t)}\|^{2}}$.
   \ENDFOR
   \FOR{$i=1$ {\bfseries to} $M$}
   \STATE $C_{i} \leftarrow \left\{l: i_{l}=i\right\}$ for $1 \leq i \leq M$.
   \STATE $H_{i}^{(t+1)} \leftarrow \mathop {\arg \min }\limits_{H_{i} \in \mathcal{P}_{2}(\Theta)}{\sum \limits_{l \in C_{i}}{\hat W_{2}^{2}(H_{i}, G_{l}^{(t+1)})}}$.
   \STATE $\theta_{i}^{(l+1)}=(\sum \limits_{l \in C_{i}}{\phi_{l}})/|C_{i}|$.
   \ENDFOR
   \STATE 3. $t \leftarrow t+1$.
   \ENDWHILE
\end{algorithmic}
\end{algorithm}

\section{Robust multilevel clustering with first order Wasserstein distance}\label{Section:robust_mutilevel_Wasserstein_median}
In the previous sections, we develop our models based on $W_{2}$ metric. Nevertheless, these formulations do not directly account for the outliers in the data. In order to improve the robustness, it may be desirable to make use of the first order Wasserstein distance instead of the second order one. In particular, we reformulate MWM objective function in Section \ref{Section:multilevel_Wasserstein} based on $W_{1}$ distance as follows
\begin{eqnarray}
\mathop {\inf }\limits_{\substack {G_{j} \in \mathcal{O}_{k_{j}}(\Theta), \\\mathcal{H} \in 
\mathcal{E}_{M}(\mathcal{P}_{2}(\Theta))}}{\mathop {\sum }\limits_{j=1}^{m}{W_{1}(G_{j},P_{n_{j}}^{j})}}
+ \lambda W_{1}(\mathcal{H},\dfrac{1}{m}\mathop {\sum }\limits_{j=1}^{m}{\delta_{G_{j}}}), \label{eqn:robust_multilevel_Wasserstein_median_typeone}
\end{eqnarray}
where $\lambda$ is a positive number used to balance the accumulative losses between the local and global clustering. We call the above optimization the problem of 
\textit{Multilevel Wasserstein Geometric Median} (MWGM). Similarly to the equivalence between objective functions \eqref{eqn:original_Kmeans} and \eqref{eqn:multilevel_K_means_typeone_first}, we can demonstrate that the objective function  \eqref{eqn:robust_multilevel_Wasserstein_median_typeone} is equivalent to the following simpler optimization problem 
\begin{eqnarray}
\mathop {\inf }\limits_{G_{j} \in \mathcal{O}_{k_{j}}(\Theta), \Hbold}
{\mathop {\sum }\limits_{j=1}^{m}{W_{1}(G_{j},P_{n_{j}}^{j})}+\dfrac{\lambda}{m} d_{W_{1}}(G_{j},\Hbold)}, \label{eqn:robust_multilevel_Wasserstein_median_typeone_first}
\end{eqnarray}
where $d_{W_{1}}(G,\Hbold) := \mathop {\min } \limits_{1 \leq i \leq M}{W_{1}(G,H_{i})}$ and $\Hbold=(H_{1},\ldots,H_{M})$,
with each $H_i \in \mathcal{P}_2(\Theta)$. 

For the algorithmic development, we also consider an entropic version of the MWGM, which admits the following form:
\begin{align}
	\mathop {\inf }\limits_{G_{j} \in \mathcal{O}_{k_{j}}(\Theta), \Hbold}
{\mathop {\sum }\limits_{j=1}^{m}{\hat{W}_{1}(G_{j},P_{n_{j}}^{j})}+\dfrac{\lambda}{m} d_{\hat{W}_{1}}(G_{j},\Hbold)}. \label{eqn:robust_multilevel_Wasserstein_median_typeone_first_entropic}
\end{align}
We refer to the objective function~\eqref{eqn:robust_multilevel_Wasserstein_median_typeone_first_entropic} as \emph{entropic MWGM}.  Unlike our previous algorithms, the algorithm to study objective function \eqref{eqn:robust_multilevel_Wasserstein_median_typeone_first_entropic} relies on the update with Wasserstein barycenter under the entropic version of $W_{1}$ distance, which means we need to solve the following optimization problem
\begin{eqnarray}
\overline{Q}_{N,\lambda}=\mathop {\arg \min}\limits_{Q \in \mathcal{P}_{1}(\Theta)}{\sum \limits_{i=1}^{N}{\eta_{i} \hat{W}_{1}(Q,Q_{i})}}, \label{eqn:Wasserstein_barycenter_first_order}
\end{eqnarray} 
where $Q_{1},\ldots, Q_{N} \in \mathcal{P}_{1}(\Theta)$ for $N \geq 1$ and $\eta \in \Delta_{N}$ denotes weights associated with $Q_{1},\ldots, Q_{N}$.  In Appendix~\ref{sec:appendix_e}, we provide an efficient algorithm to determine the Wasserstein barycenter over $\Ocal_{k}(\Theta)$ under entropic version of $W_{1}$ distance when $Q_{i}$'s are all discrete measures with finite number of atoms. The high level idea of our algorithm is to utilize the dual transportation formulation from \citep{Cuturi-2014} to update weights of the barycenter while we use the idea of weighted geometric median to update the atoms of the barycenter. The algorithm for finding a local minimum of problem \eqref{eqn:robust_multilevel_Wasserstein_median_typeone_first_entropic} is summarized in Algorithm \ref{alg:multilevels_Wasserstein_median}.
\begin{algorithm}[t!]
   \caption{Multilevel Wasserstein Geometric Median (MWGM)}
   \label{alg:multilevels_Wasserstein_median}
\begin{algorithmic}
   \STATE {\bfseries Input:} data $X_{j,i}$, parameters $k_{j}$ and $M$.
   \STATE {\bfseries Output:} probability measures $G_{j}$ and elements $H_{i}$ of $\Hbold$.
   \STATE Initialize measures $G_{j}^{(0)}$, elements $H_{i}^{(0)}$ of $\Hbold^{(0)}$, $t=0$.
   \WHILE{$G_{j}^{(t)}, H_{i}^{(t)}$ have not converged}
   \STATE 1. Update $G_{j}^{(t)}$ for $1 \leq j \leq m$:
   \FOR{$j=1$ {\bfseries to} $m$}
   \STATE $i_{j} \leftarrow \mathop {\arg \min}\limits_{1 \leq u \leq M}{\hat W_{1}(G_{j}^{(t)},H_{u}^{(t)})}$.
   \STATE $G_{j}^{(t+1)} \leftarrow \mathop {\arg \min }\limits_{G_{j} \in \mathcal{O}_{k_{j}}(\Theta)}{\hat W_{1}(G_{j},P_{n_{j}}^{j})}+\lambda \hat W_{1}(G_{j},H_{i_{j}}^{(t)})/m$.
   \ENDFOR
   \STATE 2. Update $H_{i}^{(t)}$ for $1 \leq i \leq M$:
   \FOR{$j=1$ {\bfseries to} $m$}
   \STATE $i_{j} \leftarrow \mathop {\arg \min}\limits_{1 \leq u \leq M}{\hat W_{1}(G_{j}^{(t+1)},H_{u}^{(t)})}$.
   \ENDFOR
   \FOR{$i=1$ {\bfseries to} $M$}
   \STATE $C_{i} \leftarrow \left\{l: i_{l}=i\right\}$ for $1 \leq i \leq M$.
   \STATE $H_{i}^{(t+1)} \leftarrow \mathop {\arg \min }\limits_{H_{i} \in \mathcal{P}_{2}(\Theta)}{\sum \limits_{l \in C_{i}}{\hat W_{1}(H_{i}, G_{l}^{(t+1)})}}$.
   \ENDFOR
   \STATE 3. $t \leftarrow t+1$.
   \ENDWHILE
\end{algorithmic}
\end{algorithm}

Similarly to the previous algorithms, we also have the following guarantee regarding the performance of Algorithm \ref{alg:multilevels_Wasserstein_median}.
\begin{thm} \label{theorem:local_convergence_multilevel_median}
For any $\lambda>0$, the values of objective function \eqref{eqn:robust_multilevel_Wasserstein_median_typeone_first_entropic} of the entropic MWGM formulation at the updates $(G_{1}^{(t)}, G_{2}^{(t)}, \ldots, G_{m}^{(t)}, \bold{H}^{(t)})$ of  Algorithm 
\ref{alg:multilevels_Wasserstein_median}  are decreasing.
\end{thm}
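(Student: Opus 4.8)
The plan is to argue, exactly as in the proof of Theorem~\ref{theorem:local_convergence_multilevel_Kmeans}, that Algorithm~\ref{alg:multilevels_Wasserstein_median} is a block coordinate descent scheme on the entropic MWGM objective
\[
f(G_1,\dots,G_m,\Hbold) \;=\; \sum_{j=1}^m \hat W_1(G_j,P_{n_j}^j) \;+\; \frac{\lambda}{m}\sum_{j=1}^m d_{\hat W_1}(G_j,\Hbold),
\]
and to show separately that Step 1 (the $G_j$-updates) and Step 2 (the $\Hbold$-update) each leave the value of $f$ no larger than before. Since $f$ is bounded below by $0$, monotone decrease is all that is claimed. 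The only structural feature I exploit repeatedly is that $d_{\hat W_1}(G,\Hbold)=\min_{1\le i\le M}\hat W_1(G,H_i)$ is a pointwise minimum, so replacing the minimizing index by any fixed index can only increase it.

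For Step 1, fix $\Hbold^{(t)}$ and treat the $G_j$ separately, since $f$ decouples across $j$ once $\Hbold$ is frozen. For each $j$ the algorithm first sets $i_j=\arg\min_{u}\hat W_1(G_j^{(t)},H_u^{(t)})$, so that $\hat W_1(G_j^{(t)},H_{i_j}^{(t)})=d_{\hat W_1}(G_j^{(t)},\Hbold^{(t)})$, and then minimizes $\hat W_1(G_j,P_{n_j}^j)+\tfrac{\lambda}{m}\hat W_1(G_j,H_{i_j}^{(t)})$ over $G_j\in\mathcal O_{k_j}(\Theta)$. Using the min-bound $d_{\hat W_1}(G_j^{(t+1)},\Hbold^{(t)})\le \hat W_1(G_j^{(t+1)},H_{i_j}^{(t)})$ on the output, and the fact that $G_j^{(t)}$ is feasible for this subproblem, I obtain
\[
\hat W_1(G_j^{(t+1)},P_{n_j}^j)+\tfrac{\lambda}{m}d_{\hat W_1}(G_j^{(t+1)},\Hbold^{(t)})
\;\le\;
\hat W_1(G_j^{(t)},P_{n_j}^j)+\tfrac{\lambda}{m}d_{\hat W_1}(G_j^{(t)},\Hbold^{(t)}),
\]
and summing over $j$ gives $f(G^{(t+1)},\Hbold^{(t)})\le f(G^{(t)},\Hbold^{(t)})$.

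For Step 2, hold $G^{(t+1)}$ fixed, so only the global term $\tfrac{\lambda}{m}\sum_j d_{\hat W_1}(G_j^{(t+1)},\Hbold)$ changes. The reassignment $i_j=\arg\min_u \hat W_1(G_j^{(t+1)},H_u^{(t)})$ and the clusters $C_i=\{l:i_l=i\}$ rewrite $\sum_j d_{\hat W_1}(G_j^{(t+1)},\Hbold^{(t)})=\sum_i\sum_{l\in C_i}\hat W_1(G_l^{(t+1)},H_i^{(t)})$. Updating $H_i^{(t+1)}$ as the entropic $W_1$-barycenter of $\{G_l^{(t+1)}:l\in C_i\}$ makes each inner sum no larger, i.e. $\sum_{l\in C_i}\hat W_1(G_l^{(t+1)},H_i^{(t+1)})\le \sum_{l\in C_i}\hat W_1(G_l^{(t+1)},H_i^{(t)})$; applying the min-bound $d_{\hat W_1}(G_j^{(t+1)},\Hbold^{(t+1)})\le \hat W_1(G_j^{(t+1)},H_{i_j}^{(t+1)})$ and resumming over the clusters then yields $f(G^{(t+1)},\Hbold^{(t+1)})\le f(G^{(t+1)},\Hbold^{(t)})$. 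Chaining the two inequalities gives the claim.

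The step that needs genuine care --- and the only real difference from Theorem~\ref{theorem:local_convergence_multilevel_Kmeans} --- is justifying the two minimization inequalities above for the \emph{entropic $W_1$} subproblems. The $G_j$-subproblem and the $H_i$-barycenter subproblem are in general nonconvex and are solved only to a local optimum, so monotonicity requires that the inner routine be warm-started at the current iterate ($G_j^{(t)}$, resp. $H_i^{(t)}$) and be itself a descent procedure, so that its output can never exceed the value at its initialization. For the $W_1$-barycenter this is precisely the algorithm developed in Appendix~\ref{sec:appendix_e} (dual-transport weight updates combined with weighted-geometric-median atom updates); I would invoke its per-iteration descent property to close the argument, which is the one place where the $W_1$ proof does not simply transcribe the $W_2$ proof.
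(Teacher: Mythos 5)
Your proposal is correct and takes essentially the same approach as the paper: the paper omits this proof, deferring to the proof of Theorem~\ref{theorem:local_convergence_multilevel_Kmeans}, which is exactly your two-block descent argument (exact subproblem minimization, feasibility of the current iterate, and the min-index bound on $d_{\hat W_1}(\cdot,\Hbold)$, applied first to the $G_j$-updates and then to the $\Hbold$-update). The only differences are minor: the paper additionally argues \emph{strict} decrease unless the iterates are unchanged (via strong convexity of the mass-restricted subproblems), whereas you settle for monotone non-increase, and your closing caveat about warm-starting the locally solved entropic $W_1$ subproblems is a sound refinement of a point the paper glosses over by treating the updates as exact argmins.
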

\noindent
The proof of Theorem~\ref{theorem:local_convergence_multilevel_median} is similar to that of Theorem~\ref{theorem:local_convergence_multilevel_Kmeans}; therefore, it is omitted.
\section{Consistency results} \label{Section:consistency_multilevel_Kmeans}
We proceed to establish consistency for the estimators introduced in the previous sections. For the brevity of the presentation, we only focus on the MWM method while the consistency for MWMS, MWMC, and MWGM can be obtained in the similar fashion. The consistency of the estimators on the MWGM method is presented in Appendix~\ref{sec:appendix_f}. Fix $m$ and assume that $P^{j}$ is the true distribution of data $X_{j,i}$ for $j = 1,\ldots,m$. Write $\vec{G}=(G_{1},\ldots,G_{m})$ and $\vec{n}=(n_{1},\ldots,n_{m})$. We say $\vec{n} \to \infty$ if $n_{j} \to \infty$ for $j=1,\ldots, m$. Define the following functions
\begin{eqnarray}
f_{\vec{n}}(\vec{G},\Hcal)=\mathop {\sum }\limits_{j=1}^{m}{W_{2}^{2}(G_{j},P_{n_{j}}^{j})}+\lambda W_{2}^{2}(\Hcal,\dfrac{1}{m}\mathop {\sum }\limits_{j=1}^{m}{\delta_{G_{j}}}), \nonumber \\
f(\vec{G},\Hcal)=\mathop {\sum }\limits_{j=1}^{m}{W_{2}^{2}(G_{j},P^{j})}+\lambda W_{2}^{2}(\Hcal,\dfrac{1}{m}\mathop {\sum }\limits_{j=1}^{m}{\delta_{G_{j}}}), \nonumber
\end{eqnarray}
where $G_{j} \in \mathcal{O}_{k_{j}}(\Theta)$, $\Hcal \in \mathcal{E}_{M}(\mathcal{P}_{2}(\Theta))$ for $1 \leq j \leq m$. 
The first consistency property of the WMW formulation is as follows.
\begin{thm} \label{theorem:objective_consistency_multilevel_Wasserstein_means} 
Given that $P^{j} \in \mathcal{P}_{2}(\Theta)$ for $1 \leq j \leq m$, then the following holds:

\noindent
(i) There holds almost surely, as $\vec{n} \to \infty$
\begin{eqnarray}
\mathop {\inf }\limits_{\substack {G_{j} \in \mathcal{O}_{k_{j}}(\Theta), \\ \Hcal \in \mathcal{E}_{M}(\mathcal{P}_{2}(\Theta))}}f_{\vec{n}}(\vec{G},\Hcal) - \mathop {\inf }\limits_{\substack {G_{j} \in \mathcal{O}_{k_{j}}(\Theta), \\ \Hcal \in \mathcal{E}_{M}(\mathcal{P}_{2}(\Theta))}}f(\vec{G},\Hcal) \rightarrow 0. \nonumber
\end{eqnarray}
\noindent
(ii) If $\Theta$ is bounded, then 
\begin{align*}
\left| \mathop {\inf }\limits_{\substack {G_{j} \in \mathcal{O}_{k_{j}}(\Theta), \\ \Hcal \in \mathcal{E}_{M}(\mathcal{P}_{2}(\Theta))}}f_{\vec{n}}^{1/2}(\vec{G},\Hcal) - \mathop {\inf }\limits_{\substack {G_{j} \in \mathcal{O}_{k_{j}}(\Theta), \\ \Hcal \in \mathcal{E}_{M}(\mathcal{P}_{2}(\Theta))}}f^{1/2}(\vec{G},\Hcal) \right| = \begin{cases} O_{P}(m \cdot n_{\vee}^{-1/d}) \ \text{when} \ d \geq 5, \\
  O_{P}(m \cdot n_{\vee}^{-1/4}) \ \text{when} \ d \leq 4,
  \end{cases}
\end{align*}
where $n_{\vee} = \max_{1 \leq i \leq m} n_{i}$.
\end{thm}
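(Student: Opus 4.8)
The plan is to exploit the fact that both $f_{\vec{n}}$ and $f$ are \emph{squared} quantities of Euclidean type, so that their square roots behave like norms and obey a triangle inequality. The crucial observation is that the empirical measures $P_{n_j}^j$ and the true measures $P^j$ enter only through the first $m$ summands, whereas the global term $W_2^2(\Hcal,\frac1m\sum_j\delta_{G_j})$ is \emph{identical} in the two functionals. For a fixed feasible pair $(\vec{G},\Hcal)$ I would introduce the vectors
\begin{align*}
u &= \bigl(W_2(G_1,P_{n_1}^1),\ldots,W_2(G_m,P_{n_m}^m),\sqrt{\lambda}\,W_2(\Hcal,\tfrac1m\textstyle\sum_j\delta_{G_j})\bigr),\\
v &= \bigl(W_2(G_1,P^1),\ldots,W_2(G_m,P^m),\sqrt{\lambda}\,W_2(\Hcal,\tfrac1m\textstyle\sum_j\delta_{G_j})\bigr),
\end{align*}
so that $\|u\|_2=f_{\vec{n}}^{1/2}(\vec{G},\Hcal)$ and $\|v\|_2=f^{1/2}(\vec{G},\Hcal)$. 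The reverse triangle inequality for $\|\cdot\|_2$, together with the Wasserstein triangle inequality $|W_2(G_j,P_{n_j}^j)-W_2(G_j,P^j)|\le W_2(P_{n_j}^j,P^j)$ applied coordinatewise (the last coordinates cancel), then gives the uniform bound
\begin{align}
\bigl|f_{\vec{n}}^{1/2}(\vec{G},\Hcal)-f^{1/2}(\vec{G},\Hcal)\bigr|\le\Bigl(\sum_{j=1}^m W_2^2(P_{n_j}^j,P^j)\Bigr)^{1/2}\le\sum_{j=1}^m W_2(P_{n_j}^j,P^j), \label{eq:plan_uniform}
\end{align}
valid for every $(\vec{G},\Hcal)$ over the common feasible set and requiring no boundedness of $\Theta$.

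For part (ii) this is essentially the whole argument. Since $t\mapsto\sqrt t$ is increasing we have $\inf f^{1/2}=(\inf f)^{1/2}$, and since $|\inf g_1-\inf g_2|\le\sup|g_1-g_2|$, inequality~\eqref{eq:plan_uniform} yields $|\inf f_{\vec{n}}^{1/2}-\inf f^{1/2}|\le\sum_j W_2(P_{n_j}^j,P^j)$. I would then invoke the classical rate of convergence of empirical measures in the Wasserstein distance (Fournier--Guillin): on a bounded $\Theta\subset\mathbb{R}^d$ one has $W_2(P_{n_j}^j,P^j)=O_P(n_j^{-1/d})$ when $d\ge5$ and $O_P(n_j^{-1/4})$ when $d\le4$, the phase transition at $d=4$ being exactly what produces the two regimes of the statement. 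Bounding the sum by $m$ times the worst single-group rate then gives the displayed $O_P(m\,n_\vee^{-1/d})$ and $O_P(m\,n_\vee^{-1/4})$ bounds (the factor $m$ is the price of this crude aggregation, and the relevant sample size is the one making $W_2$ largest; this matches the stated $n_\vee$ under comparable group sizes).

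For part (i) I would first note $\inf f<\infty$: any $k_j$-quantization of $P^j$ together with the $\Hcal$ it induces has finite value because $P^j\in\mathcal{P}_2(\Theta)$. Taking the infimum over the common domain in~\eqref{eq:plan_uniform} gives $|(\inf f_{\vec{n}})^{1/2}-(\inf f)^{1/2}|\le\bigl(\sum_j W_2^2(P_{n_j}^j,P^j)\bigr)^{1/2}$. Because each $P^j\in\mathcal{P}_2(\Theta)$, the almost sure convergence of empirical measures in $W_2$ gives $W_2(P_{n_j}^j,P^j)\to0$ a.s.\ as $n_j\to\infty$, so the right-hand side vanishes a.s.\ as $\vec{n}\to\infty$; in particular $(\inf f_{\vec{n}})^{1/2}$ is bounded a.s. I would then factor
\begin{align*}
\inf f_{\vec{n}}-\inf f=\bigl((\inf f_{\vec{n}})^{1/2}-(\inf f)^{1/2}\bigr)\bigl((\inf f_{\vec{n}})^{1/2}+(\inf f)^{1/2}\bigr),
\end{align*}
where the first factor tends to $0$ a.s.\ and the second is bounded a.s., so the product tends to $0$ a.s.

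The only step requiring genuine care is the reduction to square roots in~\eqref{eq:plan_uniform}: recognizing that each objective is a squared $\ell^2$-type norm, that the barycentric global term is common and hence drops out of $u-v$, and that the per-coordinate Wasserstein triangle inequality then controls everything by $\sum_j W_2(P_{n_j}^j,P^j)$ \emph{without} any compactness assumption. Once this is in place, part (i) is a soft almost-sure convergence argument and part (ii) is a direct appeal to the known empirical-measure convergence rates; the main external inputs are precisely the almost sure $W_2$-consistency of empirical measures (for (i)) and their finite-sample Fournier--Guillin rates on a bounded domain (for (ii)).
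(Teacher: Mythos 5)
Your proof is correct and follows essentially the same route as the paper's: both arguments reduce the difference of the square-rooted infima to the uniform bound $\sum_{j=1}^{m} W_2(P_{n_j}^j,P^j)$ (you via the $\ell^2$ reverse triangle inequality, the paper via an $\epsilon$-near-optimizer together with a difference-of-squares computation that likewise cancels the common global term), and then both invoke almost-sure $W_2$-consistency of empirical measures for part (i) and the Fournier--Guillin rates for part (ii). Your explicit factoring step in (i) and your caveat that the aggregated rate really involves the smallest group size (matching the stated $n_{\vee}$ only for comparable group sizes) are details the paper leaves implicit, but the substance is identical.
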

The proof of Theorem~\ref{theorem:objective_consistency_multilevel_Wasserstein_means} is in Appendix~\ref{sec:appendix_b}. The next theorem establishes that the ``true'' global and local clusters can be
recovered. To this end, assume that for each $\vec{n}$ there is an optimal solution $
(\widehat{G}_{1}^{n_{1}},\ldots,\widehat{G}_{m}^{n_{m}},\widehat{\Hcal}^{\vec{n}})$ or in 
short $(\vec{\widehat{G}}^{\vec{n}},\Hcal^{\vec{n}})$ of the objective function 
\eqref{eqn:multilevel_Kmeans_typeone}. Moreover, there exist a (not necessarily unique)
optimal solution minimizing $f(\vec{G},\Hcal)$ over $G_{j} \in \mathcal{O}_{k_{j}}(\Theta)$ and $\Hcal \in 
\mathcal{E}_{M}(\mathcal{P}_{2}(\Theta))$. Let $\mathcal{F}$ be the collection 
of such optimal solutions. For any $G_{j} \in \mathcal{O}_{k_{j}}(\Theta)$ and $\Hcal \in 
\mathcal{E}_{M}(\mathcal{P}_{2}(\Theta))$, we define
\begin{eqnarray}
d(\vec{G},\Hcal,\mathcal{F})=\inf \limits_{(\vec{G}^{0}, \Hcal^{0}) \in \mathcal{F}}\sum \limits_{j=1}^{m}{W_{2}^{2}(G_{j},G_{j}^{0})} \nonumber
+W_{2}^{2}(\Hcal,\Hcal^{0}). \nonumber
\end{eqnarray}
Given the above assumptions, we have the following result regarding the convergence of $(\widehat{\vec{G}}^{\vec{n}},\Hcal^{\vec{n}})$:
\begin{thm} \label{theorem:convergence_measures_multilevel_Wasserstein_means}
Assume that $\Theta$ is bounded and $P^{j} \in \mathcal{P}_{2}(\Theta)$ for all $1 \leq j 
\leq m$. Then, we have $d(\vec{\widehat{G}}^{\vec{n}},\widehat{\Hcal}
^{\vec{n}},\mathcal{F}) \to 0$ as $\vec{n} \to \infty$ almost surely.
\end{thm}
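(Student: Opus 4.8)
The plan is to treat this as a standard argmin-consistency statement: having already controlled the \emph{values} of the objective in Theorem~\ref{theorem:objective_consistency_multilevel_Wasserstein_means}, I would upgrade this to convergence of the \emph{minimizers} via a compactness-plus-uniform-convergence argument of the type used for $K$-means consistency in \citep{Pollard-1982}. The three ingredients are: (a) the parameter space can be taken compact; (b) the empirical objective $f_{\vec n}$ converges to the population objective $f$ uniformly over this compact set, almost surely; and (c) $f$ is continuous with a nonempty compact minimizing set $\mathcal F$. Granting these, a subsequence argument forces every limit point of $(\widehat{\vec G}^{\vec n},\widehat{\Hcal}^{\vec n})$ into $\mathcal F$, which is exactly the claim $d(\widehat{\vec G}^{\vec n},\widehat{\Hcal}^{\vec n},\mathcal F)\to 0$.

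For (a), I would replace $\Theta$ by its compact closure $\overline\Theta$, so that each $\mathcal O_{k_j}(\overline\Theta)$ is a closed subset of the $W_2$-compact space $\mathcal P_2(\overline\Theta)$ (a weak limit of measures with at most $k_j$ atoms again has at most $k_j$ atoms) and hence compact, and likewise the measure-of-measures space is compact in its induced $W_2$ metric. For (b), the crucial simplification is that the global term $\lambda W_2^2(\Hcal,\frac1m\sum_j\delta_{G_j})$ is \emph{identical} in $f_{\vec n}$ and $f$, so their difference reduces to $\sum_{j=1}^m\bigl[W_2^2(G_j,P_{n_j}^j)-W_2^2(G_j,P^j)\bigr]$. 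Using $|a^2-b^2|=|a-b|\,|a+b|$, the triangle inequality $|W_2(G_j,P_{n_j}^j)-W_2(G_j,P^j)|\le W_2(P_{n_j}^j,P^j)$, and the bound $W_2(G_j,\cdot)\le\operatorname{diam}(\Theta)$ valid because $\Theta$ is bounded, I get
\begin{align*}
\sup_{\vec G,\Hcal}\bigl|f_{\vec n}(\vec G,\Hcal)-f(\vec G,\Hcal)\bigr|\ \le\ 2\operatorname{diam}(\Theta)\sum_{j=1}^m W_2(P_{n_j}^j,P^j).
\end{align*}
Since the $X_{j,i}$ are i.i.d.\ from $P^j$ and $\Theta$ is bounded, $W_2(P_{n_j}^j,P^j)\to 0$ almost surely for each $j$ (weak convergence of empirical measures together with uniform integrability on a bounded domain), so the right-hand side tends to $0$ almost surely and (b) holds.

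With (a)--(c) in place I would run the contradiction argument. Suppose $d(\widehat{\vec G}^{\vec n},\widehat{\Hcal}^{\vec n},\mathcal F)\not\to0$ on an event of positive probability; then along some subsequence it stays $\ge\varepsilon>0$, and by compactness a further subsequence of $(\widehat{\vec G}^{\vec n},\widehat{\Hcal}^{\vec n})$ converges to some $(\vec G^\ast,\Hcal^\ast)$. Uniform convergence gives $f_{\vec n}(\widehat{\vec G}^{\vec n},\widehat{\Hcal}^{\vec n})-f(\widehat{\vec G}^{\vec n},\widehat{\Hcal}^{\vec n})\to0$, continuity of $f$ gives $f(\widehat{\vec G}^{\vec n},\widehat{\Hcal}^{\vec n})\to f(\vec G^\ast,\Hcal^\ast)$, and optimality together with Theorem~\ref{theorem:objective_consistency_multilevel_Wasserstein_means}(i) gives $f_{\vec n}(\widehat{\vec G}^{\vec n},\widehat{\Hcal}^{\vec n})=\inf f_{\vec n}\to\inf f$. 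Combining these, $f(\vec G^\ast,\Hcal^\ast)=\inf f$, so $(\vec G^\ast,\Hcal^\ast)\in\mathcal F$ and $d(\cdot,\mathcal F)\to0$ along this subsequence, contradicting $d\ge\varepsilon$.

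I expect the main obstacle to be the compactness bookkeeping for the abstract measure-of-measures space and, relatedly, the gap between $\mathcal E_M$ (\emph{exactly} $M$ atoms) and its closure: a convergent sequence $\Hcal^{\vec n}\in\mathcal E_M(\mathcal P_2(\overline\Theta))$ may converge to a limit with \emph{fewer} than $M$ support atoms (atoms can collide in the limit), which a priori lies in $\mathcal O_M\setminus\mathcal E_M$. To keep the limit point inside the feasible set defining $\mathcal F$, I would either show that minimizing over $\mathcal E_M$ and over $\mathcal O_M$ yields the same infimum and the same closed minimizing set, or carry the $\mathcal O_M$ relaxation throughout. The remaining routine-but-careful check needed for (c) is that the map $\vec G\mapsto\frac1m\sum_j\delta_{G_j}$ and the induced $W_2$ on $\mathcal P_2(\mathcal P_2(\overline\Theta))$ are jointly continuous.
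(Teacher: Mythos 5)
Your proposal is correct and takes essentially the same route as the paper: Pollard-style argmin consistency obtained by combining compactness of the parameter space (from boundedness of $\Theta$), convergence of the population objective evaluated at the estimators to the optimal value $\inf f$ (via Theorem~\ref{theorem:objective_consistency_multilevel_Wasserstein_means} together with the fact that $f_{\vec{n}}-f$ vanishes uniformly, exactly your bound $2\operatorname{diam}(\Theta)\sum_{j}W_{2}(P_{n_j}^{j},P^{j})$), and a compactness step to pass from value convergence to convergence of minimizers --- the paper phrases this last step with a separation set $\mathcal{A}(\epsilon)=\{(\vec{G},\Hcal):d(\vec{G},\Hcal,\mathcal{F})\geq\epsilon\}$ on which the infimum of $f$ is strictly above the minimum, whereas you argue via subsequential limit points, and the two are interchangeable. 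The one place you are more careful than the paper: the non-closedness of $\mathcal{E}_{M}(\mathcal{P}_{2}(\Theta))$ that you flag (atoms can merge in the limit) is a genuine subtlety, which the paper glosses over by asserting outright that $\mathcal{E}_{M}(\mathcal{P}_{2}(\Theta))$ is compact, a claim that strictly speaking holds only for its closure $\mathcal{O}_{M}(\mathcal{P}_{2}(\Theta))$.
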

\noindent
The proof of Theorem~\ref{theorem:convergence_measures_multilevel_Wasserstein_means} is in Appendix~\ref{sec:appendix_b}. 
\paragraph{Remark:}(i) The assumption that $\Theta$ is bounded is just for the convenience of 
proof argument. From the work of~\cite{Pollard-1981}, the consistency of K-means clustering is established under no assumptions on the compactness of the parameters. The idea of this paper is to show that when the sample size is sufficiently large, the optimal clusters are contained in a ball centered at the origin of sufficiently large radius. From here, we can reduce the consistency analysis to compact subset argument with the cluster centers. To extend the result of Theorem~\ref{theorem:convergence_measures_multilevel_Wasserstein_means} when $\Theta = \mathbb{R}^{d}$, we also need a similar step to prove that the supports of local and global measures from multilevel Wasserstein means will be in a ball of sufficiently large radius when the sample sizes of local groups and the number of groups are sufficiently large. It is beyond the proof technique of the current paper and we leave it for the future work. (ii) If $|\mathcal{F}|=1$, i.e. there exists a unique optimal solution $(\vec{G}^{0},\Hcal^{0})$ minimizing $f(\vec{G},\Hcal)$ over $G_{j} \in \mathcal{O}_{k_{j}}(\Theta)
$ and $\Hcal \in \mathcal{E}_{M}(\mathcal{P}_{2}(\Theta))$, the result of Theorem 
\ref{theorem:convergence_measures_multilevel_Wasserstein_means} implies that $W_{2}
(\widehat{G}_{j}^{n_{j}},G_{j}^{0}) \to 0$ for $1 \leq j \leq m$ and $W_{2}(\widehat{\Hcal}
^{\vec{n}},\Hcal^{0}) \to 0$ as $\vec{n} \to \infty$.

\section{Empirical studies} \label{Section:data_analysis}
In this section, we present extensive simulation studies with our models under both synthetic data (Section~\ref{sec:experiment_synthetic}) and real data (Section~\ref{sec:experiment_real})\footnote{Code is available at \url{https://github.com/viethhuynh/wasserstein-means}}. In our experiments, we used a fixed value of all entropic regularization parameters $\tau=10$. For the regularized term $\lambda$, we heuristically choose to balance global and local terms, i.e., $\lambda \approx  \nicefrac{W_{2}^{2}(\mathcal{H},\nicefrac{1}{m}\mathop {\sum _{j=1}^{m}{\delta_{G_{j}}})}}{\sum_{j=1}^{m}{W_{2}^{2}(G_{j},P_{n_{j}}^{j})}}$. Before presenting experimental results, we describe our methodology to scale up our algorithms using a distributed computation framework in the following section.
\subsection{Parallel implementation with Apache Spark} \label{sec:experiment_spark}
The running time complexity of the MWM and MWMS algorithms will be increased linearly with $m$ --- the number of data groups. When the number of data groups is large (e.g., tens of thousands or millions), the running time for the learning routine in these algorithms is dramatically increased. One possible solution to adapt MWM and MWMS algorithms for large-scale settings is to parallelize the learning process. Fortunately, Apache Spark provides an elegant framework to help us to accelerate running time with the map-reduce mechanism. We use the Apache Spark framework to simultaneously update the clustering index and atoms of each data group in these algorithms. We summarize the pseudo-code of Map-Reduce procedures for MWM in Algorithm~\ref{alg:mapreduce}. Our parallelized algorithm can speed up the learning algorithms up to some orders-of-magnitude, which allows us to scale up the learning problem toward large real-world datasets containing millions of groups. The orders-of-magnitude speedup depends on the number of processors (or cores) that the systems process. For instance, if we have a cluster with hundreds or thousands of cores, the orders of magnitude can be to 2 or 3. Our implementation in this paper focuses on a large number of groups, therefore, it does not help to improve running time when the number of supports is large. However, to deal with a large number of supports, we can use the GPU implementation of Cuturi's algorithms. Since Apache Spark supports distributed systems running with GPU platforms, we can obtain scalable algorithms with a large number of groups as well as the vast size of the supports.

\begin{algorithm}[t!]
   \caption{MapReduce for Multilevel Wasserstein Means (MWM)}
   \label{alg:mapreduce}
\begin{algorithmic}
    \STATE {\bfseries method} MAP (j,$G_j^{(t)}, \Hbold^{(t)}$)
     \STATE \qquad{} \qquad{} $i_{j} \leftarrow \mathop {\arg \min}\limits_{1 \leq u \leq M}{\hat W_{2}^{2}(G_{j}^{(t)},H_{u}^{(t)})}$.
   \STATE \qquad{} \qquad{} $G_{j}^{(t+1)} \leftarrow \mathop {\arg \min }\limits_{G_{j} \in \mathcal{O}_{k_{j}}(\Theta)}{\hat W_{2}^{2}(G_{j},P_{n_{j}}^{j})}+ \lambda \hat W_{2}^{2}(G_{j},H_{i_{j}}^{(t)})/m$.
    \STATE \qquad{} \qquad{} return ($i_{j},G_{j}^{(t+1)}$ )
   \STATE
    \STATE {\bfseries method} REDUCE ($\{i_j\},\{G_j^{(t+1)}\}$)
       \STATE \qquad{} \qquad{}{\bfseries for} {$i=1$ {\bfseries to} $M$}
   \STATE\qquad{} \qquad{} \qquad{}$C_{i} \leftarrow \left\{l: i_{l}=i\right\}$ for $1 \leq i \leq M$.
   \STATE\qquad{} \qquad{} \qquad{}$H_{i}^{(t+1)} \leftarrow \mathop {\arg \min }\limits_{H_{i} \in \mathcal{P}_{2}(\Theta)}{\sum \limits_{l \in C_{i}}{\hat W_{2}^{2}(H_{i}, G_{l}^{(t+1)})}}$.
\STATE \qquad{} \qquad{}{\bfseries endfor}    
\STATE \qquad{} \qquad{}return \Hbold
  
\end{algorithmic}
\end{algorithm}

\subsection{Synthetic data}
\label{sec:experiment_synthetic}
\begin{figure*}[ht]
\centerline{\includegraphics[width=0.99\textwidth]{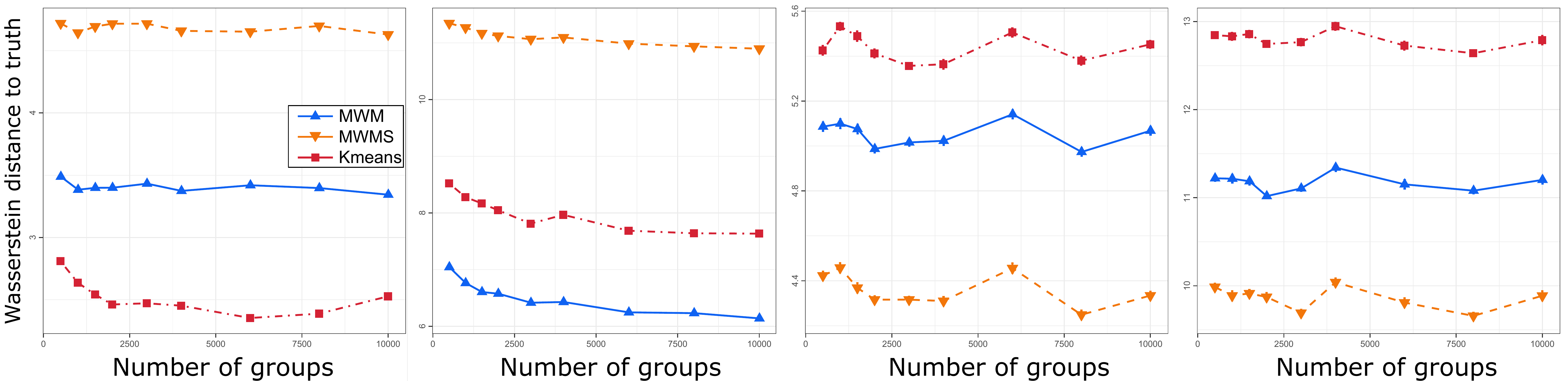}}
\caption{Data with a lot of small groups: (a) NC data with constant variance; (b) NC data with non-constant variance; (c)
LC data with constant variance; (d) LC data with non-constant variance}
\label{fig:simul_M}
\end{figure*}

\begin{figure*}[ht]
\centerline{\includegraphics[width=0.99 \textwidth]{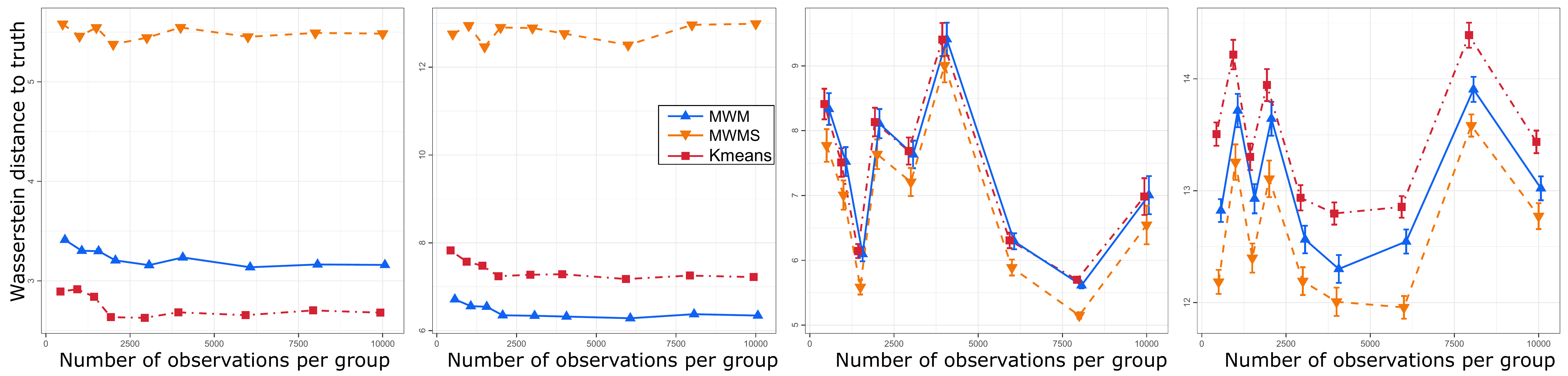}}
\caption{Data with a few big groups: (a) NC data with constant variance; (b) NC data with non-constant variance; (c) LC
data with constant variance; (d) LC data with non-constant variance}
\label{fig:simul_N}
\end{figure*}
First, we are interested in evaluating the effectiveness of all clustering algorithms in the paper by considering different synthetic data generating processes. Unless otherwise specified, we set the number of groups $m=50$, number of observations per group $n_{j}=50$, dimensions of each observation $d=10$, number of global clusters $M=5$ with 6 atoms. 
\paragraph{Multilevel Wasserstein means:} For Algorithm \ref{alg:multilevels_Wasserstein_means} (MWM), local measures $G_{j}$ have 5 atoms each. We call this no-constraint (NC) data; for Algorithm  \ref{alg:local_constraint_multilevels_Wasserstein_means} (MWMS) the number of atoms in the constraint set $S_K$ is 50. These atoms are shared among local measures $G_{j}$ each of which contains 5 atoms. We call this local-constraint (LC) data. As a benchmark for the comparison, we will use a basic 3-stage K-means approach, (the details of data generation and 3-stage K-means algorithm can be found in Appendix~\ref{sec:appendix_d}). The Wasserstein distances between the estimated distributions (i.e. $\hat 
G_1,\ldots,\hat G_m$; $\hat H_1,\ldots,\hat H_M$) and the data generating ones will be used as the comparison metric. 

Recall that the MWM formulation does not impose constraints on the atoms of $G_{i}$ whilst the MWMS formulation explicitly enforces the sharing of atoms across these measures. We use multiple layers of mixtures while adding Gaussian noise at each layer to generate global and local clusters and the no-constraint (NC) data. We vary the number of groups $m$ from 500 to 10,000. We notice that the 3-stage K-means algorithm performs best when there is no constraint structure \emph{and} the variance is constant across all clusters (Figures 
\ref{fig:simul_M}a and \ref{fig:simul_N}a) --- this is, not surprisingly, a favorable setting for the basic K-means method. As soon as we depart from the (unrealistic) constant variance no-sharing assumption, our MWM algorithm starts to outperform the basic 3-stage K-means (Figures 1b and 2b).
The superior performance is most prominent with local-constraint (LC) data (with or without constant variance condition) 
(see Figures \ref{fig:simul_M}(c, d)).
It is worth noting that even when the group variances are constant, the 3-stage K-means is no longer
effective because it fails to account for the shared structure. 
When $m=50$ and group sizes are larger, we set $S_K=15$. The results reported in 
Figures \ref{fig:simul_N}(c, d) demonstrate the effectiveness and flexibility of our algorithms.
\begin{figure}[ht]
\centering{}\includegraphics[width=0.99\textwidth]{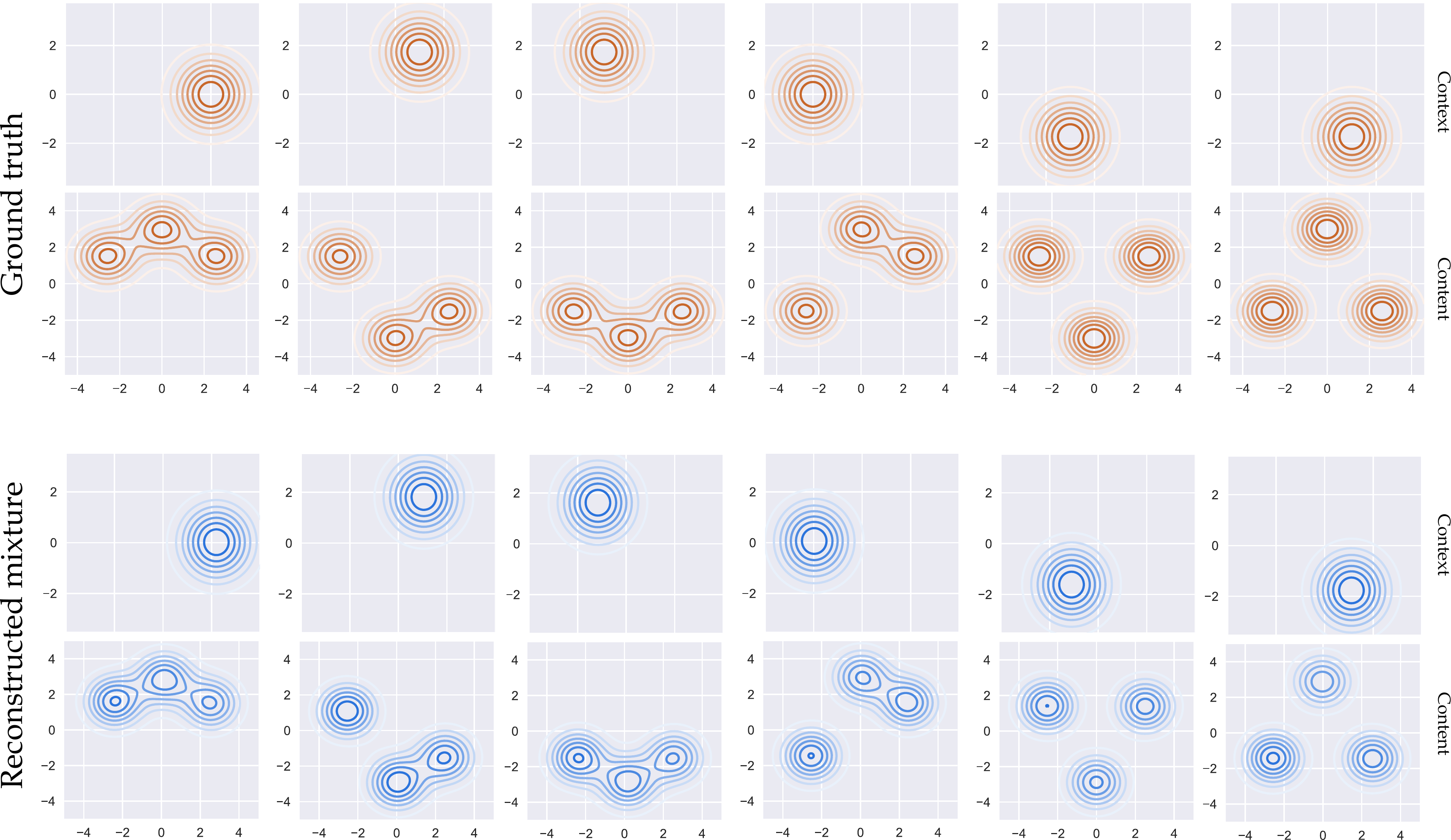}\caption{Synthetic data with context.\label{fig: synthetic_context}}

\end{figure}
\paragraph{Multilevel Wasserstein means with context:} 
Now, we demonstrate the capability of the MWMC framework to model the synthetic multilevel data with context. There are six clusters, each of which is denoted as a column in the ground truth row in Figure \ref{fig: synthetic_context}. In each cluster, the content data (bottom square) is selected as a mixture of three Gaussian components from a set of six shared Gaussian components whilst the context data (top square) is a Gaussian distribution selected from six predefined Gaussian components. Visually, the top two rows in Figure \ref{fig: synthetic_context} show the ground truth data including context (a Gaussian distribution) and content (a Gaussian mixture model). We uniformly generate 3000 groups of data from the above six clusters. Each group belongs to one of the six aforementioned clusters. Once the clustering index of a data group has been determined, we generate 100 data points from the corresponding mixture of Gaussians and a corresponding context observation. We run the synthetic data with the MWMC algorithm. The bottom two rows in Figure \ref{fig: synthetic_context} depict the reconstructed context and content data which are similar to the ground truth.


\subsection{Real-world data analysis}
\label{sec:experiment_real}
We now apply our multilevel clustering algorithms to two real-world datasets: LabelMe and StudentLife. 

\textbf{LabelMe dataset}\footnote{http://labelme.csail.mit.edu} consists of $2,688$ annotated images which are classified into 8 scene categories including \emph{tall buildings, inside city, street, highway, coast, open country, mountain,} and \emph{forest} \citep{Oliva-2001}. Each image contains multiple annotated regions. Each region, which is annotated by users, represents an object in the image.  As shown in Figure \ref{fig: LabelMeExample}, the left image is an example from \emph{open country} category and contains 4 regions while the right panel shows an image of \emph{tall buildings} category including 16 regions. Note that the regions in each image can
be overlapped. Removing the images containing less than 4 regions, we obtain $1,800$ images for our experiments. We then extract the GIST feature \citep{Oliva-2001} for each region in an image. GIST is a 512-dimensional visual descriptor representing perceptual dimensions and oriented spatial structures of a scene. We further use PCA to project GIST features onto 30 dimensions. Consequently, we obtain $1,800$ ``documents'', each of which contains regions as observations. Each region is represented by a 30-dimensional vector. We now can perform clustering regions in every image since they are visually correlated. In the next level of clustering, we can cluster images into scene categories. Each image in the LabelMe dataset has associated tags. These tags of each image are represented as a count context vector
\begin{figure}[ht]
\centering{}\includegraphics[width=0.17\textwidth]{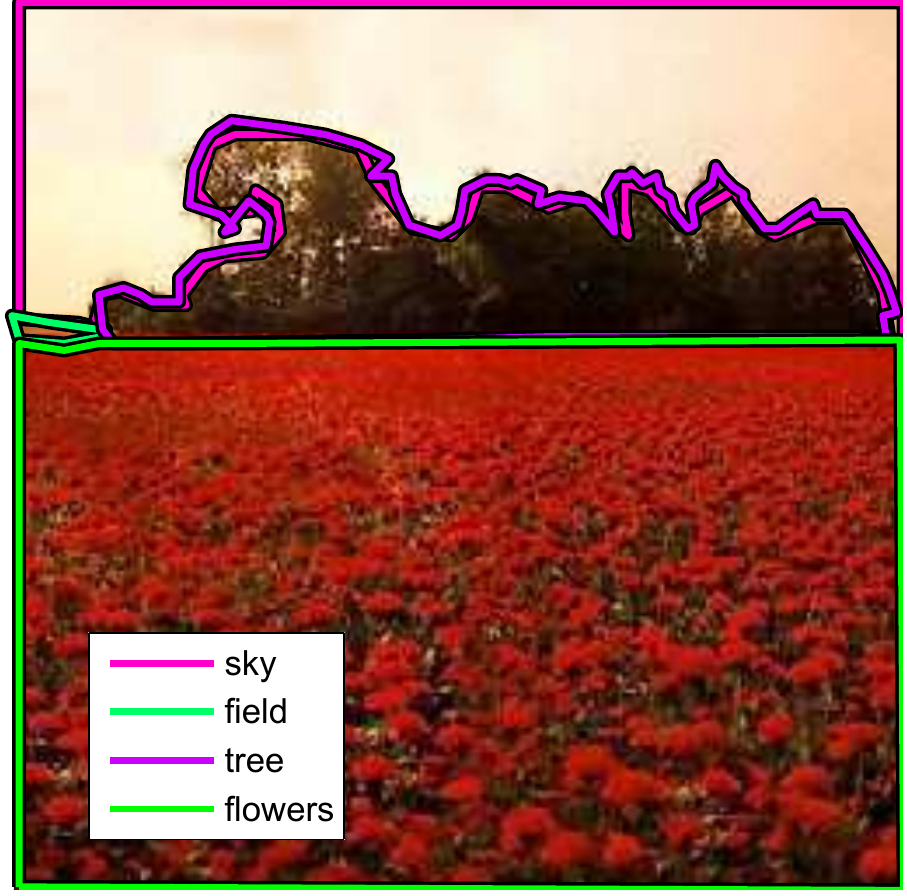}\qquad{}\qquad{}\includegraphics[width=0.22\textwidth]{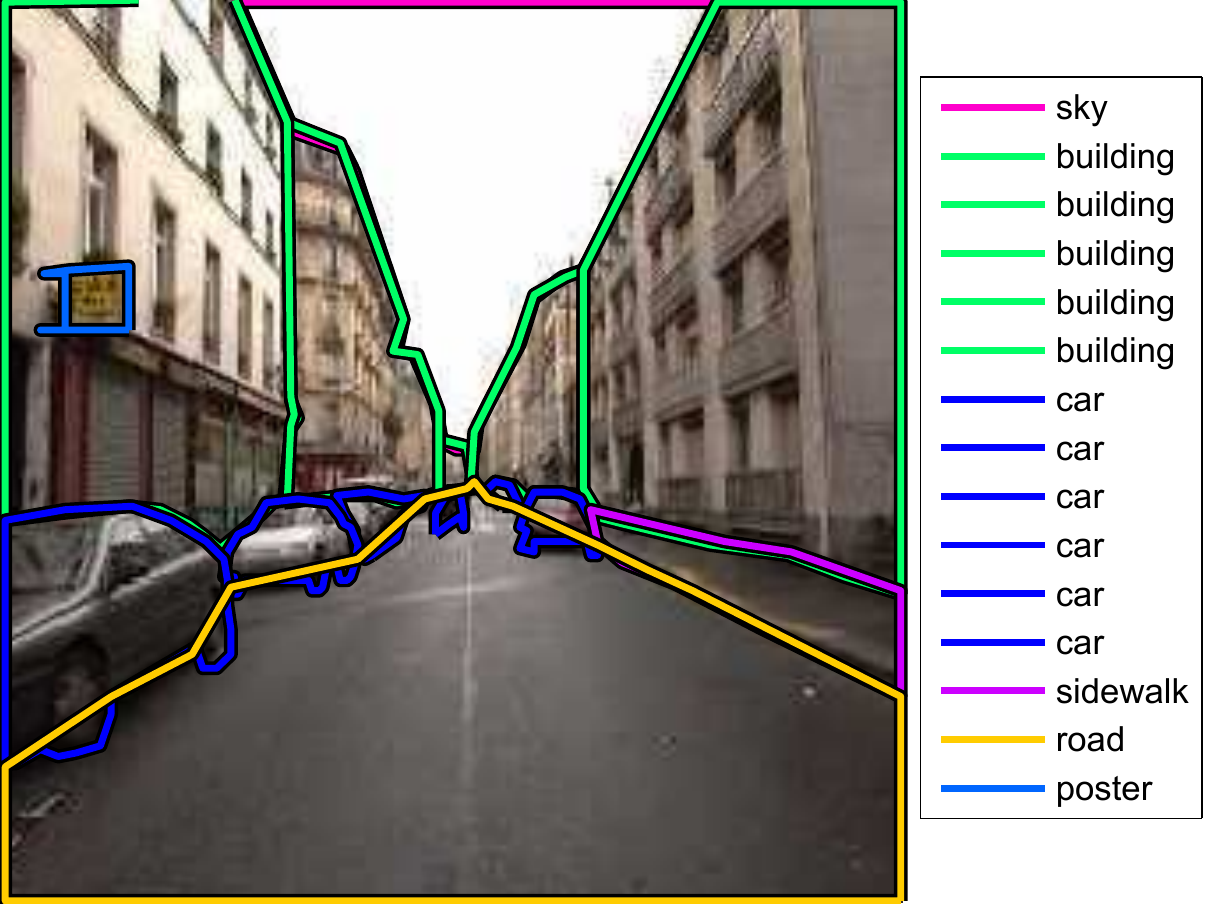}\caption{Sample images in LabelMe dataset. Each image consists of different number of  annotated regions.\label{fig: LabelMeExample}}
\end{figure}
\begin{figure*}[t]
\begin{centering}
\subfloat[\label{fig:labelme_clusters}]{\begin{centering}
\includegraphics[width=0.2\paperwidth]{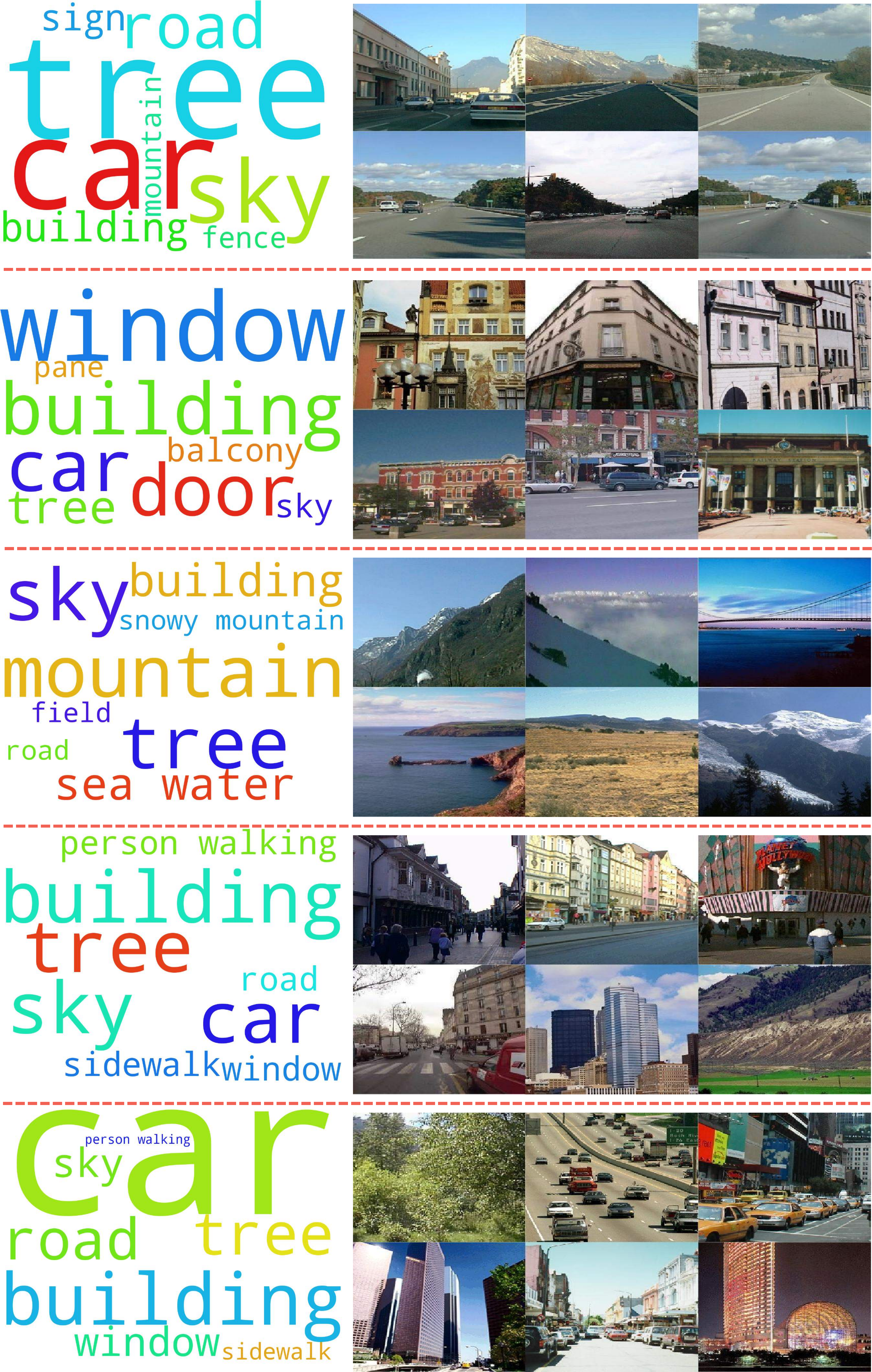}
\par\end{centering}

}\qquad{}\subfloat[
\label{fig:SL-graph}
]{\begin{centering}
\includegraphics[width=0.3\paperwidth]{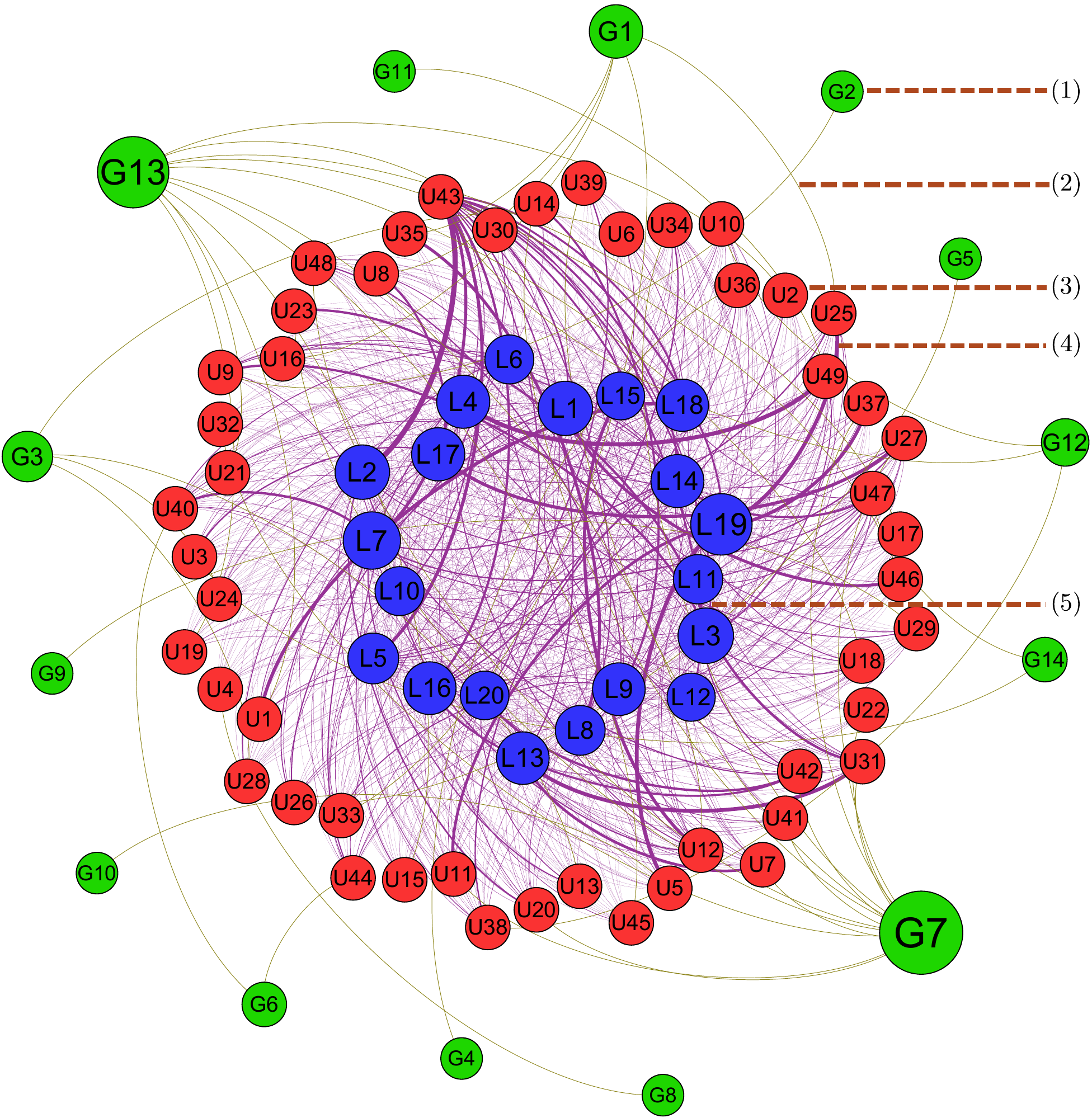}
\par\end{centering}

}
\par\end{centering}

\caption{
Clustering representation for two datasets: (a) Five image clusters
from \emph{Labelme} data discovered by MWMS algorithm: tag-clouds on the left are accumulated from all images in the clusters while six images on the right are randomly chosen images in that cluster; (b) StudentLife discovered network with three node groups: (1) discovered student clusters, (3) student nodes, (5) discovered activity location (from WiFi data); and two edge groups: (2) Student to cluster assignment, (4) Student involved to activity location. Node sizes (of discovered nodes) depict the number of element in clusters while edge sizes between \emph{Student} and \emph{activity location }represent the popularity of student's activities.
}
\end{figure*}

\begin{table}
\caption{Clustering performance for LabelMe dataset. MWM = Multilevel Wasserstein Means and MWMS = MWM with shared atoms on the measure space.}
\label{tab:clustering_metrics}
\centering
\begin{tabular}{lccc}
\toprule
Methods & NMI & ARI & AMI  \\
\hline
\midrule
K-means & 0.370 & 0.282 & 0.365 \\
TSK-means & 0.203 & 0.101 & 0.170\\
MC2 \citep{Viet-2016} & 0.315 & 0.206 & 0.273\\
\textbf{MWM} & 0.374 & 0.302 & 0.368  \\
\textbf{MWMS} &\textbf{ 0.416} & \textbf{0.355} & \textbf{0.411}  \\
\textbf{MWM with context} &0.662 &0.580& 0.654  \\
\textbf{MWMS with context} &\textbf{0.675} & \textbf{0.603} &\textbf{0.666}  \\
\bottomrule
\end{tabular}
\end{table}


\textbf{StudentLife dataset}\footnote{https://studentlife.cs.dartmouth.edu/dataset.html} is a large dataset frequently used in pervasive and ubiquitous computing research. Data signals consist of multiple channels  (e.g. WiFi signals and Bluetooth scan), which are collected from smartphones of 49 students at Dartmouth College over a 10-week spring term in 2013. However, in our experiments, we use only WiFi signal strengths. We apply a similar procedure described in \citep{nguyen_nguyen_venkatesh_phung_icpr16mcnc} to pre-process the
data. We aggregate the number of scans by each WiFi access point and select 500 WiFi IDs with the highest frequencies. Eventually, we obtain 49 ``documents'' with totally approximately $4.6$ million 500-dimensional data points.

\textbf{Clustering evaluation metrics:} We use normalized Mutual Information (NMI), adjusted Rand index (ARI), and adjusted Mutual Information (AMI) to evaluate the performance metrics of clustering tasks. NMI measures mutual information between ground truth labels of data and  predicted cluster labels normalized by the average entropy of these labels, $NMI(U,V)=\nicefrac{2*MI(U,V)}{(H(U)+H(V))}$ where $MI(.,.)$ and $H(.)$ are respectively mutual information and entropy. AMI is an adjustment of the mutual information score which is generally higher for two clusterings with a larger number of clusters $AMI(U,V)={\nicefrac  {MI(U,V)-E\{MI(U,V)\}}{\max {\{H(U),H(V)\}}-E\{MI(U,V)\}}}$ where $E\{MI(.,.)\}$ is the expect ion of mutual information. ARI is an adjusted version of Rand index which computes similarity two clusterings by considering all pairs of samples and counting pairs that are assigned in the same or different clusters \citep{hubert1985comparing}.

\textbf{Quantitative results:} To quantitatively evaluate our proposed methods, we compare our algorithms with several baseline methods:
K-means, 3-stage K-means (TSK-means) as described in Appendix~\ref{sec:appendix_d}, MC2-SVI without context \citep{Viet-2016}. Clustering performance in Table \ref{tab:clustering_metrics} is evaluated with the image clustering problem on \emph{LabelMe dataset}. With K-means, we average all data points to obtain a single vector for each image. K-means needs much less time to run since the number of data points is now reduced to $1,800$. For MC2-SVI, we used stochastic variational inference and  parallelized Spark-based implementation in \citep{Viet-2016} to carry out the experiments. This implementation has the advantage of making use of all of 16 cores on the test machine.  
In terms of clustering accuracy, MWMS and MWMS with context algorithms perform the best.

Figure \ref{fig:labelme_clusters} demonstrates five representative image clusters with six randomly chosen images in each (on the right) which are discovered by our MWMS algorithm. We also accumulate labeled tags from all images in each cluster to produce the tag cloud on the left, which can be considered as the visual ground truth of clusters. Our algorithm can group images into clusters that are consistent
with the tag cloud.

\textbf{Qualitative results:} We use the StudentLife dataset to demonstrate the capability of multilevel clustering with large-scale datasets. This dataset not only contains a large number of data points but presents in high dimension. Our algorithms need approximately 1 hour to perform multilevel
clustering on this dataset. Figure \ref{fig:SL-graph} presents two levels of clusters discovered by our algorithms. The innermost (blue) and outermost (green) rings depict local and global clusters respectively. Global clusters represent groups of students while local clusters shared between students (``documents'') may be used to infer locations of students' activities. From these clustering results, we can dissect
students' shared location (activities), e.g. Student 49 (\emph{U49}) mainly took part in activities at location 4 (\emph{L4}). 

\textbf{Robust multilevel clustering:}
We now conduct experiments to demonstrate how \emph{multilevel Wasserstein geometric median (MWGM)} algorithm can be robust with some proportions of ``noise'' data. We created their versions of ``noise'' LabelMe which we add into the original dataset three varying proportions of Gaussian noise including $0.5\%$, $1\%$, and $5\%$ respectively. We now apply two algorithms, MWM and MWGM, with the contaminated LabelMe dataset. Table \ref{tab:W1_W2_noisy_clustering_metrics} shows the clustering performance of these algorithms. The clustering performance of the MWGM algorithm is more robust to the level of noise data added in compared with its second order counterpart, the MWM. It is also interesting that the clustering performance of MWGM is increasing then decreasing when more noise presenting in the data.
\comment{
\scriptsize
\begin{tabular}{|c|c|c|c|c|c|c|c|c|c|c|c|c|}
\hline 
Noise percentage &\multicolumn{3}{c|}{$0\%$} & \multicolumn{3}{c|}{$0.5\%$} & \multicolumn{3}{c|}{$1\%$} & \multicolumn{3}{c|}{$5\%$}\tabularnewline
\hline 
\diagbox{\small{Methods}}{\small{Metrics}}& NMI & ARI & AMI & NMI & ARI & AMI & NMI & ARI & AMI & NMI & ARI & AMI\tabularnewline
\hline 
\textbf{MWGM}& NMI & ARI & AMI& 0.493&0.440   & 0.484 & 0.553 & 0.506 & 0.533 &0.542  &0.508   &0.525   \tabularnewline
\hline 
\textbf{MWM}& NMI & ARI & AMI &0.412 & 0.340 & 0.413 & 0.501 &0.461  &  0.486 &0.470   & 0.409 & 0.454 \tabularnewline
\hline 
\end{tabular}
}

\begin{table}
\begin{centering}
\begin{tabular}{cccc}
\hline 
\multirow{2}{*}{{\footnotesize{}Noise percentage}} & \multirow{2}{*}{Metrics} & \multicolumn{2}{c}{Methods}\tabularnewline
\cline{3-4} 
 &  & \textbf{MWGM} & \textbf{MWM}\tabularnewline
\hline 
\hline 
\multirow{3}{*}{$0\%$} & NMI & 0.536 & 0.473   \tabularnewline
 & ARI & 0.501 & 0.427  \tabularnewline
 & AMI & 0.530 & 0.465  \tabularnewline
\hline 
\multirow{3}{*}{$0.5\%$} & NMI & 0.493 & 0.412\tabularnewline
 & ARI & 0.440 & 0.340\tabularnewline
 & AMI & 0.484 & 0.413\tabularnewline
\hline 
\multirow{3}{*}{$1\%$} & NMI & 0.553 & 0.501\tabularnewline
 & ARI & 0.506 & 0.461\tabularnewline
 & AMI & 0.533 & 0.486\tabularnewline
\hline 
\multirow{3}{*}{$5\%$} & NMI & 0.542 & 0.470\tabularnewline
 & ARI & 0.508 & 0.409\tabularnewline
 & AMI & 0.525 & 0.454\tabularnewline
\hline 
\end{tabular}

\par\end{centering}

\caption{\label{tab:W1_W2_noisy_clustering_metrics}Clustering performance for LabelMe dataset with noisy data.}
\end{table}
\subsection{Wall-clock running time analysis}
In this section, we illustrate the running time of sequential and parallel implementations of the proposed algorithms on both synthetic and real-world datasets. For synthetic data, we use datasets with different numbers of data groups (i.e. 1000, 2000, 4000, 8000, 16,000, and 20,000). With each dataset, we run four algorithms with/without local constraint and with/without context observations on sequential and parallelized implementations. All experiments are conducted on the same machine (Windows 10 64-bit, core i7 3.4GHz CPU and 16GB RAM). We then observe the average running time of each iteration of serial and parallelized implementations of MWGM(S) (first order Wasserstein metric) and MWM(S) (second order Wasserstein metric) in  Figures \ref{fig:synthetic_running_time_W1 } and \ref{fig:synthetic_running_time_W2} respectively. The parallelized implementations have significantly reduced the wall-clock running time of the proposed algorithms especially when the number of groups is large. Since our experiments for parallelized implementations are conducted on a station machine with multiple processors, it is obvious the running time complexity will reduce more dramatically on cluster systems when the datasets contain an extremely large number of groups, e.g. millions.
\begin{figure*}[ht]
\centerline{\includegraphics[width=1 \textwidth]{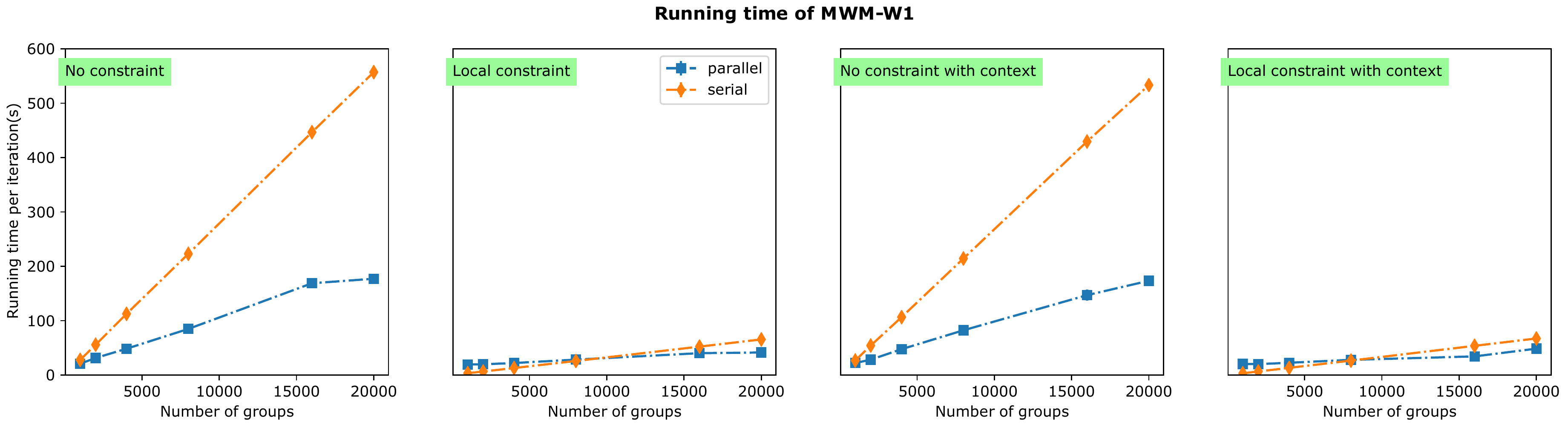}}
\caption{Comparison of running time between serial and parallelized implementations of Multilevel Wasserstein Geometric Median (MWGM) with 4 different settings: \textit{no constraint, local constraint, no constraint with context, local constraint with context}. 
}
\label{fig:synthetic_running_time_W1 }  
\end{figure*}
\begin{figure*}[ht]
\centerline{\includegraphics[width=1 \textwidth]{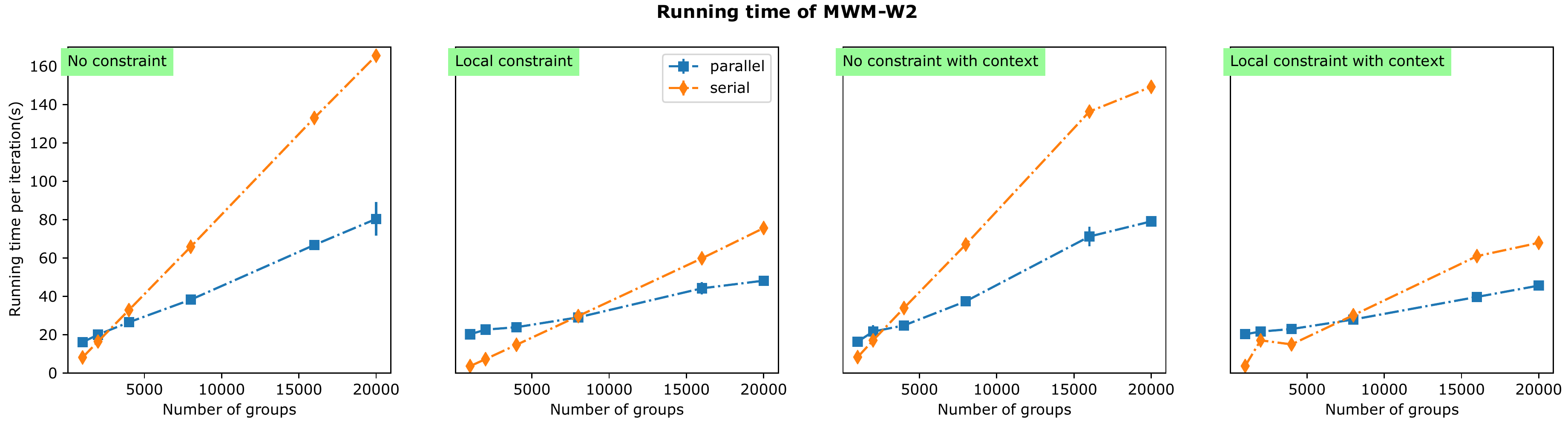}}
\caption{Comparison of running time between serial and parallelized implementations of Multilevel Wasserstein Mean (MWM) with 4 different settings: \textit{no constraint, local constraint, no constraint with context, local constraint with context}.
}
\label{fig:synthetic_running_time_W2}  
\end{figure*}
We now present the running time of the proposed algorithms on the real-world dataset LabelMe. Figure \ref{fig:LabelMe_running_time} depicts the running time for this dataset which shows the significant reduction in running time of the parallelized implementation compared with the serial version. Although there are less than $3000$ groups of data in this dataset, the running time with the parallelized version of our proposed algorithms has reduced approximately twice. It is also noted that the MWGM algorithm takes more time to run since its inner loop approximates the geometric median instead of the closed-form in the MWM algorithm.

\section{Conclusion} \label{Section:discussion}
We have proposed an optimization-based approach to multilevel clustering using Wasserstein metrics. There are several possible directions for extensions. Firstly, we have only considered continuous data. Hence, it is of interest to extend our formulation to discrete data. Secondly, our  method requires knowledge of the numbers of clusters both in local and global clustering.  When these numbers are unknown, it seems reasonable to incorporate a penalty on the model complexity.
\begin{figure*}[t!]
\centerline{\includegraphics[width=1 \textwidth]{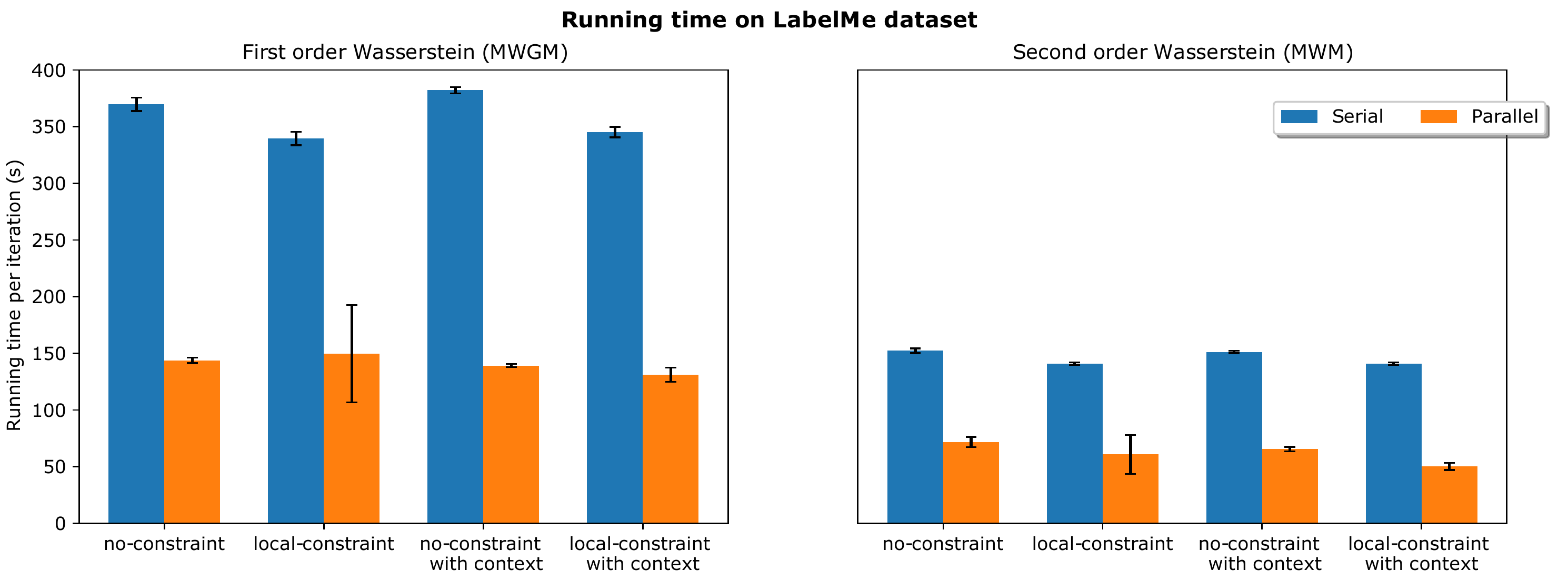}}
\caption{Comparison of running time between serial and parallelized implementations of MWM and MWGM with 4 different settings: \textit{no constraint, local constraint, no constraint with context, local constraint with context} on LabelMe dataset.}
\label{fig:LabelMe_running_time}  
\end{figure*}
\acks{This research is supported in part by grants NSF CAREER DMS-1351362, NSF CNS-1409303, the  Margaret  and  Herman  Sokol  Faculty  Award  and  research gift from Adobe Research (XN). DP, VH and ND gratefully acknowledge the support from the Australian Research Council (ARC) DP150100031 and DP160109394. }


\appendix
\section{Wasserstein barycenter}
\label{sec:appendix_a}
In this appendix, we collect relevant information on the Wasserstein metric and Wasserstein barycenter problem, which were introduced in Section \ref{Section:prelim} in the paper. For any Borel map $g: \Theta \to \Theta$ and probability measure $G$ on $\Theta$, the push-forward measure of $G$ through $g$, denoted by $g\#G$, is defined by the condition that $\int \limits_{\Theta}{f(y)}\mathrm{d}(g\#G)(y)=\int \limits_{\Theta}{f(g(x))}\mathrm{d}G(x)$ for every continuous bounded function $f$ on $\Theta$.

\paragraph{Wasserstein metric:} \label{Section:Append_Wasserstein_metric}
We now provide further discussion about the formulation of Wasserstein metric when the probability measures are discrete. In particular, when $G = \sum \limits_{i=1}^{k}{p_{i}\delta_{\theta_{i}}}$ and $G'=\sum  \limits_{i=1}^{k'}{p_{i}'\delta_{\theta_{i}'}}$ are discrete measures with finite support, i.e. $k$ and $k'$ are finite, the Wasserstein distance of order $r$ between $G$ and $G'$ can be represented as
\begin{eqnarray}
W_{r}^{r}(G,G') = \min \limits_{T \in \Pi(G,G')} \langle T,M_{G,G'} \rangle, \label{eqn:Wasserstein_computation}
\end{eqnarray} 
where we have
\begin{eqnarray}
\Pi(G,G') = \left\{T \in \mathbb{R}_{+}^{k \times k'}: T\mathbbm{1}_{k'}=\vec{p}, \ T\mathbbm{1}_{k}=\vec{p}'\right\} \nonumber
\end{eqnarray}
such that $\vec{p}=(p_{1},\ldots,p_{k})^{T}$ and $\vec{p}'=(p_{1}',\ldots,p_{k'}')^{T}$, 
$M_{G,G'} = \left\{\|\theta_{i}-\theta_{j}'\|^{r}\right\}_{i,j} \in \mathbb{R}_{+}^{k \times k'}
$ is the cost matrix, i.e. matrix of pairwise distances of elements between $G$ and $G'$, and 
$\langle A, B \rangle =  \text{tr}(A^{T}B)$ is the Frobenius dot-product of matrices. The optimal $T \in \Pi(G,G')$ in optimization problem \eqref{eqn:Wasserstein_computation} is called the optimal coupling of $G$ and $G'$, representing the optimal transport between these two measures. 
\paragraph{Wasserstein barycenter:} \label{Section:Append_Wasserstein_barycenter}
As introduced in Section \ref{Section:prelim} in the paper, for any probability measures $P_{1}, P_{2}, \ldots, P_{N} \in \mathcal{P}_{2}(\Theta)$, their Wasserstein barycenter $\overline{P}_{N,\lambda}$ is such that
\begin{eqnarray}
\overline{P}_{N,\lambda}=\mathop {\arg \min}\limits_{P \in \mathcal{P}_{2}(\Theta)}{\sum \limits_{i=1}^{N}{\lambda_{i}W_{2}^{2}(P,P_{i})}}, \nonumber
\end{eqnarray} 
where $\lambda \in \Delta_{N}$ denote weights associated with $P_{1},\ldots,P_{N}$. According to \citep{Carlier-2011}, $P_{N,\lambda}$ can be obtained as a solution to  so-called multi-marginal optimal transportation problem. In fact, if we denote $T_{k}^{1}$ as the measure preserving map from $P_{1}$ to $P_{k}$, i.e. 
$P_{k}=T_{k}^{1} \# P_{1}$, for any $1 \leq k \leq N$ (note that, these maps exist as long as we assume for example that the probability measures $P_{1}, \ldots, P_{N}$ have density functions~\citep{Santambrogio-2015}), then \begin{eqnarray}
\overline{P}_{N,\lambda}=\biggr(\sum \limits_{k=1}^{N}{\lambda_{k}T_{k}^{1}}\biggr)\# P_{1}. \nonumber
\end{eqnarray}
Unfortunately, the forms of the maps $T_{k}^{1}$ are analytically intractable, especially if no special constraints on $P_{1}, \ldots, P_{N}$ are imposed.

Recently,~\cite{Anderes-2015} studied the Wasserstein barycenters $\overline{P}_{N,\lambda}$ when the probability measures $P_{1}, P_{2}, \ldots, P_{N}$ are finite discrete and $\lambda=\biggr(1/N,\ldots,1/N\biggr)$. They demonstrate the following sharp result (cf. Theorem 2 
in \citep{Anderes-2015}) regarding the number of atoms of $\overline{P}_{N,\lambda}$
\begin{customthm}{A.1} \label{theorem:upperbound_barycenter} There exists a Wasserstein 
barycenter $\overline{P}_{N,\lambda}$ such that $|\text{supp}(\overline{P}_{N,\lambda})| \leq \sum \limits_{i=1}^{N}{s_{i}}-N+1$.
\end{customthm}
Therefore, when $P_{1},\ldots, P_{N}$ are indeed finite discrete measures and the weights are uniform, the problem of finding Wasserstein barycenter $\overline{P}_{N,\lambda}$ over the (computationally large) space $\mathcal{P}_{2}(\Theta)$ is reduced to a search over a smaller space $\mathcal{O}_{l}(\Theta)$ where $l=\sum \limits_{i=1}^{N}{s_{i}-N+1}$.

\section{Proofs of theorems}
\label{sec:appendix_b}
In this appendix, we provide proofs for the remaining results in the paper. We start by giving a proof for the transition from multilevel Wasserstein means objective function \eqref{eqn:multilevel_Kmeans_typeone} to objective function \eqref{eqn:multilevel_K_means_typeone_first} in Section \ref{Section:multilevel_kmeans} in the paper. All the notations in this appendix are similar to those in the main text. For each closed subset $\mathcal{S} \subset \mathcal{P}_{2}(\Theta)$, 
we denote the Voronoi region generated by $\mathcal{S}$ on the space $\mathcal{P}_{2}(\Theta)$ by the collection of subsets $\{ V_P \}_{P \in \mathcal{S}}$, 
where $V_P := \{Q \in \mathcal{P}_{2}(\Theta) : W_{2}^{2}(Q,P) = 
\mathop {\min }\limits_{G \in \mathcal{S}} W_{2}^{2}(Q,G)\}$. We define the projection mapping $\pi _\mathcal{S}$ as: $\pi _{\mathcal{S}} :\mathcal{P}_{2}(\Theta)  \to \mathcal{S}$ 
where $\pi _{\mathcal{S}} (Q) = P$ as $Q \in V_{P}$. Note that, for any $P_{1}, P_{2} \in \mathcal{S}$ such that $V_{P_{1}}$ and $V_{P_{2}}$ share the boundary, the values of $\pi_{S}$ at the elements in that boundary can be chosen to be either $P_{1}$ or $P_{2}$. Now, we start with the following useful lemmas.

\begin{customlem}{B.1} \label{lemma:one} For any closed subset $\mathcal{S}$ on $\mathcal{P}_{2}(\Theta)$, if $\mathcal{Q} \in \mathcal{P}_{2}(\mathcal{P}_{2}(\Theta))$, then $E_{X \sim \mathcal{Q}} (d_{W_{2}}^{2}(X,\mathcal{S})) = W_2^2 (\mathcal{Q},\pi _\mathcal{S} \# \mathcal{Q})$ where $d_{W_{2}}^{2}(X,\mathcal{S})=\inf \limits_{P \in \mathcal{S}}{W_{2}^{2}(X,P)}$.
\end{customlem}
\begin{proof}
For any element $\pi \in \Pi (\mathcal{Q},\pi _\mathcal{S} \# \mathcal{Q})$:
\begin{eqnarray}
\int{W_{2}^{2}(P,G)} \mathrm{d}\pi (P,G) & \geq &  \int {d_{W_{2}}^{2}(P,\mathcal{S})} \mathrm{d}\pi (P,G) \nonumber \\
& = & \int {d_{W_{2}}^{2}(P,\mathcal{S})} \mathrm{d}\mathcal{Q}(P)
= E_{X \sim \mathcal{Q}} (d_{W_{2}}^{2}(X,\mathcal{S})), \nonumber
\end{eqnarray}
where the integrations in the first two terms range over $\mathcal{P}_{2}(\Theta)  \times \mathcal{S}$ while that in the final term ranges over $\mathcal{P}_{2}(\Theta)$. Therefore, we obtain 
\begin{eqnarray}
W_2^2 (\mathcal{Q},\pi _\mathcal{S} \# \mathcal{Q}) & = & \mathop {\inf } \int\limits_{\mathcal{P}_{2}(\Theta)  \times \mathcal{S}} {W_{2}^{2}(P,G)} \mathrm{d}\pi (P,G)
\geq E_{X \sim \mathcal{Q}} (d_{W_{2}}^{2}(X,\mathcal{S})), \label{eqn:lemmaequationone}
\end{eqnarray}
where the infimum in the first equality ranges over all $\pi \in \Pi (\mathcal{Q},\pi _\mathcal{S} \# \mathcal{Q})$. 

On the other hand, let ${\displaystyle g:\mathcal{P}_{2}(\Theta)  \to \mathcal{P}_{2}(\Theta) \times \mathcal{S}}$ 
such that $g(P)=(P,\pi_{\mathcal{S}}(P))$ for all $P \in \mathcal{P}_{2}(\Theta)$. Additionally, let 
$\mu _{\pi _\mathcal{S} } = g \# \mathcal{Q}$, the push-forward measure of $\mathcal{Q}$ under mapping $g$. It is clear that $\mu _{\pi _\mathcal{S}}$ is a coupling between $\mathcal{Q}$ and $\pi _\mathcal{S} \# \mathcal{Q}$. Under this construction, we obtain for any $X \sim \mathcal{Q}$ that
\begin{eqnarray}
 E\left(W_{2}^{2}(X,\pi _\mathcal{S} (X))\right) & = & \int {W_{2}^{2}(P,G) } \mathrm{d}\mu _{\pi _\mathcal{S}} (P,G) \nonumber \\
 & \geq & \mathop {\inf } \int {W_{2}^{2}(P,G)} \mathrm{d}\pi (P,G)
= W_2^2 (\mathcal{Q},\pi _\mathcal{S} \# \mathcal{Q}), \label{eqn:lemmaequationtwo}
\end{eqnarray}
where the infimum in the second inequality ranges over all $\pi  \in \Pi (\mathcal{Q},\pi _\mathcal{S} \# \mathcal{Q})$ and the integrations range over $\mathcal{P}_{2}(\Theta) \times \mathcal{S}$. Now, from the definition of $\pi_{\mathcal{S}}$
\begin{eqnarray}
E(W_{2}^{2}(X,\pi _\mathcal{S} (X))) & = & \int {W_{2}^{2}(P,\pi _\mathcal{S}(P))}\mathrm{d}\mathcal{Q}(P) \nonumber \\
& = & \int {d_{W_{2}}^{2}(P,\mathcal{S})} \mathrm{d}\mathcal{Q}(P)
= E(d_{W_{2}}^{2}(X,\mathcal{S})), \label{eqn:lemmaequationthree}
\end{eqnarray}
where the integrations in the above equations range over $\mathcal{P}_{2}(\Theta)$. By combining \eqref{eqn:lemmaequationtwo} and \eqref{eqn:lemmaequationthree}, we would obtain that
\begin{eqnarray}
E_{X \sim \mathcal{Q}} (d^{2}_{W_{2}}(X,\mathcal{S})) \ge W_2^2 (\mathcal{Q},\pi _\mathcal{S} \# \mathcal{Q}). \label{eqn:lemmaequationfourth}
\end{eqnarray}
From \eqref{eqn:lemmaequationone} and \eqref{eqn:lemmaequationfourth}, it is straightforward that $E_{X \sim Q} (d(X,S)^2 ) = W_2^2 (Q,\pi _S  \# Q)$. Therefore, we achieve the conclusion of the lemma.
\end{proof}

\begin{customlem}{B.2} \label{lemma:two}
For any closed subset $\mathcal{S} \subset \mathcal{P}_{2}(\Theta)$ and $\mu \in \mathcal{P}_{2}(\mathcal{P}_{2}(\Theta))$ 
with $\text{supp}(\mu) \subseteq \mathcal{S}$, 
there holds 
$W_2^2 (\mathcal{Q},\mu ) \ge W_2^2 (\mathcal{Q},\pi _\mathcal{S} \# \mathcal{Q})$ for any $\mathcal{Q} \in \mathcal{P}_{2}(\mathcal{P}_{2}(\Theta))$.
\end{customlem}
\begin{proof}
Since $\text{supp}(\mu) \subseteq \mathcal{S}$, it is clear that $W_2^2 (\mathcal{Q},\mu) = {\displaystyle \mathop {\inf }\limits_{\pi  \in \Pi (\mathcal{Q},\mu )} \int\limits_{\mathcal{P}_{2}(\Theta) \times \mathcal{S}} {W_{2}^{2}(P,G)} \mathrm{d}\pi (P,G)}$.\\
Additionally, we have
\begin{eqnarray}
\int {W_{2}^{2}(P,G)} \mathrm{d}\pi (P,G) & \geq & \int {d_{W_{2}}^{2}(P,\mathcal{S})} \mathrm{d}\pi (P,G)
= \int {d_{W_{2}}^{2}(P,\mathcal{S})} \mathrm{d}\mathcal{Q}(P) \nonumber \\
& = & E_{X \sim Q} (d_{W_{2}}^{2}(X,S))
= W_2^2 (\mathcal{Q},\pi _\mathcal{S} \# \mathcal{Q}), \nonumber
\end{eqnarray}
where the last inequality is due to Lemma \ref{lemma:one} and the integrations in the first two terms range over $\mathcal{P}_{2}(\Theta) \times \mathcal{S}$ while that in the final term ranges over $\mathcal{P}_{2}(\Theta)$. Therefore, we achieve the conclusion of the lemma.
\end{proof}
Equipped with Lemma \ref{lemma:one} and Lemma \ref{lemma:two}, 
we are ready to establish the equivalence between multilevel Wasserstein means objective function  \eqref{eqn:multilevel_K_means_typeone_first} and objective function  \eqref{eqn:multilevel_Kmeans_typeone} in Section \ref{Section:multilevel_kmeans} in the main text.
\begin{customlem}{B.3} \label{proposition:Wassersteinequivalence}
For any given positive integers $m$ and $M$, we have
\begin{eqnarray}
A : = \inf \limits_{\Hcal \in \mathcal{E}_{M}(\mathcal{P}_{2}(\Theta))} W_{2}^{2}(\Hcal,\dfrac{1}{m}\mathop {\sum }\limits_{j=1}^{m}{\delta_{G_{j}}})
= \dfrac{1}{m}\inf \limits_{\Hbold = (H_{1},\ldots,H_{M})}\sum \limits_{j=1}^{m} d_{W_{2}}^{2}(G_{j},\Hbold) := B. \nonumber
\end{eqnarray}
\end{customlem}
\begin{proof}
Write $\mathcal{Q}=\dfrac{1}{m}\mathop {\sum }\limits_{j=1}^{m}{\delta_{G_{j}}}$. From the definition of $B$, for any $\epsilon>0$, we can find $\overline{\Hbold}$ such that 
\begin{eqnarray}
B \geq \dfrac{1}{m}\sum \limits_{j=1}^{m} d_{W_{2}}^{2}(G_{j},\overline{\Hbold}) - \epsilon
= E_{X \sim \mathcal{Q}}(d_{W_{2}}^{2}(X,\overline{\Hbold})) - \epsilon
= W_{2}^{2}(\mathcal{Q},\pi_{\overline{\Hbold}} \# \mathcal{Q}) {\bf - \epsilon}
\geq A -\epsilon, \nonumber
\end{eqnarray}
where the second equality in the above display
is due to Lemma \ref{lemma:one} while the last 
inequality is from the fact that $\pi_{\overline{\Hbold}} \# \mathcal{Q}$ is a discrete probability measure in $\mathcal{P}_{2}(\mathcal{P}_{2}(\Theta))$ with exactly $M$ support points. Since the inequality in the above display holds for any $\epsilon$, it implies that $B \geq A$. On the other hand, from the formation of $A$, for any $\epsilon>0$, we also can find $\Hcal' \in \mathcal{E}_{M}(\mathcal{P}_{2}(\Theta))$ such that
\begin{eqnarray}
A \geq W_{2}^{2}(\Hcal',\mathcal{Q}) - \epsilon
\geq W_{2}^{2}(\mathcal{Q},\pi_{\Hbold'} \# \mathcal{Q}) - \epsilon
= \dfrac{1}{m}\sum \limits_{j=1}^{m} d_{W_{2}}^{2}(G_{j},\Hbold') -\epsilon
\geq B - \epsilon, \nonumber
\end{eqnarray}
where  $\Hbold' = \text{supp}(\Hcal')$, the second inequality is due to Lemma \ref{lemma:two}, and the third equality is due to Lemma \ref{lemma:one}. Therefore, it means that $A \geq B$. We achieve the conclusion of the lemma. 
\end{proof}
\begin{customprop}{B.4} \label{lemma:equivalence_multilevels_Kmeans}
For any positive integer numbers $m, M$ and $k_{j}$ as $1 \leq j \leq m$, we denote 
\begin{eqnarray}
C & : = & \mathop {\inf }\limits_{\substack {G_{j} \in \mathcal{O}_{k_{j}}(\Theta) \ \forall 1 \leq j \leq m, \\ \Hcal \in \mathcal{E}_{M}(\mathcal{P}_{2}(\Theta))}}{\mathop {\sum }\limits_{i=1}^{m}{W_{2}^{2}(G_{j},P_{n_{j}}^{j})}}
+ \lambda W_{2}^{2}(\Hcal,\dfrac{1}{m}\mathop {\sum }\limits_{i=1}^{m}{\delta_{G_{i}}}) \nonumber \\
D & : = & \mathop {\inf }\limits_{\substack {G_{j} \in \mathcal{O}_{k_{j}}(\Theta) \ \forall 1 \leq j \leq m, \\ \Hbold = (H_{1},\ldots,H_{M})}}{\mathop {\sum }\limits_{j=1}^{m}{W_{2}^{2}(G_{j},P_{n_{j}}^{j})}}
+ \dfrac{\lambda \ d_{W_{2}}^{2}(G_{j},\Hbold)}{m}. \nonumber
\end{eqnarray}
Then, we have $C = D$.
\end{customprop}
\begin{proof} The proof of this proposition is a straightforward application of Lemma \ref{proposition:Wassersteinequivalence}. Indeed, for each fixed $(G_{1},\ldots,G_{m})$ the infimum w.r.t to $\mathcal{H}$ in $C$ leads to the same infimum w.r.t to $\Hbold$ in $D$, according to Lemma \ref{proposition:Wassersteinequivalence}. Now, by taking the infimum w.r.t to $(G_{1},\ldots,G_{m})$ on both sides, we achieve the conclusion of the proposition.
\end{proof}

In the remainder of this Appendix, we present the proofs for all remaining
theorems stated in the main text.
\paragraph{PROOF OF THEOREM \ref{theorem:local_convergence_multilevel_Kmeans}} 
Recall that, we use the notation $\hat W_{2}$ to denote the entropic regularized second-order Wasserstein with some given regularized parameter (see equation~\eqref{eq:entropic_Wasserstein} for the details). Now, for any $G_{j} \in \mathcal{E}_{k_{j}}(\Theta)$ and $\Hbold =(H_{1},\ldots,H_{M})$, we denote the function 
\begin{eqnarray}
f(\vec{G}, \Hbold)=\mathop {\sum }\limits_{j=1}^{m}{\hat W_{2}^{2}(G_{j},P_{n}^{j})}+\dfrac{\lambda \ d_{\hat W_{2}}^{2}(G_{j},\Hbold)}{m}, \nonumber
\end{eqnarray}
where $\vec{G}=(G_{1},\ldots,G_{m})$. To obtain the conclusion of this theorem, it is sufficient to demonstrate for any $t \geq 0$ that
\begin{eqnarray}
f(\vec{G}^{(t+1)},\Hbold^{(t+1)}) < f(\vec{G}^{(t)},\Hbold^{(t)}) \nonumber
\end{eqnarray}
unless $(\vec{G}^{(t+1)},\Hbold^{(t+1)}) \equiv (\vec{G}^{(t)},\Hbold^{(t)})$. It is clear that $f(\vec{G}^{(t+1)},\Hbold^{(t+1)}) = f(\vec{G}^{(t)},\Hbold^{(t)})$ when $(\vec{G}^{(t+1)},\Hbold^{(t+1)}) \equiv (\vec{G}^{(t)},\Hbold^{(t)})$. Therefore, we will only consider the setting when $(\vec{G}^{(t+1)},\Hbold^{(t+1)}) \neq (\vec{G}^{(t)},\Hbold^{(t)})$. There are two cases: $\vec{G}^{(t+1)} \neq  \vec{G}^{(t)}$ or $\Hbold^{(t + 1)} \neq \Hbold^{(t)}$. 

\vspace{0.5 em}
\noindent
\textbf{Case 1:} We first consider the setting when $\vec{G}^{(t+1)} \neq  \vec{G}^{(t)}$. It means that there exists $j \in \{1, 2, \ldots, m\}$ such that $G_{j}^{(t+1)} \neq G_{j}^{(t)}$. Without loss of generality, we assume that $j = 1$. We show that $f(\vec{G}^{(t+1)},\Hbold^{(t)})  < f(\vec{G}^{(t)},\Hbold^{(t)})$ when $\vec{G}^{(t+1)} \neq  \vec{G}^{(t)}$. From the updates of $\vec{G}^{(t+1)}$ in Algorithm~\ref{alg:multilevels_Wasserstein_means}, for any $1 \leq j \leq m$ we have
\begin{align*}
G_{j}^{(t+1)} \in \mathop {\arg \min }\limits_{G_{j} \in \mathcal{O}_{k_{j}}(\Theta)}{\hat W_{2}^{2}(G_{j},P_{n_{j}}^{j})}+ \lambda \hat W_{2}^{2}(G_{j},H_{i_{j}}^{(t)})/m
\end{align*}
for all $1 \leq j \leq m$. It demonstrates that
\begin{align*}
	\hat W_{2}^{2}(G_{j}^{(t+1)},P_{n_{j}}^{j})+ \frac{\lambda}{m} \hat W_{2}^{2}(G_{j}^{(t+1)},H_{i_{j}}^{(t)}) & \\
	& \hspace{- 4 em} \leq \min_{G_{j} \in  \mathcal{O}_{k_{j}}(\Theta): \text{supp}(G_{j}) \in \text{supp}(G_{j}^{(t)})} \hat W_{2}^{2}(G_{j},P_{n_{j}}^{j})+ \frac{\lambda}{m} \hat W_{2}^{2}(G_{j},H_{i_{j}}^{(t)}).
\end{align*}
The RHS of the above inequality means that we only find the optimal measure $\bar{G}_{j} \in  \mathcal{O}_{k_{j}}(\Theta)$ such that its supports lie in the set of supports of $G_{j}^{(t)}$. Therefore, finding the optimal measure $\bar{G}_{j}$ of that objective function is equivalent to find its optimal masses, which is a strongly convex problem. It proves that for any $1 \leq j \leq m$
\begin{align*}
	\hat W_{2}^{2}(G_{j}^{(t+1)},P_{n_{j}}^{j})+ \frac{\lambda}{m} \hat W_{2}^{2}(G_{j}^{(t+1)},H_{i_{j}}^{(t)}) \leq \hat W_{2}^{2}(G_{j}^{(t)},P_{n_{j}}^{j})+ \frac{\lambda}{m} \hat W_{2}^{2}(G_{j}^{(t)},H_{i_{j}}^{(t)})
\end{align*}
and the equality only holds when $G_{j}^{(t+1)} \equiv G_{j}^{(t)}$. Since $G_{1}^{(t+1)} \neq G_{1}^{(t)}$, we have
\begin{align*}
	\hat W_{2}^{2}(G_{j}^{(t+1)},P_{n_{j}}^{j})+ \frac{\lambda}{m} \hat W_{2}^{2}(G_{j}^{(t+1)},H_{i_{j}}^{(t)}) < \hat W_{2}^{2}(G_{j}^{(t)},P_{n_{j}}^{j})+ \frac{\lambda}{m} \hat W_{2}^{2}(G_{j}^{(t)},H_{i_{j}}^{(t)}).
\end{align*}
Putting the above results together, we obtain $f(\vec{G}^{(t+1)},\Hbold^{(t)})  < f(\vec{G}^{(t)},\Hbold^{(t)})$ when $\vec{G}^{(t+1)} \neq  \vec{G}^{(t)}$. 

\vspace{0.5 em}
\noindent
\textbf{Case 2:} We now consider the setting when $\Hbold^{(t+1)} \neq  \Hbold^{(t)}$. Without loss of generality, we assume that $H_{1}^{(t+1)} \neq H_{1}^{(t)}$. We now prove that $f(\vec{G}^{(t+1)},\Hbold^{(t+1)}) < f(\vec{G}^{(t+1)},\Hbold^{(t)})$. Indeed, recall that $C_{i} = \left\{l: i_{l}=i\right\}$ for $1 \leq i \leq M$ where $i_{j} = \mathop {\arg \min}\limits_{1 \leq u \leq M}{\hat W_{2}^{2}(G_{j}^{(t+1)},H_{u}^{(t)})}$ and
\begin{align*}
H_{i}^{(t+1)} \in \mathop {\arg \min }\limits_{H_{i} \in \mathcal{P}_{2}(\Theta)}{\sum \limits_{l \in C_{i}}{\hat W_{2}^2(H_{i}, G_{l}^{(t+1)})}}.
\end{align*}
We denote by $\mathcal{A}_{i}$ the set of all supports of $H_{i}^{(t)}$ for $l \in C_{i}$ for any $1 \leq i \leq M$. From the formulation of $H_{i}^{(t+1)}$, we have 
\begin{align*}
	\sum \limits_{l \in C_{i}} \hat{W}_{2}^2(H_{i}^{(t+1)}, G_{l}^{(t+1)}) \leq \min \limits_{H_{i}: \ \text{supp}(H_{i}) \equiv  \mathcal{A}_{i}} {\sum \limits_{l \in C_{i}}{\hat W_{2}^2(H_{i}, G_{l}^{(t+1)})}}.
\end{align*}
The RHS objective function means that we only search for the optimal measure $\bar{H}_{i}$ that has supports in the set $\mathcal{A}_{i}$. It is equivalent to search for the optimal masses of $\bar{H}_{i}$ as the supports of $\bar{H}_{i}$ are fixed. The RHS objective function is strongly convex with respect to the masses of $\bar{H}_{i}$. It demonstrates that for any $i \in \{1, 2, \ldots, M\}$
\begin{align*}
	\sum \limits_{l \in C_{i}} \hat{W}_{2}^2(H_{i}^{(t+1)}, G_{l}^{(t+1)}) \leq \sum \limits_{l \in C_{i}}{\hat W_{2}^2(H_{i}^{(t)}, G_{l}^{(t+1)})},
\end{align*}
and the equality only holds when $H_{i}^{(t+1)} \equiv H_{i}^{(t)}$. Since $H_{1}^{(t+1)} \neq H_{1}^{(t)}$, we have
\begin{align*}
	\sum \limits_{l \in C_{i}} \hat{W}_{2}^2(H_{1}^{(t+1)}, G_{l}^{(t+1)}) < \sum \limits_{l \in C_{i}}{\hat W_{2}^2(H_{1}^{(t)}, G_{l}^{(t+1)})}.
\end{align*}
Collecting the above results leads to $f(\vec{G}^{(t+1)},\Hbold^{(t+1)}) < f(\vec{G}^{(t+1)},\Hbold^{(t)})$ when $\Hbold^{(t+1)} \neq  \Hbold^{(t)}$.

In summary, the results from Cases 1 and 2 prove that $f(\vec{G}^{(t+1)},\Hbold^{(t+1)}) < f(\vec{G}^{(t)},\Hbold^{(t)})$ unless $(\vec{G}^{(t+1)},\Hbold^{(t+1)}) \equiv (\vec{G}^{(t)},\Hbold^{(t)})$ for any $t \geq 0$. Furthermore, the argument of these cases suggests that if there exists $\bar{t}$ such that $f(\vec{G}^{(\bar{t}+1)},\Hbold^{(\bar{t}+1)}) = f(\vec{G}^{(\bar{t})},\Hbold^{(\bar{t})})$, then $(\vec{G}^{(t+1)},\Hbold^{(t+1)}) \equiv (\vec{G}^{(t)},\Hbold^{(t)})$ for all $t \geq \bar{t}$. It suggests that there exists $(\bar{\vec{G}}, \bar{\Hbold})$ such that $f(\vec{G}^{(t)},\Hbold^{(t)})$ converges to $f(\bar{\vec{G}}, \bar{\Hbold})$ and $(\bar{\vec{G}}, \bar{\Hbold})$ satisfies that
\begin{align*}
	\bar{G}_{j} & \in \mathop {\arg \min } \limits_{G_{j} \in \mathcal{O}_{k_{j}}(\Theta)}{\hat W_{2}^{2}(G_{j},P_{n_{j}}^{j})}+ \lambda \hat W_{2}^{2}(G_{j},\bar{H}_{\bar{i}_{j}})/m, \\
	\bar{H}_{i} & \in \mathop {\arg \min }\limits_{H_{i} \in \mathcal{P}_{2}(\Theta)}{\sum \limits_{l \in \bar{C}_{i}}{\hat W_{2}^2(H_{i}, \bar{G}_{l})}},
\end{align*}
where $\bar{i}_{j} =  \mathop {\arg \min} \limits_{1 \leq u \leq M}{\hat W_{2}^{2}(\bar{G}_{j},\bar{H}_{u})}$ for any $1 \leq j \leq m$ and $\bar{C}_{i} = \left\{l: \bar{i}_{l}=i\right\}$ for $1 \leq i \leq M$. As a consequence, we obtain the conclusion of the theorem.
\paragraph{PROOF OF THEOREM \ref{theorem:local_convergence_local_constraint_Kmeans}} 
The proof is quite similar to the proof of Theorem \ref{theorem:local_convergence_multilevel_Kmeans}. In fact, recall from the proof of Theorem \ref{theorem:local_convergence_multilevel_Kmeans} that for any $G_{j} \in \mathcal{E}_{k_{j}}(\Theta)$ and $\Hbold =(H_{1},\ldots,H_{M})$ we denote the function 
\begin{eqnarray}
f(\vec{G}, \Hbold)=\mathop {\sum }\limits_{j=1}^{m}{\hat{W}_{2}^{2}(G_{j},P_{n}^{j})}+\dfrac{\lambda}{m} d_{\hat{W}_{2}}^{2}(G_{j},\Hbold), \nonumber
\end{eqnarray}
where $\vec{G}=(G_{1},\ldots,G_{m})$. Now it is sufficient to demonstrate for any $t \geq 0$ that
\begin{eqnarray}
f(\vec{G}^{(t+1)},\Hbold^{(t+1)}) < f(\vec{G}^{(t)},\Hbold^{(t)}), \nonumber
\end{eqnarray}
unless $(S_{K}^{(t+1)}, \vec{G}^{(t+1)}, \Hbold^{(t+1)}) \equiv (S_{K}^{(t)}, \vec{G}^{(t)}, \Hbold^{(t)})$. Here, we only give the proof for that inequality under the setting when $S_{K}^{(t+1)} \neq S_{K}^{(t)}$ as the the proof argument for that inequality when $S_{K}^{(t+1)} \equiv S_{K}^{(t)}$ and $(\vec{G}^{(t+1)}, \Hbold^{(t+1)})) \neq  (\vec{G}^{(t)}, \Hbold^{(t)})$ is similar to the proof argument of Theorem~\ref{theorem:local_convergence_multilevel_Kmeans}. 
Indeed, by the definition of Wasserstein distances, we have
\begin{eqnarray}
E = m \cdot f(\vec{G}^{(t)},\Hbold^{(t)})  = \sum \limits_{u=1}^{m}{\sum \limits_{j,v}{mT_{j,v}^{u}\|a_{j}^{(t)}-X_{u,v}\|^{2}} +  \lambda U_{j,v}^{u}\|a_{j}^{(t)}-h_{i_{u},v}^{(t)}\|^{2}}. \nonumber
\end{eqnarray}
Therefore, the update of $S_{K}^{(t+1)}$ from Algorithm \ref{alg:local_constraint_multilevels_Wasserstein_means} and the assumption that $S_{K}^{(t+1)} \neq S_{K}^{(t)}$ leads to
\begin{eqnarray}
E & > &  \sum \limits_{u=1}^{m}{\sum \limits_{j,v}{mT_{j,v}^{u}\|a_{j}^{(t+1)}-X_{u,v}\|^{2}}}
+ \lambda U_{j,v}^{u}\|a_{j}^{(t+1)}-h_{i_{u},v}^{(t)}\|^{2} \nonumber \\
& \geq & m \sum \limits_{j=1}^{m}{\hat{W}_{2}^{2}(G_{j}^{(t)'},P_{n}^{j})}+\lambda \sum \limits_{j=1}^{m}{\hat{W}_{2}^{2}(G_{j}^{(t)'},H_{i_{j}}^{(t)})} \nonumber \\
& \geq & m \sum \limits_{j=1}^{m}{\hat{W}_{2}^{2}(G_{j}^{(t)'},P_{n}^{j})}+\lambda \sum \limits_{j=1}^{m}{d_{\hat{W}_{2}}^{2}(G_{j}^{(t)'},\Hbold^{(t)})} \nonumber \\
& = & mf(\vec{G'}^{(t)},\Hbold^{(t)}), \nonumber
\end{eqnarray}
where $\vec{G'}^{(t)}=(G_{1}^{(t)'},\ldots,G_{m}^{(t)'})$, $G_{j}^{(t)'}$ are formed by replacing the atoms of $G_{j}^{(t)}$ by the 
elements of $S_{K}^{(t+1)}$, noting that $\text{supp}(G_{j}^{(t)'}) \subseteq  \mathcal{S}_{K}^{(t+1)}$ as $1 \leq j \leq m$, and 
the second inequality comes directly from the definition of Wasserstein distance. Hence, we obtain
\begin{eqnarray}
f(\vec{G}^{(t)},\Hbold^{(t)}) > f(\vec{G'}^{(t)},\Hbold^{(t)}). \label{eqn:theorem_constraint_Wasserstein_means_first}
\end{eqnarray}
From the definition of $G_{j}^{(t+1)}$ as $1 \leq j \leq m$, we get
\begin{eqnarray}
\sum \limits_{j=1}^{m}{d_{W_{2}}^{2}(G_{j}^{(t+1)},\Hbold^{(t)})} \leq \sum \limits_{j=1}^{m}{d_{W_{2}}^{2}(G_{j}^{(t)'},\Hbold^{(t)})}. \nonumber
\end{eqnarray}
Thus, it leads to 
\begin{eqnarray}
f(\vec{G'}^{(t)},\Hbold^{(t)}) \geq f(\vec{G}^{(t+1)},\Hbold^{(t)}). \label{eqn:theorem_constraint_Wasserstein_means_second}
  \end{eqnarray}
Finally, from the definition of $H_{1}^{(t+1)},\ldots,H_{M}^{(t+1)}$, we have
\begin{eqnarray}
f(\vec{G}^{(t+1)},\Hbold^{(t)}) \geq f(\vec{G}^{(t+1)},\Hbold^{(t+1)}). \label{eqn:theorem_constraint_Wasserstein_means_third}
\end{eqnarray}
By combining \eqref{eqn:theorem_constraint_Wasserstein_means_first}, \eqref{eqn:theorem_constraint_Wasserstein_means_second}, and \eqref{eqn:theorem_constraint_Wasserstein_means_third},  we arrive at the inequality $f(\vec{G}^{(t+1)},\Hbold^{(t+1)}) < f(\vec{G}^{(t)},\Hbold^{(t)})$ when $S_{K}^{(t+1)} \neq S_{K}^{(t)}$.

In summary, we have
\begin{eqnarray}
f(\vec{G}^{(t+1)},\Hbold^{(t+1)}) < f(\vec{G}^{(t)},\Hbold^{(t)}), \nonumber
\end{eqnarray}
unless $(S_{K}^{(t+1)}, \vec{G}^{(t+1)}, \Hbold^{(t+1)}) \equiv (S_{K}^{(t)}, \vec{G}^{(t)}, \Hbold^{(t)})$. From here, using the similar argument as that of the proof of Theorem~\ref{theorem:local_convergence_multilevel_Kmeans}, we reach the conclusion of the theorem.
\paragraph{PROOF OF THEOREM \ref{theorem:objective_consistency_multilevel_Wasserstein_means}} 
To simplify notation, writing
\begin{eqnarray}
L_{\vec{n}}=\mathop {\inf }\limits_{\substack {G_{j} \in \mathcal{O}_{k_{j}}(\Theta), \\ \Hcal \in \mathcal{E}_{M}(\mathcal{P}_{2}(\Theta))}}f_{\vec{n}}(\vec{G},\Hcal) \text{ and }
L_{0}=\mathop {\inf }\limits_{\substack {G_{j} \in \mathcal{O}_{k_{j}}(\Theta), \\ \Hcal \in \mathcal{E}_{M}(\mathcal{P}_{2}(\Theta))}}f(\vec{G},\Hcal). \nonumber
\end{eqnarray}

(i) For any $\epsilon>0$, from the definition of $L_{0}$, we can find $G_{j} \in \mathcal{O}_{k_{j}}(\Theta)$ and $\Hcal \in \mathcal{E}_{M}(\mathcal{P}(\Theta))$ such that
$f(\vec{G},\Hcal)^{1/2} \leq L_{0}^{1/2} + \epsilon.$
Therefore, we would have
\begin{eqnarray}
L_{\vec{n}}^{1/2}-L_{0}^{1/2} & \leq & L_{n}^{1/2}-f(\vec{G},\Hcal)^{1/2}+\epsilon
\leq f_{\vec{n}}(\vec{G},\Hcal)^{1/2} - f(\vec{G},\Hcal)^{1/2}+\epsilon \nonumber \\
& = & \dfrac{f_{\vec{n}}(\vec{G},\Hcal)-f(\vec{G},\Hcal)}{f_{\vec{n}}(\vec{G},\Hcal)^{1/2}+f(\vec{G},\Hcal)^{1/2}} + \epsilon
\leq \sum \limits_{j=1}^{m}\dfrac{|W_{2}^{2}(G_{j},P_{n_{j}}^{j})-W_{2}^{2}(G_{j},P^{j})|}{W_{2}(G_{j},P_{n_{j}}^{j})+W_{2}(G_{j},P^{j})}+\epsilon \nonumber \\
& \leq & \sum \limits_{j=1}^{m}{W_{2}(P_{n_{j}}^{j},P^{j})}+\epsilon. \nonumber
\end{eqnarray} 
By reversing the direction, we also obtain the inequality $L_{n}^{1/2}-L_{0}^{1/2} \geq \sum \limits_{j=1}^{m}{W_{2}(P_{n_{j}}^{j},P^{j})}-\epsilon$. Hence, $|L_{n}^{1/2}-L_{0}^{1/2}-\sum \limits_{j=1}^{m}{W_{2}(P_{n_{j}}^{j},P^{j})}| \leq \epsilon$ for any $\epsilon>0$. Since $P^{j} \in \mathcal{P}_{2}(\Theta)$ for all $1 \leq j \leq m$, we obtain that $W_{2}(P_{n_{j}}^{j},P^{j}) \to 0$ almost surely as $n_{j} \to \infty$ (see for example Theorem 6.9 in \citep{Villani-2009}). As a consequence, we obtain the conclusion of part (i).

(ii) Based on the results of~\cite{Fournier_2015}, as the probability measures $P^{j}$ are compactly supported in $\mathbb{R}^{d}$ for $1 \leq j \leq m$, we have $W_{2}(P_{n_{j}}^{j},P^{j}) = O_{P}(n_{j}^{-1/d})$ when $d \geq 5$ and $W_{2}(P_{n_{j}}^{j},P^{j}) = O_{P}(n_{j}^{-1/4})$ when $d \leq 4$ for any $1 \leq j \leq m$. Combining these results with the results of part (i), we obtain that
\begin{align*}
	|L_{\vec{n}}^{1/2} - L_{0}^{1/2}| =\begin{cases} O_{P}(m \cdot n_{\vee}^{-1/d}) \ \text{when} \ d \geq 5, \\
  O_{P}(m \cdot n_{\vee}^{-1/4}) \ \text{when} \ d \leq 4.
  \end{cases}
\end{align*}
where $n_{\vee} = \max_{1 \leq i \leq m} n_{i}$. As a consequence, we reach the conclusion of part (ii).
\paragraph{PROOF OF THEOREM \ref{theorem:convergence_measures_multilevel_Wasserstein_means}} 
For any $\epsilon>0$, we denote
\begin{eqnarray}
\mathcal{A}(\epsilon)=\biggr\{G_{i} \in \mathcal{O}_{k_{i}}(\Theta), \Hcal \in \mathcal{E}_{M}(\mathcal{P}(\Theta)):
 d(\vec{G},\Hcal,\mathcal{F}) \geq \epsilon\biggr\}. \nonumber
\end{eqnarray}
Since $\Theta$ is a compact set, we also have $\mathcal{O}_{k_{j}}(\Theta)$ and $\mathcal{E}_{M}(\mathcal{P}_{2}(\Theta))$ are compact for any $1 \leq i \leq m$. As a consequence, $\mathcal{A}(\epsilon)$ is also a compact set. For any $(\vec{G},\Hcal) \in \mathcal{A}(\epsilon)$, by the definition of $\mathcal{F}$ we would have $f(\vec{G},\Hcal) > f(\vec{G}^{0},\Hcal^{0})$ for any $(\vec{G}^{0},\Hcal^{0}) \in \mathcal{F}$. Since $\mathcal{A}(\epsilon)$ is compact, it leads to
\begin{eqnarray}
\inf \limits_{(\vec{G},\Hcal) \in A(\epsilon)}{f(\vec{G},\Hcal)} > f(\vec{G}^{0},\Hcal^{0}), \nonumber
\end{eqnarray}
for any $(\vec{G}^{0},\Hcal^{0}) \in \mathcal{F}$. From the formulation of $f_{\vec{n}}$ as in the proof of Theorem \ref{theorem:objective_consistency_multilevel_Wasserstein_means}, 
we can verify that $\lim \limits_{\vec{n} \to \infty} f_{\vec{n}}(\widehat{\vec{G}}^{\vec{n}},\widehat{\Hcal}^{\vec{n}}) = \lim \limits_{\vec{n} \to \infty} f(\widehat{\vec{G}}^{\vec{n}},\widehat{\Hcal}^{\vec{n}})$ almost surely as 
$\vec{n} \to \infty$. Combining this result with that of Theorem \ref{theorem:objective_consistency_multilevel_Wasserstein_means}, 
we obtain $f(\widehat{\vec{G}}^{\vec{n}},\widehat{\Hcal}^{\vec{n}}) \to f(\vec{G}^{0},\Hcal^{0})$ as $\vec{n} \to \infty$ for any $(\vec{G}^{0},\Hcal^{0}) \in \mathcal{F}$. Therefore, for any $\epsilon>0$, as $\vec{n}$ is large enough, we have $d(\vec{\widehat{G}}^{\vec{n}},\widehat{\Hcal}^{\vec{n}},\mathcal{F}) < \epsilon$. As a consequence, we achieve the conclusion regarding the consistency of the mixing measures.

\section{Data generation processes in the simulation studies}
\label{sec:appendix_d}
In this appendix, we offer details on the data generation processes utilized in the simulation studies presented in Section \ref{Section:data_analysis} in the main text. The notions of $m, n, d, M$ are given in the main text. Let $K_i$ be the number of supporting atoms of $H_i$ and $k_{j}$ the number of atoms of $G_{j}$. For any $d \geq 1$, we denote $\vec{1}_{d}$ to be d dimensional vector with all components to be 1. Furthermore, $\mathcal{I}_{d}$ is an identity matrix with d dimensions. 

\paragraph{Comparison metric (Wasserstein distance to truth)}
\begin{eqnarray}
\text{W}:= \frac{1}{m}\sum_{j=1}^m W_2(\hat G_j, G_j) + d_{\mathcal{M}}(\hat {\Hbold}, \Hbold), \nonumber
\end{eqnarray}
where $\hat{\Hbold} := \{\hat H_1,\ldots,\hat H_M\}$, $\Hbold := \{H_1,\ldots,H_M\}$ and $d_{\mathcal{M}}(\hat H, H)$ is a minimum-matching distance \citep{tang2014understanding, Nguyen-2015}:
$$d_{\mathcal{M}}(\hat{\Hbold}, \Hbold) := \max\{\overline{d}(\hat{\Hbold}, \Hbold), \overline{d}(\Hbold, \hat{\Hbold})\},$$
where
$$\overline{d}(\hat{\Hbold}, \Hbold) := \max\limits_{1 \leq i \leq M}\,\,\min\limits_{1 \leq j \leq M} \,\, W_2(H_{i},\hat H_{j}).$$
\paragraph{Multilevel Wasserstein means setting}
The global clusters are generated as follows:
\begin{itemize}
\item Means for atoms $\mu_i := 5(i-1)$  for all $i= 1,\ldots, M$.
\item Atoms of $H_i$: $\phi_{ij} \thicksim \mathcal{N}(\mu_i \vec{1}_d, \mathcal{I}_d)$ for all $j=1,\ldots, K_i$.
\item Weights of atoms: $\pi_i \thicksim \text{Dirichlet}(\vec{1}_{K_i})$. 
\item Let $H_i := \sum_{j=1}^{K_i} \pi_{ij}\delta_{\phi_{ij}}$.
\end{itemize}
For each group $j=1,\ldots,m$, generate local measures and data as follows:
\begin{itemize}
\item Pick cluster label $z_j \thicksim \text{Uniform}( \{1,\ldots, M\})$.
\item Mean for atoms: $\tau_{ji} \thicksim H_{z_j}$ for all $i=1,\ldots, k_j$.
\item Atoms of $G_j$: $\theta_{ji} \thicksim \mathcal{N}(\tau_{ji},\mathcal{I}_d)$ for all $i=1,\ldots, k_j$.
\item Weights of atoms $p_j \thicksim \text{Dirichlet}(\vec{1}_{k_j})$.
\item Let $G_j := \sum_{i=1}^{k_j} p_{ji}\delta_{\theta_{ji}}$.
\item Data means $\mu_i \thicksim G_j$ for all $i=1,\ldots, n_j$.
\item Observations $X_{j,i} \thicksim \mathcal{N}(\mu_i,\mathcal{I}_d)$.
\end{itemize}
For the case of non-constrained variances, the variance to generate atoms $\theta_{ji}$ of $G_{j}$ is set to be proportional to global cluster label $z_{j}$ assigned to $G_{j}$.\\

\noindent
\textbf{Multilevel Wasserstein means with sharing setting} The global clusters are generated as follows:
\begin{itemize}
\item Means for atoms $\mu_i := 5(i-1)$ for all $i=1,\ldots, M$.
\item Atoms of $H_i: \phi_{ij} \thicksim \mathcal{N}(\mu_i \vec{1}_d, \mathcal{I}_d)$ for all $j=1,\ldots, K_i$.
\item Weights of atoms $\pi_i \thicksim \text{Dirichlet}(\vec{1}_{K_i})$.
\item Let $H_i := \sum_{j=1}^{K_i} \pi_{ij}\delta_{\phi_{ij}}$.
\end{itemize}
For each shared atom $k=1,\ldots,K$:
\begin{itemize}
\item Pick cluster label $z_k \thicksim \text{Uniform}( \{1,\ldots ,M \})$.
\item Mean for atoms: $\tau_k \thicksim H_{z_k}$.
\item Atoms of $S_K$: $\theta_k \thicksim \mathcal{N}(\tau_k,\mathcal{I}_d)$.
\end{itemize}
For each group $j=1,\ldots,m$ generate local measures and data as follows:
\begin{itemize}
\item Pick cluster label $\tilde z_j \thicksim \text{Uniform}( \{1,\ldots ,M \})$.
\item Select shared atoms $s_j = \{k:z_k=\tilde z_j\}$.
\item Weights of atoms $p_{s_j} \thicksim \text{Dirichlet}(\vec{1}_{|s_j|})$;
\quad
$G_j := \sum_{i \in s_j} p_{i}\delta_{\theta_{i}}$.
\item Data means $\mu_i \thicksim G_j$ for all $i=1,\ldots,n_j$.
\item Observations $X_{j,i} \thicksim \mathcal{N}(\mu_i,\mathcal{I}_d)$.
\end{itemize}
For the case of non-constrained variances, the variance to generate atoms $\theta_{i}$ of $G_{j}$ where $i \in s_{j}$ is set to be proportional to global cluster label $\tilde z_{j}$ assigned to $G_{j}$.\\

\noindent
\textbf{Three-stage K-means}
First, we estimate $G_j$ for each group $1 \leq j \leq m$ by using K-means algorithm with $k_{j}$ clusters. Then, we cluster labels using the K-means algorithm with $M$ clusters based on the collection of all atoms of $G_j$'s. Finally, we estimate the atoms of each $H_i$ via K-means algorithm with exactly $L$ clusters for each group of local atoms. Here, $L$ is some given threshold being used in Algorithm \ref{alg:multilevels_Wasserstein_means} in 
Section \ref{Section:multilevel_kmeans} in the main text to speed up the computation (see final remark regarding Algorithm \ref{alg:multilevels_Wasserstein_means} in Section \ref{Section:multilevel_kmeans}).  The three-stage K-means algorithm is summarized in Algorithm \ref{alg:three_stages_K_means}.
\setcounter{algorithm}{4}
\begin{algorithm}
   \caption{Three-stage K-means}
   \label{alg:three_stages_K_means}
\begin{algorithmic}
   \STATE {\bfseries Input:} Data $X_{j,i}$, $k_{j}$, $M$, $L$.
   \STATE {\bfseries Output:} local measures $G_{j}$ and global elements $H_{i}$ of $\Hbold$.
   \STATE {\emph{Stage 1}}
   \FOR{$j=1$ {\bfseries to} $m$}
   \STATE $G_{j} \leftarrow$ $k_{j}$ clusters of group j with K-means (atoms as centroids and weights as label frequencies).
   \ENDFOR
   	\STATE $\mathcal{C} \leftarrow$ collection of all atoms of $G_{j}$.
   \STATE {\emph{Stage 2}}
   	\STATE $\left\{D_{1},\ldots,D_{M}\right\} \leftarrow$ $M$ clusters from K-means on $\mathcal{C}$.
   	\STATE {\emph{Stage 3}}
   \FOR{$i=1$ {\bfseries to} $M$}
   \STATE $H_i \leftarrow$ $L$ clusters of $D_i$ with K-means (atoms as centroids and weights as label frequencies).
   \ENDFOR 
\end{algorithmic}
\end{algorithm}

\comment{\noindent
\textbf{Three-stage K-means with context}}

\section{Computational aspects of Wasserstein barycenter under $W_{1}$ metric}
\label{sec:appendix_e}
In this appendix, we provide a fast and efficient algorithm to compute the Wasserstein barycenter under $W_{1}$ metric. In particular, we focus on the setup when $Q_{1},\ldots, Q_{N} \in \mathcal{P}_{1}(\Theta)$ for $N \geq 1$ are finite discrete measures and we would like to determine the local Wasserstein barycenter of \eqref{eqn:Wasserstein_barycenter_first_order} within the space $\mathcal{O}_{k}(\Theta)$ for some given $k \geq 1$. 
\paragraph{Weighted geometric median:} Let $X_{1},\ldots,X_{m} \in \mathbb{R}^{d}$ be $m$ distinct points and $\eta_{1},\ldots,\eta_{m}$ be $m$ positive numbers. The weighted geometric median $X^{*} \in \mathbb{R}^{d}$ is the optimal solution of the following convex optimization problem
\begin{eqnarray}
\min \limits_{X \in \mathbb{R}^{d}}{\sum \limits_{i=1}^{m}{\eta_{i}\|X_{i}-X\|}}. \nonumber
\end{eqnarray}
To the best of our knowledge, no explicit formula for $X^{*}$ is available, therefore, we will utilize an iterative procedure to calculate an approximation for $X^{*}$. The most common approach for such procedure is Weiszfeld's algorithm \citep{Weiszfeld-1937}; however, this approach has been shown to be unstable when the update is identical to one of the given points $X_{i}$ for some $1 \leq i \leq m$. To account for this instability of Weiszfeld's algorithm,~\cite{Vardi-2000} introduces a solution for the setting when the update falls to the set of given points. In particular, their iterative algorithm can be summarized as follows
\begin{eqnarray}
X^{(i+1)} = \biggr(1 - \dfrac{\eta(X^{(i)})}{r(X^{(i)})}\biggr)^{+}\widetilde{T}(X^{(i)})+\min \left\{1,\dfrac{\eta(X^{(i)})}{r(X^{(i)})}\right\} X^{(i)}, \nonumber
\end{eqnarray}
where $\eta(x) = \begin{cases} \eta_{k} \ \text{if} \ x = X_{k}, \ k=1,\ldots,m \\ 0 \ \text{otherwise} \end{cases}$, $r(x) = \|\widetilde{R}(x)\|$ with $\widetilde{R}(x) = \sum \limits_{X_{i} \neq x}{\eta_{i}\dfrac{X_{i}-x}{\|X_{i}-x\|}}$, and $\widetilde{T}(x) = \left\{\sum \limits_{X_{i} \neq x} \dfrac{\eta_{i}}{\|X_{i}-x\|}\right\}^{-1}\dfrac{\eta_{i}X_{i}}{\|X_{i}-x\|}$ for all $x \in \mathbb{R}^{d}$. Here, we take the convention that $0/0=0$. For the convenience of argument later, we refer to this algorithm as the VZ algorithm. As being shown in \citep{Vardi-2000}, the VZ algorithm converges quickly to the global minimum of weighted geometric median problem. Due to its simplicity and efficiency in terms of computation, we will use the VZ algorithm for the updates of Wasserstein barycenter under $W_{1}$ metric.
\paragraph{Wasserstein barycenter under the entropic version of $W_{1}$ distance:}
For the simplicity of the paper, we only focus on determining Wasserstein barycenter under the entropic $W_{1}$ over the set of discrete probability measures with at most $k \geq 1$ components, i.e., we develop an efficient algorithm to estimate the optimal solution of the following optimization problem
\begin{eqnarray}
\overline{Q}_{N,\lambda}=\mathop {\arg \min}\limits_{Q \in \Ocal_{k}(\Theta)}{\sum \limits_{i=1}^{N}{\lambda_{i} \hat{W}_{1}(Q,Q_{i})}}, \label{eqn:Wasserstein_barycenter_first_order_copy}
\end{eqnarray} 
where $Q_{i} \in \Ocal_{k_{i}}(\Theta)$ for given $k_{i} \geq 1$ as $1 \leq i \leq N$ and $\lambda \in \Delta_{N}$ denotes weights associated with $Q_{1},\ldots, Q_{N}$.
\paragraph{Algorithm for Wasserstein barycenter under the entropic version of $W_{1}$ distance:}
The algorithm for determining Wasserstein barycenter of equation \eqref{eqn:Wasserstein_barycenter_first_order_copy} will follow those in \citep{Cuturi-2014} with the only modification regarding updating the atoms of $\overline{Q}_{N,\lambda}$ in terms of VZ algorithm for the geometric median. We summarize that algorithm in Algorithm~\ref{alg:Wasserstein_barycenter_W1_metric}.
\setcounter{algorithm}{5}
\begin{algorithm}
   \caption{Wasserstein barycenter under the entropic version of $W_{1}$ metric}
   \label{alg:Wasserstein_barycenter_W1_metric}
\begin{algorithmic}
   \STATE {\bfseries Input:} Atoms $Y_{i} \in \mathbb{R}^{d \times k_{i}}$ of $Q_{i}$, weights $b_{i}$ of $Q_{i}$, and $\lambda \in \Delta_{N}$.
   \STATE {\bfseries Output:} Atoms $X \in \mathbb{R}^{d \times k}$ and weights $a$ of $\overline{Q}_{N,\lambda}$.
   \STATE Initialize atoms $X^{(0)}$, weights $a^{(0)}$ of $\overline{Q}_{N,\lambda}^{(0)}$, and $t=0$.
    \WHILE{$X^{(t)}$ and $a^{(t)}$ have not converged}
   \STATE Update $a^{(t)}$ using Algorithm 1 in \citep{Cuturi-2014}.
   \FOR{$i=1$ {\bfseries to} $N$}
   \STATE $T^{i} \leftarrow$ optimal coupling of $\overline{Q}_{N,\lambda}^{(t)}$, $Q_{i}$.
   \ENDFOR
   \FOR{$i=1$ {\bfseries to} $k$}
   \STATE $\widetilde{X}_{i}^{(t)} \leftarrow \mathop {\arg \min} \limits_{X_{i} \in \mathbb{R}^{d}} \sum \limits_{j=1}^{N} \lambda_{j} \sum \limits_{u=1}^{k_{j}} T^{j}_{i,u}\|X_{i} - Y_{j,u}\|$. 
   \ENDFOR
   \STATE $\widetilde{X}^{(t)} \leftarrow [\widetilde{X}_{1}^{(t)},\ldots,\widetilde{X}_{k}^{(t)}]$.
   \STATE $X^{(t)} \leftarrow (1-\theta)X^{(t)}+\theta\widetilde{X}^{(t)}$ where $\theta \in [0,1]$ is a line-search or preset value.
   \STATE $t \leftarrow t+1$.
   \ENDWHILE
\end{algorithmic}
\end{algorithm}
\section{Consistency of estimators from Multilevel Wasserstein Geometric Median (MWGM) method}
\label{sec:appendix_f}
In this appendix, we provide consistency of the objective function and the estimators of the MWGM method. To simplify the presentation, we also fix $m$ and assume that $P^{j}$ is the true distribution of data $X_{j,i}$ for $j = 1,\ldots,m$. Similar to Section~\ref{Section:consistency_multilevel_Kmeans}, we denote $\vec{G}=(G_{1},\ldots,G_{m})$ and $\vec{n}=(n_{1},\ldots,n_{m})$. We say that $\vec{n} \to \infty$ if $n_{j} \to \infty$ for $j=1,\ldots, m$. Define the following functions associated with the MWGM method and its population version:
\begin{eqnarray}
g_{\vec{n}}(\vec{G},\Hcal)=\mathop {\sum }\limits_{j=1}^{m}{W_{1}(G_{j},P_{n_{j}}^{j})}+ \lambda W_{1}(\Hcal,\dfrac{1}{m}\mathop {\sum }\limits_{j=1}^{m}{\delta_{G_{j}}}), \nonumber \\
g(\vec{G},\Hcal)=\mathop {\sum }\limits_{j=1}^{m}{W_{1}(G_{j},P^{j})}+\lambda W_{1}(\Hcal,\dfrac{1}{m}\mathop {\sum }\limits_{j=1}^{m}{\delta_{G_{j}}}), \nonumber
\end{eqnarray}
where $G_{j} \in \mathcal{O}_{k_{j}}(\Theta)$, $\Hcal \in \mathcal{E}_{M}(\mathcal{P}_{1}(\Theta))$ for $1 \leq j \leq m$. 
We first establish the first consistency property of the MWGM method.
\begin{thm} \label{theorem:objective_consistency_multilevel_Wasserstein_median} 
Assume that  $P^{j} \in \mathcal{P}_{1}(\Theta)$ for $1 \leq j \leq m$. Then, as $\vec{n} \to \infty$
\begin{eqnarray}
\mathop {\inf }\limits_{\substack {G_{j} \in \mathcal{O}_{k_{j}}(\Theta), \\ \Hcal \in \mathcal{E}_{M}(\mathcal{P}_{1}(\Theta))}}g_{\vec{n}}(\vec{G},\Hcal) - \mathop {\inf }\limits_{\substack {G_{j} \in \mathcal{O}_{k_{j}}(\Theta), \\ \Hcal \in \mathcal{E}_{M}(\mathcal{P}_{1}(\Theta))}}g(\vec{G},\Hcal) \rightarrow 0 \nonumber
\end{eqnarray}
almost surely.
\end{thm}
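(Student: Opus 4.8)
The plan is to follow the strategy of the proof of Theorem~\ref{theorem:objective_consistency_multilevel_Wasserstein_means}(i), but the argument is in fact simpler here: since the objectives $g_{\vec{n}}$ and $g$ are \emph{linear} in the Wasserstein distances (rather than quadratic as in the $W_2$ case) and $W_1$ is itself a genuine metric, no square-root reparametrization is needed. Write $\ell_{\vec{n}} := \mathop{\inf}\nolimits_{G_j \in \mathcal{O}_{k_j}(\Theta), \, \Hcal \in \mathcal{E}_M(\mathcal{P}_1(\Theta))} g_{\vec{n}}(\vec{G},\Hcal)$ and $\ell_0 := \mathop{\inf}\nolimits_{G_j \in \mathcal{O}_{k_j}(\Theta), \, \Hcal \in \mathcal{E}_M(\mathcal{P}_1(\Theta))} g(\vec{G},\Hcal)$, so that the goal is to show $\ell_{\vec{n}} - \ell_0 \to 0$ almost surely.

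The key step is a uniform approximation bound. First I would observe that the global clustering term $\lambda W_1\bigl(\Hcal, \tfrac{1}{m}\sum_{j=1}^m \delta_{G_j}\bigr)$ is \emph{identical} in $g_{\vec{n}}$ and $g$, so the two functionals differ only through the local fidelity terms. Then, for each fixed $(\vec{G},\Hcal)$, the triangle inequality for the $W_1$ metric gives $|W_1(G_j,P_{n_j}^j) - W_1(G_j,P^j)| \leq W_1(P_{n_j}^j,P^j)$ for every $j$, whence
\begin{eqnarray}
\bigl| g_{\vec{n}}(\vec{G},\Hcal) - g(\vec{G},\Hcal) \bigr| \leq \sum_{j=1}^m W_1(P_{n_j}^j,P^j), \nonumber
\end{eqnarray}
a bound that is uniform over all $(\vec{G},\Hcal)$. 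Since $|\inf_x a(x) - \inf_x b(x)| \leq \sup_x |a(x)-b(x)|$ whenever the right-hand side is finite, this uniform estimate immediately yields $|\ell_{\vec{n}} - \ell_0| \leq \sum_{j=1}^m W_1(P_{n_j}^j,P^j)$.

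To finish I would invoke the almost sure convergence of the empirical measure in the first-order Wasserstein distance, namely $W_1(P_{n_j}^j,P^j) \to 0$ as $n_j \to \infty$ for each $j$ (e.g.\ Theorem~6.9 in \citep{Villani-2009}), and then sum over the finitely many groups $j = 1,\ldots,m$. The one point that deserves care, and which is where the hypothesis is used, is that this convergence requires only the first-moment condition $P^j \in \mathcal{P}_1(\Theta)$ and in particular does \emph{not} require $\Theta$ to be bounded, exactly paralleling the role of $P^j \in \mathcal{P}_2(\Theta)$ in Theorem~\ref{theorem:objective_consistency_multilevel_Wasserstein_means}(i). There is no substantive obstacle beyond this; the structural simplification relative to the $W_2^2$ case (metric versus squared-metric) is precisely what removes the need for the $f^{1/2}$ manipulation used there, and the conclusion $\ell_{\vec{n}} - \ell_0 \to 0$ almost surely follows.
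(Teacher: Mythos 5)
Your proposal is correct and follows essentially the same route as the paper's proof: the core estimate in both is the triangle-inequality bound $|g_{\vec{n}}(\vec{G},\Hcal) - g(\vec{G},\Hcal)| \leq \sum_{j=1}^m W_1(P_{n_j}^j,P^j)$ (the global terms cancelling), combined with almost sure convergence of the empirical measures in $W_1$. The only cosmetic difference is that you invoke the standard fact $|\inf a - \inf b| \leq \sup|a-b|$ directly, whereas the paper re-derives it inline via an $\epsilon$-near-optimizer argument in both directions.
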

\begin{proof}
The proof of Theorem~\ref{theorem:objective_consistency_multilevel_Wasserstein_median} follows the same proof argument as that of Theorem~\ref{theorem:objective_consistency_multilevel_Wasserstein_means}. Here, we provide the proof for the completeness. We denote
\begin{eqnarray}
L_{\vec{n}}=\mathop {\inf }\limits_{\substack {G_{j} \in \mathcal{O}_{k_{j}}(\Theta), \\ \Hcal \in \mathcal{E}_{M}(\mathcal{P}_{1}(\Theta))}}g_{\vec{n}}(\vec{G},\Hcal) \text{ and }
L_{0}=\mathop {\inf }\limits_{\substack {G_{j} \in \mathcal{O}_{k_{j}}(\Theta), \\ \Hcal \in \mathcal{E}_{M}(\mathcal{P}_{1}(\Theta))}}g(\vec{G},\Hcal). \nonumber
\end{eqnarray}
For any $\epsilon>0$, from the definition of $L_{0}$, we can find $G_{j} \in \mathcal{O}_{k_{j}}(\Theta)$ and $\Hcal \in \mathcal{E}_{M}(\mathcal{P}(\Theta))$ such that
$g(\vec{G},\Hcal) \leq L_{0} + \epsilon$. An application of triangle inequality leads to
\begin{eqnarray*}
L_{\vec{n}} - L_{0} & \leq & L_{n} - g(\vec{G},\Hcal) + \epsilon
\leq g_{\vec{n}}(\vec{G},\Hcal) - g(\vec{G},\Hcal) + \epsilon \nonumber \\
& \leq & \sum \limits_{j=1}^{m}{W_{1}(P_{n_{j}}^{j},P^{j})}+\epsilon.
\end{eqnarray*} 
By reversing the direction, we also obtain the inequality $L_{n} - L_{0} \geq \sum \limits_{j=1}^{m}{W_{1}(P_{n_{j}}^{j},P^{j})}-\epsilon$. Putting the two inequalities together, we have
\begin{align*}
|L_{n} - L_{0} - \sum \limits_{j=1}^{m}{W_{1}(P_{n_{j}}^{j},P^{j})}| \leq \epsilon
\end{align*}
for any $\epsilon>0$. According to the hypothesis, $P^{j} \in \mathcal{P}_{1}(\Theta)$ for all $1 \leq j \leq m$. Therefore, we obtain that $W_{1}(P_{n_{j}}^{j},P^{j}) \to 0$ almost surely as $n_{j} \to \infty$. As a consequence, we obtain the conclusion of the theorem.
\end{proof}
Our next result establishes that the consistency of the estimators from the MWGM method to those in the population version of MWGM.  To ease the presentation, we assume that for each $\vec{n}$ there is an optimal solution $
(\widehat{G}_{1}^{n_{1}},\ldots,\widehat{G}_{m}^{n_{m}},\widehat{\Hcal}^{\vec{n}})$ (or in 
short $(\vec{\widehat{G}}^{\vec{n}},\Hcal^{\vec{n}})$) of the MWGM objective function 
\eqref{eqn:robust_multilevel_Wasserstein_median_typeone}. Furthermore, we can find
optimal solution minimizing $f(\vec{G},\Hcal)$ over $G_{j} \in \mathcal{O}_{k_{j}}(\Theta)$ and $\Hcal \in 
\mathcal{E}_{M}(\mathcal{P}_{1}(\Theta))$. We denote $\mathcal{F}$ the collection 
of these optimal solutions. For any $G_{j} \in \mathcal{O}_{k_{j}}(\Theta)$ and $\Hcal \in 
\mathcal{E}_{M}(\mathcal{P}_{1}(\Theta))$, we define
\vspace{-6pt}
\begin{eqnarray}
\bar{d}(\vec{G},\Hcal,\mathcal{F})=\inf \limits_{(\vec{G}^{0}, \Hcal^{0}) \in \mathcal{F}}\sum \limits_{j=1}^{m}{W_{1}(G_{j},G_{j}^{0})} \nonumber
+W_{1}(\Hcal,\Hcal^{0}). \nonumber
\end{eqnarray}
Given the above assumptions and the consistency of the objective function of the MWGM method, we have the following result regarding the convergence of $(\widehat{\vec{G}}^{\vec{n}},\Hcal^{\vec{n}})$.
\begin{thm} \label{theorem:convergence_measures_multilevel_Wasserstein_median}
Assume that $\Theta$ is bounded and $P^{j} \in \mathcal{P}_{1}(\Theta)$ for all $1 \leq j 
\leq m$. Then, we have $\bar{d}(\vec{\widehat{G}}^{\vec{n}},\widehat{\Hcal}
^{\vec{n}},\mathcal{F}) \to 0$ as $\vec{n} \to \infty$ almost surely.
\end{thm}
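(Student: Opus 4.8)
The plan is to mirror the proof of Theorem~\ref{theorem:convergence_measures_multilevel_Wasserstein_means}, replacing $W_2$ by $W_1$ throughout and invoking the already-established objective consistency of Theorem~\ref{theorem:objective_consistency_multilevel_Wasserstein_median} in place of Theorem~\ref{theorem:objective_consistency_multilevel_Wasserstein_means}. Fix $\epsilon > 0$ and set
\[
\mathcal{A}(\epsilon) = \left\{ (\vec{G}, \Hcal) : G_j \in \mathcal{O}_{k_j}(\Theta), \ \Hcal \in \mathcal{E}_M(\mathcal{P}_1(\Theta)), \ \bar{d}(\vec{G}, \Hcal, \mathcal{F}) \geq \epsilon \right\}.
\]
First I would check that $\mathcal{A}(\epsilon)$ is compact: since $\Theta$ is bounded, both $\mathcal{O}_{k_j}(\Theta)$ and $\mathcal{E}_M(\mathcal{P}_1(\Theta))$ are compact in the $W_1$ topology (on a bounded domain $W_1$ metrizes weak convergence and the spaces $\mathcal{P}_1(\Theta)$, $\mathcal{P}_2(\Theta)$, $\mathcal{P}(\Theta)$ coincide), while the constraint $\bar{d}(\cdot, \cdot, \mathcal{F}) \geq \epsilon$ is closed, so $\mathcal{A}(\epsilon)$ is a closed subset of a compact product space and hence compact.

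Next I would extract a strict gap from the definition of $\mathcal{F}$. Every $(\vec{G}, \Hcal) \in \mathcal{A}(\epsilon)$ lies at distance at least $\epsilon$ from $\mathcal{F}$, hence is not a global minimizer of $g$, so $g(\vec{G}, \Hcal) > g(\vec{G}^0, \Hcal^0)$ for any $(\vec{G}^0, \Hcal^0) \in \mathcal{F}$. Because $g$ is continuous in the product $W_1$ topology and $\mathcal{A}(\epsilon)$ is compact, the infimum of $g$ over $\mathcal{A}(\epsilon)$ is attained, yielding a strictly positive gap $\delta := \inf_{(\vec{G}, \Hcal) \in \mathcal{A}(\epsilon)} g(\vec{G}, \Hcal) - g(\vec{G}^0, \Hcal^0) > 0$.

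I would then transfer the empirical objective value at the estimator to the population one. The functions $g_{\vec{n}}$ and $g$ share an identical global term and differ only through $W_1(G_j, P_{n_j}^j)$ versus $W_1(G_j, P^j)$; by the triangle inequality $|W_1(G_j, P_{n_j}^j) - W_1(G_j, P^j)| \leq W_1(P_{n_j}^j, P^j)$ uniformly in $G_j$, and $W_1(P_{n_j}^j, P^j) \to 0$ almost surely (the same convergence used in Theorem~\ref{theorem:objective_consistency_multilevel_Wasserstein_median}). Hence $\sup |g_{\vec{n}} - g| \to 0$ almost surely. Evaluating at the MWGM optimizer $(\widehat{\vec{G}}^{\vec{n}}, \widehat{\Hcal}^{\vec{n}})$, which attains $\inf g_{\vec{n}}$, and combining with the objective consistency $\inf g_{\vec{n}} \to \inf g = g(\vec{G}^0, \Hcal^0)$, I obtain $g(\widehat{\vec{G}}^{\vec{n}}, \widehat{\Hcal}^{\vec{n}}) \to g(\vec{G}^0, \Hcal^0)$ almost surely.

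The conclusion then follows from the gap $\delta$: once $\vec{n}$ is large enough that $g(\widehat{\vec{G}}^{\vec{n}}, \widehat{\Hcal}^{\vec{n}}) < g(\vec{G}^0, \Hcal^0) + \delta$, the estimator cannot belong to $\mathcal{A}(\epsilon)$, which forces $\bar{d}(\widehat{\vec{G}}^{\vec{n}}, \widehat{\Hcal}^{\vec{n}}, \mathcal{F}) < \epsilon$; since $\epsilon$ was arbitrary, this gives the claimed almost-sure convergence. I expect the main obstacle to be the strict-separation step: one must ensure that $g$ is genuinely continuous (not merely lower semicontinuous) on the compact feasible set so that its infimum over $\mathcal{A}(\epsilon)$ is attained \emph{strictly} above the minimum value. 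On a bounded $\Theta$ this is clean, because $W_1$ metrizes weak convergence there and the at-most-$k_j$-support and exactly-$M$-support constraints keep $\mathcal{O}_{k_j}(\Theta)$ and $\mathcal{E}_M(\mathcal{P}_1(\Theta))$ compact.
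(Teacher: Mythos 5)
Your proposal is correct and follows essentially the same route as the paper: the paper omits this proof precisely because it is "a direct combination" of the argument for Theorem~\ref{theorem:convergence_measures_multilevel_Wasserstein_means} (compactness of $\mathcal{A}(\epsilon)$, the strict gap over $\mathcal{A}(\epsilon)$ from the definition of $\mathcal{F}$, and transfer of the empirical objective value to the population one) with the objective consistency of Theorem~\ref{theorem:objective_consistency_multilevel_Wasserstein_median}, which is exactly what you do with $W_1$ in place of $W_2$. Your added remarks on continuity of $g$ and on $W_1$ metrizing weak convergence on the bounded $\Theta$ only make explicit what the paper leaves implicit.
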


The proof of Theorem~\ref{theorem:convergence_measures_multilevel_Wasserstein_median} is a direct combination of the proof argument of Theorem~\ref{theorem:convergence_measures_multilevel_Wasserstein_means} and the consistency of objective function of MWGM method in Theorem~\ref{theorem:objective_consistency_multilevel_Wasserstein_median}; therefore, it is omitted.

\section{Algorithms for computing the entropic Wasserstein barycenters} \label{sec:appendix_g}

\begin{algorithm}[H]
   \caption{Smoothed Primal $T^*_\gamma$ and Dual $b^*_\gamma$ Optima}
   \label{alg:T_and_b}
\begin{algorithmic}
   \STATE {\bfseries Input:} $M=M_{XY}$,$\gamma,a,b$.
   \STATE {\bfseries Output:} $T^*_\gamma$ and $b^*_\gamma$.
   \STATE $K=\exp(-\gamma M)$
   \STATE $\tilde{K}=\text{diag}(b^{-1})K$
    \STATE Initialize $u=\text{ones}(k,1)/k$
    \WHILE{$u$ changed}
  \STATE  $u=1./(\tilde{K}(b./(K^\top u)))$
   \ENDWHILE
   \STATE $v=b./(K^\top u)$.
     \STATE $b^*_\gamma=\frac{1}{\gamma}\log u+\frac{\log{u}^\top\mathbf{1}_k}{\gamma k}\mathbf{1}_k$.
       \STATE $T^*_\gamma=\text{diag}(u)K\text{diag}(v)$.
    \RETURN $T^*_\gamma, b^*_\gamma$
\end{algorithmic}
\end{algorithm}
In this appendix, we include three algorithms for computing entropic Wasserstein barycenter in \citep{Cuturi-2014} to make our manuscript self-contained. Before describing the details of the algorithms, we summarize the notations as follows. Let $X_{1},\ldots,X_{n} \in \mathbb{R}^{d}$ (resp., $Y_{1},\ldots,Y_{k} \in \mathbb{R}^{d}$) be $n$ (resp., $k$) atom points of discrete measure $P=\sum \limits_{i=1}^{n}a_i\delta_{X_i}$ (resp., $Q=\sum \limits_{i=1}^{k}b_i\delta_{Y_i}$) where $a=(a_1,\ldots,a_n)\in \Delta_n$  (resp., $b=(b_1,\ldots,b_k)\in \Delta_k$). The matrix $M_{XY}$ of pairwise distances between elements of $X_i$ and $Y_j$ raised to the power $p$, i.e. $M_{XY} \in \mathbb{R}^{n \times k}$. The transportation polytope is denoted as $U(a, b)=\{\mathbb{R}_{+}^{n \times k}\mid T\mathbf{1}_k=a,T^\top\mathbf{1}_n=b\}.$ The Wasserstein distance raised to power $2$ now reads
\vspace{-2pt}
\begin{eqnarray}
W^2_{2}(P,Q)=\min_{T\in U(a, b)} \trace{(T^\top M_{XY})}, \nonumber
\end{eqnarray} which  has its entropic relaxed formula
\vspace{-2pt}
\begin{eqnarray}
\hat{W}^2_{2}(P,Q)=\min_{T\in U(a, b)} \trace{(T^\top M_{XY})}+\gamma\sum\limits_{i,j=1}^{n,k} T_{ij}\log T_{ij}. \label{eqn:relaxed_discrete_W}
\end{eqnarray}
The Wasserstein barycenter up to $k$ support points of $P_j$, i.e. $P_j=\sum \limits_{i=1}^{m} a_{ji}\delta_{X_{ji}}$, for $1\leq j\leq N$ with weights of $\omega=(\omega_1,\ldots,\omega_N)\in \Delta_N$ is the minimizer of the following problem
\vspace{-6pt}
\begin{eqnarray}
Q=\min_{b,\boldsymbol{Y}}\sum \limits_{j=1}^{N}\omega_j\trace{(T^\top M_{X_{j}Y})}, \label{eqn:discrete_barycenter}
\end{eqnarray} where $b\in \Delta_k$ and $\boldsymbol{Y}={Y_1,\ldots, Y_k}$.

Algorithm \ref{alg:T_and_b} is designed to compute the  optimal transport plan $T^*_\gamma$ and dual optima $b^*_\gamma$ (aka sub-gradient) of relaxed Wasserstein distance with respect to $b$ in equation~\eqref{eqn:relaxed_discrete_W}. Algorithm \ref{alg:fix_support_WB_Cuturi} describes routines for computing the weights of the barycenter in \eqref{eqn:discrete_barycenter}. The procedure for computing free support barycenter in \eqref{eqn:discrete_barycenter} is summarized in Algorithm \ref{alg:free_support_WB_Cuturi}.

\begin{algorithm}
   \caption{Fix-support Wasserstein barycenter}
   \label{alg:fix_support_WB_Cuturi}
\begin{algorithmic}
   \STATE {\bfseries Input:} Atoms $X_j \in \mathbb{R}^{d \times n_j}$ and weights $a_{j}$ of $P_{j}$, cost matrices $M_j=M_{X_{j}Y}$ and $\omega \in \Delta_{N}$, $\gamma, t_0>0$.
   \STATE {\bfseries Output:} Weights $b$ of $Q_{N}$.
   \STATE Initialize  $\tilde{b}$ and set $\hat{b}=\tilde{b}$.
   \STATE Set $t=0$.
    \WHILE{$\hat{b}$ has not converged}
   \STATE $\beta=(t+1)/2$, $b=(1-\beta^{-1})\hat{b}+\beta^{-1}\tilde{b}$.
   \STATE Compute sub-gradient $\alpha_j$s using Algorithm \ref{alg:T_and_b} using $M_{j},b$ and $a_j$
   \STATE $\alpha=\sum\limits_{j=1}^{N}\omega_j \alpha_j$
    \STATE $\tilde{b}=\tilde{b}*\exp(-t_0\beta\alpha)$,  $\tilde{b}=\tilde{b}/\tilde{b}^\top\mathbf{1}_k$
      \STATE $\hat{b}=(1-\beta^{-1})\hat{b}+\beta^{-1}\tilde{b}$
   \ENDWHILE
   \RETURN $\hat{b}$
\end{algorithmic}
\end{algorithm}

\begin{algorithm}[t]
   \caption{Free-support Wasserstein barycenter}
   \label{alg:free_support_WB_Cuturi}
\begin{algorithmic}
   \STATE {\bfseries Input:} Atoms $X_j \in \mathbb{R}^{d \times n_j}$ and weights $a_{j}$ of $P_{j}$, cost matrices $M_j=M_{X_{j}Y}$ and $\omega \in \Delta_{N}$, $\gamma, t_0>0$, $k$ -- the number of support points of barycenter.
   \STATE {\bfseries Output:} Atoms $Y\in \mathbb{R}^{d \times k}$ and weights $b$ of $Q_{N}$.
   \STATE Initialize  $Y$ and $b$.
   \STATE Set $t=0$.
    \WHILE{$Y$ and $b$ have not converged}
    \STATE Computing $b$ using Algorithm \ref{alg:fix_support_WB_Cuturi}
    \FOR{$j=1$ {\bfseries to} $N$}
   \STATE Computing $T_j$ using Algorithm \ref{alg:T_and_b}
   \ENDFOR
   \STATE Setting $\theta \in [0,1]$ with line search or a preset value.
   \STATE $X=(1-\theta)X+\theta(\sum\limits_{j=1}^{N}\omega_j X_j T^{\top}_j)\text{diag}(b^{-1})$
   \ENDWHILE
   \RETURN $X,b$
\end{algorithmic}
\end{algorithm}

\vskip 0.2in
\bibliographystyle{unsrtnat}
\bibliography{Nhat,NPB,Nguyen,MY_ref}

\begin{thebibliography}{39}
\providecommand{\natexlab}[1]{#1}
\providecommand{\url}[1]{\texttt{#1}}
\expandafter\ifx\csname urlstyle\endcsname\relax
  \providecommand{\doi}[1]{doi: #1}\else
  \providecommand{\doi}{doi: \begingroup \urlstyle{rm}\Url}\fi

\bibitem[Oliva and Torralba(2001)]{Oliva-2001}
A.~Oliva and A.~Torralba.
\newblock Modeling the shape of the scene: A holistic representation of the
  spatial envelope.
\newblock \emph{International Journal of Computer Vision}, 42:\penalty0
  145--175, 2001.

\bibitem[Blei et~al.(2003)Blei, Ng, and Jordan]{Blei-etal-03}
D.M. Blei, A.Y. Ng, and M.I. Jordan.
\newblock Latent {D}irichlet allocation.
\newblock \emph{J. Mach. Learn. Res}, 3:\penalty0 993--1022, 2003.

\bibitem[Pritchard et~al.(2000)Pritchard, Stephens, and
  Donnelly]{Pritchard-etal-00}
J.~Pritchard, M.~Stephens, and P.~Donnelly.
\newblock Inference of population structure using multilocus genotype data.
\newblock \emph{Genetics}, 155:\penalty0 945--959, 2000.

\bibitem[Teh et~al.(2006)Teh, Jordan, Beal, and Blei]{Teh-etal-06}
Y.~W. Teh, M.~I. Jordan, M.~J. Beal, and D.~M. Blei.
\newblock Hierarchical {D}irichlet processes.
\newblock \emph{J. Amer. Statist. Assoc.}, 101:\penalty0 1566--1581, 2006.

\bibitem[Rodriguez et~al.(2008)Rodriguez, Dunson, and
  Gelfand]{Rodriguez-etal-08}
A.~Rodriguez, D.~Dunson, and A.E. Gelfand.
\newblock The nested {D}irichlet process.
\newblock \emph{J. Amer. Statist. Assoc.}, 103\penalty0 (483):\penalty0
  1131--1154, 2008.

\bibitem[Wulsin et~al.(2016)Wulsin, Jensen, and Litt]{Wulsin-2016}
D.~F. Wulsin, S.~T. Jensen, and B.~Litt.
\newblock Nonparametric multi-level clustering of human epilepsy seizures.
\newblock \emph{Annals of Applied Statistics}, 10:\penalty0 667--689, 2016.

\bibitem[Nguyen et~al.(2014)Nguyen, Phung, Nguyen, Venkatesh, and Bui]{Vu-2014}
V.~Nguyen, D.~Phung, X.~Nguyen, S.~Venkatesh, and H.~Bui.
\newblock Bayesian nonparametric multilevel clustering with group-level
  contexts.
\newblock \emph{Proceedings of the 31st International Conference on Machine
  Learning}, 2014.

\bibitem[Huynh et~al.(2016)Huynh, Phung, Venkatesh, Nguyen, Hoffman, and
  Bui]{Viet-2016}
V.~Huynh, D.~Phung, S.~Venkatesh, X.~Nguyen, M.~Hoffman, and H.~Bui.
\newblock Scalable nonparametric {B}ayesian multilevel clustering.
\newblock \emph{Proceedings of Uncertainty in Artificial Intelligence}, 2016.

\bibitem[Ho et~al.(2017)Ho, Nguyen, Yurochkin, Bui, Huynh, and
  Phung]{Ho_2017_Multilevel}
N.~Ho, X.~Nguyen, M.~Yurochkin, H.~Bui, V.~Huynh, and D.~Phung.
\newblock Multilevel clustering via {W}asserstein means.
\newblock \emph{Proceedings of the International Conference on Machine Learning
  (ICML)}, 2017.

\bibitem[Villani(2003)]{Villani-03}
C\'edric Villani.
\newblock \emph{Topics in Optimal Transportation}.
\newblock American Mathematical Society, 2003.

\bibitem[Nguyen(2013)]{Nguyen-13}
X.~Nguyen.
\newblock Convergence of latent mixing measures in finite and infinite mixture
  models.
\newblock \emph{Annals of Statistics}, 4\penalty0 (1):\penalty0 370--400, 2013.

\bibitem[Nguyen(2016)]{Nguyen-2016}
X.~Nguyen.
\newblock Borrowing strengh in hierarchical {B}ayes: Posterior concentration of
  the {D}irichlet base measure.
\newblock \emph{Bernoulli}, 22:\penalty0 1535--1571, 2016.

\bibitem[Pollard(1982)]{Pollard-1982}
D.~Pollard.
\newblock Quantization and the method of {K}-means.
\newblock \emph{IEEE Transactions on Information Theory}, 28:\penalty0
  199--205, 1982.

\bibitem[Graf and Luschgy(2000)]{Graf-2000}
S.~Graf and H.~Luschgy.
\newblock \emph{Foundations of Quantization for Probability Distributions}.
\newblock Springer-Verlag, New York, 2000.

\bibitem[Agueh and Carlier(2011)]{Carlier-2011}
M.~Agueh and G.~Carlier.
\newblock Barycenters in the {W}asserstein space.
\newblock \emph{SIAM Journal on Mathematical Analysis}, 43:\penalty0 904--924,
  2011.

\bibitem[Cuturi and Doucet(2014)]{Cuturi-2014}
M.~Cuturi and A.~Doucet.
\newblock Fast computation of {W}asserstein barycenters.
\newblock \emph{Proceedings of the 31st International Conference on Machine
  Learning}, 2014.

\bibitem[Lin et~al.(2020)Lin, Ho, Chen, Cuturi, and Jordan.]{Lin_2020}
T.~Lin, N.~Ho, X.~Chen, M.~Cuturi, and M.~I. Jordan.
\newblock Fixed-support {W}asserstein barycenters: computational hardness and
  fast algorithm.
\newblock In \emph{NeurIPS}, 2020.

\bibitem[Pele and Werman(2009)]{Pele-2009-Fast}
O.~Pele and M.~Werman.
\newblock Fast and robust earth mover’s distance.
\newblock In \emph{ICCV}. IEEE, 2009.

\bibitem[Cuturi(2013)]{Cuturi-2013}
M.~Cuturi.
\newblock Sinkhorn distances: lightspeed computation of optimal transport.
\newblock \emph{Advances in Neural Information Processing Systems 26}, 2013.

\bibitem[Dvurechensky et~al.(2018)Dvurechensky, Gasnikov, and
  Kroshnin]{Dvurechensky-2018-Computational}
P.~Dvurechensky, A.~Gasnikov, and A.~Kroshnin.
\newblock Computational optimal transport: complexity by accelerated gradient
  descent is better than by {S}inkhorn’s algorithm.
\newblock In \emph{ICML}, pages 1366--1375, 2018.

\bibitem[Lin et~al.(2019{\natexlab{a}})Lin, Ho, and Jordan]{Lin-2019-Efficient}
T.~Lin, N.~Ho, and M.~Jordan.
\newblock On efficient optimal transport: an analysis of greedy and accelerated
  mirror descent algorithms.
\newblock In \emph{ICML}, pages 3982--3991, 2019{\natexlab{a}}.

\bibitem[Lin et~al.(2019{\natexlab{b}})Lin, Ho, and
  Jordan]{Lin-2019-Acceleration}
T.~Lin, N.~Ho, and M.~I. Jordan.
\newblock On the efficiency of the {S}inkhorn and {G}reenkhorn algorithms and
  their acceleration for optimal transport.
\newblock \emph{ArXiv Preprint: 1906.01437}, 2019{\natexlab{b}}.

\bibitem[Benamou et~al.(2015)Benamou, Carlier, Cuturi, Nenna, and
  Payr{\'e}]{Benamou-15}
J.~D. Benamou, G.~Carlier, M.~Cuturi, L.~Nenna, and G.~Payr{\'e}.
\newblock Iterative {B}regman projections for regularized transportation
  problems.
\newblock \emph{SIAM Journal on Scientific Computing}, 2:\penalty0 1111--1138,
  2015.

\bibitem[Solomon et~al.(2015)Solomon, Fernando, Payr{\'e}, Cuturi, Butscher,
  Nguyen, Du, and Guibas]{Solomon-15}
J.~Solomon, G.~Fernando, G.~Payr{\'e}, M.~Cuturi, A.~Butscher, A.~Nguyen,
  T.~Du, and L.~Guibas.
\newblock Convolutional {W}asserstein distances: Efficient optimal
  transportation on geometric domains.
\newblock In \emph{The International Conference and Exhibition on Computer
  Graphics and Interactive Techniques}, 2015.

\bibitem[Álvarez Estebana et~al.(2016)Álvarez Estebana, del Barrioa,
  Cuesta-Albertosb, and Matr{\'a}n]{Alvarez-16}
P.~C. Álvarez Estebana, E.~del Barrioa, J.A. Cuesta-Albertosb, and
  C.~Matr{\'a}n.
\newblock A fixed-point approach to barycenters in {W}asserstein space.
\newblock \emph{Journal of Mathematical Analysis and Applications},
  441:\penalty0 744–762, 2016.

\bibitem[Anderes et~al.(2015)Anderes, Borgwardt, and Miller]{Anderes-2015}
E.~Anderes, S.~Borgwardt, and J.~Miller.
\newblock Discrete {W}asserstein barycenters: {O}ptimal transport for discrete
  data.
\newblock \emph{\url{http://arxiv.org/abs/1507.07218}}, 2015.

\bibitem[Li and Wang(2006)]{Li_2006}
J.~Li and J.~Z. Wang.
\newblock Real-time computerized annotation of pictures.
\newblock In \emph{Proceedings of the ACM Multimedia Conference}, pages
  911--920, 2006.

\bibitem[Kulis and Jordan(2012)]{Kulis-2012}
B.~Kulis and M.~I. Jordan.
\newblock Revisiting {K}-means: new algorithms via {B}ayesian nonparametrics.
\newblock \emph{Proceedings of the 29th International Conference on Machine
  Learning}, 2012.

\bibitem[Arthur and Vassilvitskii(2007)]{Arthur-2007}
D.~Arthur and S.~Vassilvitskii.
\newblock K-means++: The avantages of careful seeding.
\newblock \emph{ACM-SIAM Symposium on Discrete Algorithms}, 2007.

\bibitem[Pollard(1981)]{Pollard-1981}
D.~Pollard.
\newblock Strong consistency of k-means clustering.
\newblock \emph{Annals of Statistics}, 9:\penalty0 135--140, 1981.

\bibitem[Nguyen et~al.(2016)Nguyen, Nguyen, Venkatesh, and
  Phung]{nguyen_nguyen_venkatesh_phung_icpr16mcnc}
T.~B. Nguyen, V.~Nguyen, S.~Venkatesh, and D.~Phung.
\newblock {MCNC}: {M}ulti-channel nonparametric clustering from heterogeneous
  data.
\newblock In \emph{Proceedings of ICPR}, 2016.

\bibitem[Hubert and Arabie(1985)]{hubert1985comparing}
L.~Hubert and P.~Arabie.
\newblock Comparing partitions.
\newblock \emph{Journal of classification}, 2\penalty0 (1):\penalty0 193--218,
  1985.

\bibitem[Santambrogio(2015)]{Santambrogio-2015}
F.~Santambrogio.
\newblock \emph{Optimal Transport for Applied Mathematicians}.
\newblock Birkh\"{a}user Basel, 2015.

\bibitem[Villani(2009)]{Villani-2009}
C.~Villani.
\newblock \emph{Optimal Transport: Old and New. Grundlehren der Mathematischen
  Wissenschaften [Fundamental Principles of Mathemtical Sciences]}.
\newblock Springer, Berlin, 2009.

\bibitem[Fournier and Guillin(2015)]{Fournier_2015}
N.~Fournier and A.~Guillin.
\newblock On the rate of convergence in {W}asserstein distance of the empirical
  measure.
\newblock \emph{Probability Theory and Related Fields}, 162:\penalty0
  707–738, 2015.

\bibitem[Tang et~al.(2014)Tang, Meng, Nguyen, Mei, and
  Zhang]{tang2014understanding}
J.~Tang, Z.~Meng, X.~Nguyen, Q.~Mei, and M.~Zhang.
\newblock Understanding the limiting factors of topic modeling via posterior
  contraction analysis.
\newblock In \emph{Proceedings of The 31st International Conference on Machine
  Learning}, pages 190--198. ACM, 2014.

\bibitem[Nguyen(2015)]{Nguyen-2015}
X.~Nguyen.
\newblock Posterior contraction of the population polytope in finite admixture
  models.
\newblock \emph{Bernoulli}, 21:\penalty0 618--646, 2015.

\bibitem[Weiszfeld(1937)]{Weiszfeld-1937}
E.~Weiszfeld.
\newblock Sur le point pour lequel la somme des distances de n points donnes
  est minimum.
\newblock \emph{Tohoku Mathematical Journal}, 43:\penalty0 355--386, 1937.

\bibitem[Vardi and Zhang(2000)]{Vardi-2000}
Y.~Vardi and C.~H. Zhang.
\newblock The multivariate $\ell_{1}$-median and associated data depth.
\newblock \emph{Proceedings of the National Academy of Sciences}, 97:\penalty0
  1423--1426, 2000.

\end{thebibliography}
\end{document}